%% file: 00main.tex
\documentclass{article}

\usepackage[english]{babel}

\usepackage[letterpaper,top=2cm,bottom=2cm,left=3cm,right=3cm,marginparwidth=1.75cm]{geometry}
\usepackage{amsmath}
\usepackage{amssymb}
\usepackage{amsthm}
\usepackage{amsfonts}
\usepackage{algorithm}
\usepackage{algpseudocode}
\usepackage{array}
\usepackage{blindtext}
\usepackage{booktabs}
\usepackage{comment}
\usepackage{forest}
\usepackage{graphicx}
\graphicspath{{images/}}
\usepackage[colorlinks=true, allcolors=blue]{hyperref}

\usepackage{multirow}
\usepackage{listings}
\usepackage{longtable}
\usepackage{siunitx}
\usepackage{subfiles} 
\usepackage[backend=biber, style=numeric-comp, sorting=none]{biblatex}
\usepackage{tikz}
\usetikzlibrary{positioning}

\newtheorem{theorem}{Theorem}
\newtheorem{lemma}[theorem]{Lemma}

\newtheorem{corollary}[theorem]{Corollary}
\newtheorem{definition}[theorem]{Definition} 
\theoremstyle{remark}
\newtheorem{remark}[theorem]{Remark}

\definecolor{panelrule}{gray}{0.55}
\definecolor{filebg}{gray}{0.97}

\lstdefinelanguage{NeuralCertPseudo}{
  morekeywords={PROCEDURE,FUNCTION,INPUT,OUTPUT,RETURN,FOR,EACH,IN,IF,ELSE,
    WHILE,REPEAT,UNTIL,REQUIRE,REJECT,ACCEPT,STORE,WITH,END,TRUE,FALSE},
  sensitive=true,
  morecomment=[l]{//}
}

\lstdefinestyle{neuralcert}{
  language=NeuralCertPseudo,
  basicstyle=\ttfamily\footnotesize,
  keywordstyle=\bfseries,
  commentstyle=\itshape,
  columns=fullflexible,
  keepspaces=true,
  breaklines=true,
  showstringspaces=false,
  numbers=none,
  frame=single,
  float,
  floatplacement=tbp,
  framerule=0.3pt,
  xleftmargin=0.4em,
  xrightmargin=0.4em,
  aboveskip=0.7\baselineskip,
  belowskip=0.7\baselineskip,
  captionpos=t
}

\lstdefinestyle{certfile}{
  basicstyle=\ttfamily\scriptsize,
  backgroundcolor=\color{filebg},
  breaklines=true,
  columns=fullflexible,
  frame=single,
  rulecolor=\color{panelrule},
  framesep=4pt,
  showstringspaces=false,
  literate={->}{{$\rightarrow$}}2 {>=}{{$\geq$}}2
}

\newcommand{\panel}[1]{%
  \medskip\noindent\textbf{#1}\par\nobreak\smallskip}

\title{NeuralCert: certified computational discovery of extremal mathematical constructions}

\author{%
  Mark Patrick Roeling\thanks{%
    Corresponding author: \href{mailto:your.email@domain.nl}{mp.roeling@mindef.nl}.\\
    Code and certified data: \url{https://github.com/mproeling/neuralcert}%
  }\\
  Netherlands Defence Academy; Data Science Centre of Excellence, NL Ministry of Defence
}

\date{}

\begin{document}

\maketitle

\begin{abstract}
Neural networks are becoming popular in solving mathematical problems, but stochastic models do not provide mathematical exactness by themselves. This study introduces a discovery-to-certification framework in which high-dimensional variational trial functions are learned in a compact separable representation, spectrally diagnosed and pruned, and then certified exactly through multimodular evaluation. Exact certification makes the numerical proofs fully explicit and independently verifiable. This framework can be run on a standard personal computer. 

Across three extremal problems, we show that neural optimization can contribute to rigorous mathematics in three distinct ways: by discovering improved constructions, by exposing empirical invariants that lead to proofs, and by revealing optimization barriers whose geometry motivates new analytic or numerical representations.

More broadly, these results suggest a path toward AI-assisted mathematics in which flexible computational discovery and exact certification become complementary components of a single rigorous workflow.
\end{abstract}


\subfile{01introduction}

\subfile{Section1/02results}

\subfile{Section2/02results}

\subfile{Section3/02results}

\section*{Discussion}

\subfile{04discussion-v3}

\section*{Methods}
\subfile{Section1/03methods}

\subfile{Section2/03methods_Delsarte_v5}

\subfile{Section3/03methods}

\subfile{05methods-llm}


\printbibliography[
  heading=bibintoc,
  title={References}
]

\clearpage
\begin{refsection}

\subfile{Section1/Maynard_SuppMethods_v8}

\subfile{Section1/input004asymptotic_v4}

\newpage

\subfile{Section2/04Delsarte_SUPPLEMENTARY_v5}

\newpage
\subfile{Section3/04supplementary-methods}

\newpage

\subfile{06supplementary-compute-package}

\subfile{Section1/supplementary_certificate_example}

\newpage

\subfile{Tables}

\newpage

\printbibliography[
  heading=subbibliography,
  title={Supplementary References}
]

\end{refsection}

\end{document}

%% file: 01introduction.tex
\noindent
Machine learning can contribute to mathematical discovery by identifying latent relations between mathematical objects, guiding search over vast combinatorial or algebraic spaces, and generating candidate constructions or algorithms. Davies \emph{et al.} used learned predictors and attribution methods to expose latent mathematical structure and guide conjecture formation \cite{davies2021advancing}. AlphaTensor showed that reinforcement learning can search large spaces of tensor decompositions and recover exact matrix-multiplication algorithms, including previously unknown constructions \cite{fawzi2022alphatensor}. FunSearch coupled neural generation to an executable evaluator, demonstrating that iterative proposal, evaluation and selection can produce new mathematical constructions \cite{romeraparedes2024funsearch}. Also, neural function approximators can transform a mixed discrete--continuous geometric problem into a differentiable optimization problem \cite{mundinger2025neural}.

Despite these advances, a methodological gap persists for problems in which the unknown object is a function or extremizer in a large or infinite-dimensional space. Such problems are commonly reduced computationally by choosing a finite ansatz,
\begin{equation}
    f(x)=\sum_{j=1}^{m} a_j \phi_j(x),
\end{equation}
and optimizing over the coefficients $a_j$. While this makes the problem tractable, it also imposes an a priori structural restriction,
\begin{equation}
    f^\star \approx \operatorname{span}\{\phi_1,\ldots,\phi_m\},
\end{equation}
so that improvements in optimization accuracy cannot remove approximation error induced by an inadequate representation. However, neural parameterizations offer a complementary strategy: instead of fixing the shape of a candidate solution in advance, they provide flexible differentiable families in which large-scale functional search can be conducted. For example, in geometric colouring a mixed discrete--continuous search space was replaced by a probabilistic differentiable representation amenable to gradient optimization \cite{mundinger2025neural}. 

Here we introduce NeuralCert, a discovery-to-certification framework that deliberately separates the representation used for computational search from the representation used for mathematical proof. 
Flexible neural parameterizations are used to learn one-dimensional factors within a separable trial family, without fixing the polynomial dictionary subsequently used for exact certification. Discovered candidates are subsequently compressed into explicit mathematical representations from which rigorous certificates can be constructed and independently verified. Here, certification means recording a claimed bound together with the exact data needed to recompute it independently, so that the inequality can be confirmed without trusting, or re-running, any part of the discovery pipeline. Across three extremal problems, we show that this separation enables neural optimization to contribute to rigorous mathematics through three distinct modes: discovering improved constructions, exposing structure that leads to analytic results, and identifying optimization barriers that motivate alternative representations.

Since a floating-point neural candidate does not constitute a proof, the discovery candidate is reduced to an independently checkable object \cite{fawzi2022alphatensor,romeraparedes2024funsearch} such that verification establishes the mathematical claim. This idea builds on previous work where neural search produced tensor decompositions which could be checked algebraically \cite{fawzi2022alphatensor}, and candidate programs judged by an explicit executable evaluator \cite{romeraparedes2024funsearch}. These examples suggest that discovery and verification impose fundamentally different computational requirements. Discovery benefits from (over)parameterization, approximation and exploration, whereas verification benefits from low-dimensional structure, exactness and controlled numerical error.

Our central methodological hypothesis is not that neural optimization should replace mathematical proof, but that the representation best suited for discovery need not be the representation best suited for proof. A flexible computational model can search a substantially larger functional space than is convenient for exact symbolic treatment, after which the useful structure can be transferred into an explicit mathematical object. Conversely, repeated failure or instability of such a search can reveal invariants or unsuitable coordinates and thereby guide analytic reformulation \cite{davies2021advancing}. NeuralCert is designed around this asymmetry: discovery is deliberately flexible and expendable, whereas every reported mathematical claim must be supported by a representation that can be reconstructed and verified independently.

\section*{NeuralCert overview}

NeuralCert is a Python framework for neural discovery of candidate functions followed by independent certification and verification. The architecture is deliberately separated into three components; discovery, certification and verification, which do not share numerical evaluation routines. Discovery uses flexible neural parameterizations to explore problem-specific function spaces and exports candidate functions and metadata, typically as NumPy \texttt{.npz} files. Certification is independent of the neural model: no network parameters, optimizer state or training data are required. Instead, the candidate is reconstructed in a problem-specific mathematical representation and converted into an explicit certificate. Verification is implemented as a standalone minimal code path that ingests only this certificate, typically as \texttt{.json}, and independently reconstructs the inequalities underlying the claimed result. Thus, neural optimization serves only as a proposal mechanism; final mathematical claims depend exclusively on independently reconstructed and verifiable proof objects.

%% file: Section1/02results.tex

\section*{Discovery and certification of sieve constants improve prime gap bounds}

The Maynard--Tao sieve reduces bounded gaps between primes to lower bounds on a variational constant \(M_k\), which we approached through two independent computational routes. The first uses neural search over separable functional families, followed by reconstruction in a polynomial representation and rigorous Rayleigh--Ritz certification. At small \(k\), this approach recovers known extremal behaviour (\(M_5\geq2.0071443298\) and \(M_{20}\geq3.12755795\)); at \(k=25\) it improves the Polymath8b reference (certified \(3.3221511\) versus \(3.3221426\)). Thus, high-quality extremizers can be discovered without prescribing the polynomial ansatz subsequently used for certification (Supplementary Tables~\ref{tab:vanilla-results} and~\ref{tab:epsilon-results}). Neural discovery further benefits from overparameterization: a redundant channel representation supplies extra variational degrees of freedom during optimization, after which unnecessary channels are removed by backward elimination with re-optimization at each step, subject to a cumulative relative Rayleigh-quotient loss of at most \(\tau_{\mathrm{prune}}=10^{-7}\).

At large \(k\), an early neural run near \(k=3600\) produced a numerically stable Rayleigh value above \(8\) that remained below the Cauchy--Schwarz ceiling and was stable under grid refinement. Exact reconstruction nevertheless failed: the optimizer had collapsed onto a sharply localized single channel whose discovery score was inflated by an inconsistent quadrature frame. The discovered profile approaches \(1/(\log k\cdot t)\), a form the polynomial channels cannot represent across the increasingly flat Rayleigh landscape (Supplementary Methods~\ref{sup:large-k-failure}). The artifact survived the available heuristic gates because its value did not cross the upper bound; only independent reconstruction of the candidate for certification exposed the inconsistency, which motivated NeuralCert's scalar cross-checks and support-aware thresholds.

To access much larger \(k\), we developed a second computational route using the rational profile \(g(t)=1/(c+(k-1)t)\), already employed in Polymath8b (Theorem 6.7). Our contribution is a direct, rigorously certified Fourier-domain evaluation of the associated product trial function on the simplex. This makes it possible to optimize and certify the construction at a computational cost nearly independent of \(k\), obtain improved explicit prime-gap bounds, and analyse its asymptotic deficit through a stable-law limit. This yields
\[
M_{3655}\geq8.000064869,\quad
M_{208910}\geq12.00000199,\quad
M_{11655069}\geq16.0000003924,\quad
M_{644589002}\geq20.0000009143,
\]
for \(3\), \(4\), \(5\) and \(6\) primes in a tuple, together with the uniform bound \(M_k\geq\log k-0.307\) for \(100\leq k\leq1.88\times10^9\). These results require only the Bombieri--Vinogradov theorem, assuming neither the Elliott--Halberstam conjecture nor Deligne-type distributional input.

Combining these thresholds with explicit admissible tuples yields
\[
H_2\le 33\,118,\quad H_3\le 2\,718\,108,\quad
H_4\le 214\,099\,720,\quad H_5\le 14\,541\,349\,288,
\]
where the tuple for \(k=3655\) is taken from the Engelsma--Sutherland database \cite{Sutherland-database}, the tuple for \(k=208910\) was constructed by a hybrid shifted-Schinzel/greedy sieve, and the two largest use the first \(k\) primes exceeding \(k\); all four are checked by a standalone verifier that shares no code with the construction (Supplementary Methods). Measured against the corresponding records established without the Elliott--Halberstam conjecture and without Deligne's theorems \cite{polymath_wiki,polymath2014variantsselbergsievebounded}, these improve the published bounds by factors of \(14.3\), \(11.9\), \(9.5\) and \(8.6\) respectively. They also improve on the stronger records that do invoke Deligne-type input \cite{stadlmann2025primes}, by factors of \(12.0\), \(9.0\), \(6.5\) and \(5.3\) (Figure~\ref{fig:rplot-asymptote} and Supplementary Data). This distinction matters because, as noted in \cite{stadlmann2026bounded}, the previously best-known upper bounds \(H_m\) for \(m\geq2\) relies on equidistribution estimates of Zhang type; the bounds above are obtained without them.

An asymptotic analysis of the single-channel rational family identifies its limiting convolution profile with a spectrally positive \(1\)-stable law and proves \(M_k\ge\log k-C_\ast-o(1)\), where \(C_\ast=\inf_{a\in\mathbb R}C(a)\); numerical evaluation suggests \(C_\ast\approx0.3343\) (Supplementary Methods). This explains why the rank-one structure remains asymptotically competitive, attaining the optimal \(\log k\) growth rate with only a constant-order deficit, without implying global optimality of rank one. Consequently
\[
0\leq\liminf_{k\to\infty}(\log k-M_k)\leq C_\ast,
\]
where the lower endpoint follows from the classical bound \(M_k\leq\frac{k}{k-1}\log k\). The finite-range constant \(0.307\) is smaller than \(C_\ast\) because, within this rational family, the finite-range deficit is numerically consistent with \(C_\ast-1.047/\log k\); the directly evaluated deficit remains below \(0.29\) throughout the certified range.

Despite sharing no numerical evaluation, the two computational pipelines agree throughout their overlapping range. While the finite-\(k\) results remain below Bogaert's independently computed (Krylov subspace) values for \(k\neq25\), \(k\leq30\), indicating that the Krylov method is powerful for modest \(k\), NeuralCert improves \(31\leq k\leq100\). Every reported finite-\(k\) inequality is accompanied by a machine-checkable certificate that can be independently verified.

\begin{figure}[t]
    \centering
    \includegraphics[width=0.85\linewidth]{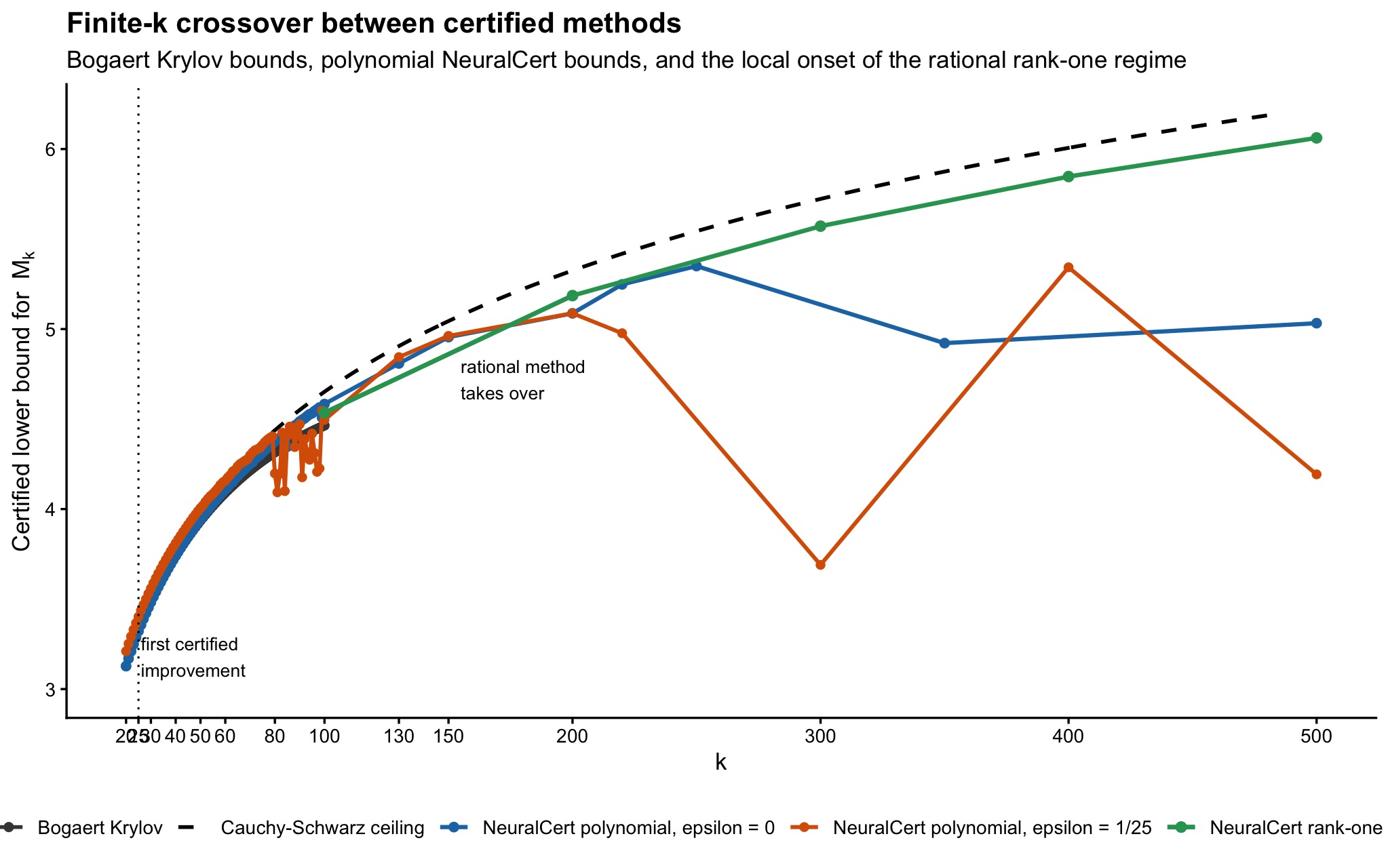}
\caption{\textbf{Certified lower bounds for the Maynard variational constant across computational regimes}. Certified values obtained with Bogaert's Krylov method are compared with NeuralCert polynomial constructions for the vanilla problem (\(\varepsilon=0\)) and enlarged support (\(\varepsilon=1/25\)), together with the certified rank-one rational family. The polynomial construction first exceeds the Krylov bounds at \(k=25\), but loses efficiency as \(k\) increases, whereas the rank-one rational representation becomes competitive and provides the continuation to larger \(k\). The Cauchy--Schwarz bound is shown as a reference. Lines connecting discrete certified values are guides to the eye and do not represent interpolated certificates. Beyond \(k=500\), the rank-one rational family is the only representation that remained reliably optimizable and certifiable in the regime studied here.}
    \label{fig:rplot-asymptote}
\end{figure}

We also implemented the \(\varepsilon\)-enlarged variant proposed in Polymath8b. At \(k=49\), optimization within our certified trial family saturates at \(3.98867\). The certified value is a lower bound, so this does not establish \(M_{49}\le4\); it shows that enlarging the sieve weight alone, at Bombieri--Vinogradov level, did not reach the threshold in our search. Concurrent work is consistent with this reading: \(H_1\leq240\) has since been established by combining the Bombieri--Vinogradov support with equidistribution estimates for smooth moduli \cite{stadlmann2026bounded}, and subsequent refinements of that construction give \(H_1\leq212\) \cite{charton2026new} and \(H_1\leq186\) \cite{openai2026improved}; all three use distributional input beyond Bombieri--Vinogradov. The additional ingredient is therefore on the arithmetic side rather than the variational one, which is where our saturation result locates it. We report the first certified \(\varepsilon\)-values at \(k=49,52\) and above \(k=54\).

%% file: Section2/02results.tex
\section*{Higher-order Delsarte discovery exposes a tensorization obstruction}

Higher-order Delsarte hierarchies strengthen the classical LP for linear codes and show substantial finite-length gains, but asymptotic progress requires an explicit dual family rather than additional numerical optimization \cite{loyfer2023new,coregliano2025higher}. We therefore treated the higher-order dual as a discovery problem, initially asking whether a low-complexity correction to the lifted level-one solution could improve the MRRW bound. Extending the symmetry-reduced hierarchy to $r=3$ confirmed that higher levels contain genuine information, 
including the level-3 bound $V_3(12,5)\le 16$, against the level-2 value
$V_2(12,5)=24.260255\ldots$ located numerically.
However, comparison with the explicit lift showed that the source of the improvement changes with hierarchy level: at $r=2$ it can reside entirely in the partial-Fourier constraints, whereas at $r=3$ the full-transform rows already improve on the lift. A single learned correction was therefore not a stable asymptotic target.

We redirected discovery to the spectral construction of \cite{coregliano2025higher} where the blocklength dependence can be removed analytically. For a normalized configuration $G$ on
$\mathbb F_2^\ell$, the relevant objective is
$J_\ell(G)= \mathsf H(G) / \ell $.
Although we initially intended to parameterize $G$ neurally, symmetry and analytic reduction collapsed the problem faster than a generic network became useful: for $\ell\le4$ the largest search space has only $15$ free probability coordinates and no remaining dependence on $n$. 
We therefore optimized the reduced configuration directly. Across $28$ instances and $13{,}600$ randomized local starts, no configuration improved MRRW; instead, the recovered optima repeatedly matched the quasirandom tensor-product family to numerical precision.

This repeated null result exposed a structural obstruction. Defining the translation affinity
\[
\Phi(G,v)=\sum_u\sqrt{G(u)G(u+v)},
\]
the finite spectral walk condition implies $\Phi(G,v)\ge\varepsilon$ on a
spanning set of shifts (Methods, hypotheses (A1)--(A2)).
For
\[
h(\varepsilon)=\mathsf H\!\left(\frac{1-\sqrt{1-\varepsilon^2}}{2}\right),
\]
convexity of $h$, conditioning and Jensen's inequality give
\[
\mathsf H(G)
\ge
\sum_{i=1}^{\ell} h\!\left(\Phi(G,e_i)\right)
\ge
\ell h(\varepsilon).
\]
Hence, with $\delta=(1-\varepsilon)/2$,
\[
J_\ell(G)\ge
h(\varepsilon)
=
\mathsf H\!\left(\frac12-\sqrt{\delta(1-\delta)}\right),
\]
exactly the first MRRW expression. The tensor-product configuration $\mathrm{Bernoulli}(p_\varepsilon)^{\otimes\ell}$ saturates the entropy--affinity inequality, identifying the object repeatedly recovered computationally. Hence, within the fixed single-orbit spectral ansatz, optimizing the configuration $G$ cannot improve MRRW Theorem~\ref{thm:obstruction} and Corollary~\ref{cor:no-improvement}).

The null result is informative because it is highly structured rather than merely negative. Across 13.600 local optimizations, the recovered configurations repeatedly collapse onto the same tensor-product family to numerical precision (Supplementary Methods \ref{sup:search}). This suggested that the observed optimum reflected an invariant of the reduced problem instead of optimization failure. The entropy–affinity argument supports this interpretation: the tensor-product configuration saturates the inequality and the numerically recovered object is an extremizer of the constrained configuration problem.

Furthermore, within the single-orbit spectral construction, additional optimization of $G$ cannot
improve the first MRRW expression; any improvement must instead exploit degrees of freedom not
controlled by the obstruction, such as alternative sign functions, support on multiple configuration
orbits, the partial-Fourier dual families or more general dual feasible solutions (Supplementary Methods \ref{sup:obstruction} and \ref{sup:verification}). Thus, the computation eliminates an apparently natural search direction while identifying
the components in which non-trivial asymptotic freedom remains.

%% file: Section3/02results.tex
\section*{Convex discovery improves sign-uncertainty bounds}

For the $+1$ Bourgain--Clozel--Kahane sign-uncertainty problem, radial polynomial--Gaussian Fourier eigenfunctions are the 
canonical ansatz.
Cohn, Dong and Gon\c{c}alves \cite{cohn2022sign} showed that sublinear-degree members of this family encounter an asymptotic ceiling, motivating our initial search for more flexible Fourier-eigenfunction structures. We therefore treated the choice of envelope, scales and contact locations as a discovery problem, with learned proposals followed by independent high-precision evaluation.

The flexible search improved rapidly but stalled in dimension $d=1$. Gaussian-mixture constructions reached $\rho=0.578532$, above the published
$A_{+}(1)\le0.572990$ bound. Neural optimization did not remove the plateau: apparently improving trajectories systematically entered coefficient systems with condition numbers $10^{17}$--$10^{18}$, where no reliable digits could be guaranteed from the floating-point solve. Multiprecision reevaluation rejected these candidates. 
Multistart runs also converged to distinct plateaus rather than a common value, consistent with changes in root multiplicity and contact structure in the last-sign-change functional. We therefore treated the failure as a diagnostic of the search representation rather than evidence that the function class itself had been exhausted.

The decisive simplification was to return to the Fourier-eigenfunction constraint. With $u=\pi |x|^2$, every radial $+1$ polynomial--Gaussian eigenfunction in the truncated Laguerre space has the form
\[
 f(x)=P(u)e^{-u},\qquad
 P\in\operatorname{span}
\left\{L_{0}^{(d/2-1)}(2u),L_{2}^{(d/2-1)}(2u),
\ldots,L_{2N}^{(d/2-1)}(2u)\right\}.
\]
The Fourier constraint is linear in the Laguerre coefficients and
$e^{-u}>0$. For fixed $u_0$, admissibility therefore reduces to
\[
 P(0)=0,\qquad P(u)\ge0\quad(u\ge u_0).
\]
To exclude the zero polynomial and fix the otherwise arbitrary scale,
we impose the linear normalization
\[
 \int_{u_0}^{\infty}P(u)e^{-u}\,du=1.
\]
Thus the search can be reformulated as a convex semi-infinite feasibility
problem in the coefficients, with bisection in $u_0$, rather than a
non-convex optimization over forced contacts. An adaptive exchange LP locates
the active tangencies; these are then rationalized, used to reconstruct an
exact Laguerre polynomial over $\mathbb Q$, and verified on the complete ray
by exact factorization and a Sturm chain over $\mathbb Z$. The numerical search determines the quality of the construction found, whereas admissibility of each quoted bound follows independently from the rational certificate. No global optimality claim is made.

In $d=1$ the degree ladder first reproduces the published configuration
and then improves it, with approximate last-sign-change radii
\[
\rho_{22}\approx0.572989678,\qquad
\rho_{30}\approx0.572706699,\qquad
\rho_{38}\approx0.572588699.
\]
The degree-$22$ search numerically resolves the same five-contact configuration as the published construction to its stated precision; the new improvement occurs at higher degree. The degree-$38$ candidate is certified by an explicit rational polynomial, giving
$
\boxed{A_{+}(1)\le0.572588700}
$.
The same pipeline gives
$
A_{+}(2)\le0.756206237
$,
which is approximately $7.6\times10^{-7}$ below the published $0.756207$ value and is therefore treated as a certified reproduction with a marginal gain rather than a substantive improvement (see Figure \ref{fig:signchange-landscape3d}).

%
%
%

\begin{figure}[t]
  \centering
  \includegraphics[width=0.86\textwidth]{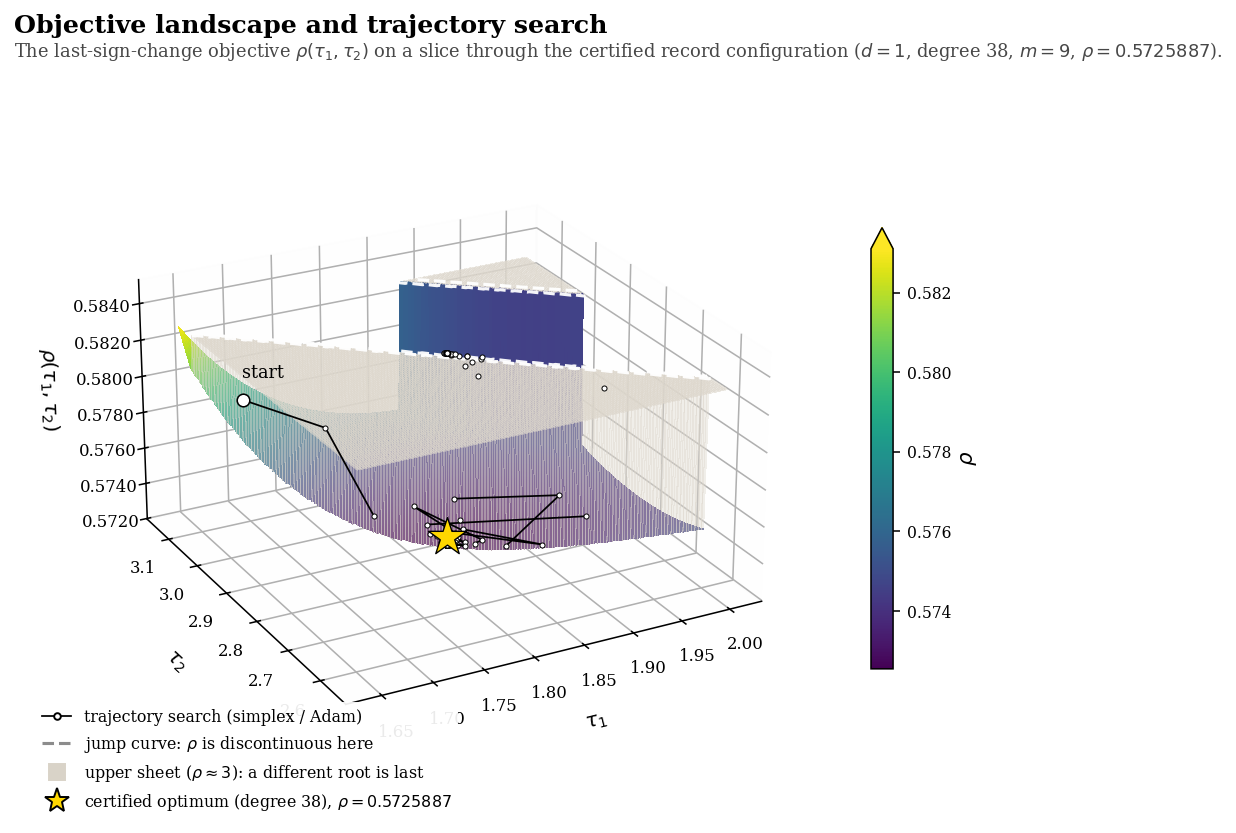}
    \caption{%
    \textbf{Objective landscape and trajectory search.}
    The last-sign-change functional $\rho(\tau_1,\tau_2)$ on a slice through
    the certified record configuration ($d=1$, degree~38, $m=9$ contacts,
    $\rho=0.5725887$), obtained by varying the two innermost contacts with
    the remaining seven held at $\tau^{\ast}$.  The surface is drawn from the
    raw grid without smoothing or interpolation, and is evaluated in the
    damped Laguerre basis $\psi_n(u)=L^{(\alpha)}_n(2u)e^{-u}$: at this
    degree the underlying polynomial reaches ${\sim}10^{69}$ on the grid, so
    a monomial evaluation loses every digit at the outer tangencies and
    splits them into spurious sign changes.  The landscape has two sheets, a
    narrow valley (colour) and an upper sheet at $\rho\approx3$
    (translucent), with no intermediate values~--- crossing the dashed
    curve, a different root of $P$ becomes the last sign change and $\rho$
    leaps.  Black markers show a simplex/Adam trajectory, its polyline
    broken wherever the objective jumps. The observed jumps illustrate the sensitivity of direct optimization to changes in the last-sign-change root. The convex reformulation avoids differentiating this discontinuous objective by using bisection in $u_0=\pi\rho^2$ and solving a coefficient-feasibility problem at each step. The dashed curve is drawn as a high-gradient contour and indicates where $\rho$ jumps rather than
    resolving the discontinuity set exactly.%
  }
  \label{fig:signchange-landscape3d}
\end{figure}

Higher-dimensional runs approach, rather than cross, the known polynomial--Gaussian asymptotic ceiling; their interpretation is limited by the numerical degree ceiling of the floating-point discovery stage (Methods). Thus, the new small-dimensional bounds do not evade the Cohn--Dong--Gon\c{c}alves obstruction. Instead, they show that the finite polynomial--Gaussian family had not been numerically exhausted.

%% file: 04discussion-v3.tex
Can modern neural optimization rediscover and improve extremal constructions in hard analytic and combinatorial mathematics? Across the three problems studied here, the answer takes several distinct forms. In the Maynard--Tao sieve, neural optimization identifies stronger extremizers, but at large \(k\) the neural representation ultimately gives way to the rational family \(g(t)=(c+(k-1)t)^{-1}\). The large-\(k\) results illustrate how computational progress can arise from a new evaluation and certification strategy for an established mathematical family.
In the higher-order Delsarte hierarchy, computational failure is itself informative: the search collapses onto a tensor-product family, and the resulting invariant yields an obstruction to improving the first MRRW expression through the entropy-based rate estimate under the spanning-affinity and finite-walk hypotheses considered here. The result leaves open constructions that bypass these hypotheses or use alternative spectral supports or dual feasible solutions.
In sign uncertainty, neural optimization encounters a plateau; independent verification reveals severe ill-conditioning, and the failure exposes a convex reformulation that produces new certified bounds.
In each case, the discovery representation is provisional: its purpose is to expose mathematical structure rather than to become part of the final proof \cite{davies2021advancing, raayoni2021generating, udrescu2020ai}. Computation therefore contributes not only by finding better objects, but also by identifying obstructions and more appropriate representations. The numerical degree ceiling encountered in the sign uncertainty problem is a case in point: asymptotic theory indicates that the strongest bounds lie in the high-degree regime, yet that regime is precisely where the discovery representation loses numerical resolution, so progress required changing the representation rather than pushing the degree further. 

Separating heuristic proposal generation from mathematically controlled inference also underlies neuro-symbolic systems such as AlphaGeometry, in which learned construction proposals are coupled to a symbolic deduction engine \cite{trinh2024solving}. NeuralCert applies a related separation not to theorem search, but to continuous variational discovery: the learned component proposes mathematical objects, whereas the final inequality depends only on the reconstructed certificate. The computational model need not itself be interpretable, exact or even retained; it can use floating-point arithmetic, stochastic optimization, approximate quadrature and overparameterized models.
Discovery output is instead regarded as an exploratory and untrusted proposal that must cross into a mathematically explicit representation before it can support a claim. Certification and verification then operate on that representation independently of the optimizer. This design follows certificate-based principles in verified computation \cite{tan2024formally}. The failed large-\(k\) Maynard candidate demonstrated the value of this design. A candidate near \(k=3600\) remained stable under grid refinement and below the ceiling, yet independent reconstruction revealed a scale inconsistency with inflation of the objective. The optimizer had exploited the numerical error, generating a plausible extremizer that failed certification. Conversely, the later rational rank-one construction was accepted because its Rayleigh quotient can be independently reconstructed and certified. Hence, structural similarity discovered numerically becomes mathematically relevant only when it survives the certification boundary, in line with the certificate discipline of computer-assisted proof \cite{hales2017formal}. The methodological contribution is therefore the transition between search representations and independently verifiable mathematical objects, with neural optimization serving as one component of that process.

The same separation has appeared concurrently on this very problem. Recently, the bound \(H_1\leq246\) was improved to \(240\) \cite{stadlmann2026bounded}, then to \(212\) \cite{charton2026new} and \(186\) \cite{openai2026improved}, the last two accompanied by Lean certificates and, in one case, a proof attributed to an automated system. The present work automates different objects. Instead of a proof generated through a proof assistant at a single small dimension (\(k=49\), \(45\) and \(40\) respectively), we discovered  a trial function numerically and then removed from the argument entirely: what is certified is the Rayleigh quotient of one fixed, exactly specified function, at dimensions up to \(k\approx6\times10^{8}\) and uniformly to \(k=1.88\times10^{9}\). Formal verification is the stronger guarantee, and extending certificates into that setting is a potent direction. Conversely, the variational constant is shared infrastructure: earlier work requires \(k\approx3.5\times10^4\) to reach \(m=2\) because the available lower bound on \(M_k\) falls well short of \(\log k\), and combining the sharper bound \(M_k\ge\log k-0.307\) with the improved equidistribution estimates used in \cite{stadlmann2026bounded,charton2026new,openai2026improved} would be expected to improve \(H_m\) for \(m\ge2\) beyond either ingredient alone. 

The framework also has clear limitations. NeuralCert is not a general mechanism for converting numerical optimization into proof. Certification requires that a discovery can be transferred to an explicit mathematical representation, and this transfer may itself become a bottleneck. A failed reconstruction is not hard evidence that the discovered function is mathematically uninteresting; it only establishes that the proposed claim has not crossed the certification boundary. Likewise, certificates for variational constructions establish the performance of explicit admissible trial functions, not their global optimality. A further limitation concerns the standard of verification itself. The certificates here are rigorous numerical objects; interval enclosures and exact modular arithmetic, not machine-checked proofs, and a Lean or Coq development of the reduction and error bounds would be a strictly stronger guarantee. That the same problem has recently attracted both approaches suggests the combination is within reach. More generally, flexible neural (over)parameterizations are particularly useful when their additional representational freedom assists discovery. Once lower-dimensional structure, convexity or an explicit analytic family has been identified, retaining a neural representation offers limited advantage; the mathematically simpler representation should replace it.

%% file: Section1/03methods.tex
\section{Discovery and certification of sieve constants improve prime gap bounds}

\subsection{Maynard--Tao variational formulation}
\label{sec:maynard_variational}

The Maynard--Tao sieve reduces bounded gaps between primes to the construction of admissible trial functions with large variational quotient \cite{maynard2015small, polymath2014variantsselbergsievebounded}. For the separable constructions considered here, the relevant denominator and numerator functionals can be expressed through pairwise one-dimensional convolution integrals. We write the trial function as
\begin{equation}
F_{\theta,c}(t_1,\ldots,t_k)
=
\sum_{j=1}^{m} c_j\prod_{i=1}^{k}g_{\theta,j}(t_i),
\label{eq:maynard_separable}
\end{equation}
where $g_{\theta,j}$ are one-dimensional channels and $c\in\mathbb{R}^m$ is a linear mixing
vector. For each pair,
\begin{equation}
h_{j\ell}(x)=g_{\theta,j}(x)g_{\theta,\ell}(x),
\end{equation}
and the $k$-dimensional functionals reduce to quadratic forms
\begin{equation}
I_{k,\varepsilon}(F_{\theta,c})=c^\top A(\theta)c,
\qquad
J^{(1)}_{k,\varepsilon}(F_{\theta,c})=c^\top B(\theta)c,
\label{eq:maynard_quadratic_forms}
\end{equation}
whose entries depend on one-dimensional integrals involving
$h_{j\ell}^{*(k-1)}$. At fixed channels the objective is therefore the generalized Rayleigh
quotient
\begin{equation}
R(\theta)
=
k\,\lambda_{\max}\!\left(B(\theta),A(\theta)\right)
=
k\max_{c\neq0}\frac{c^\top B(\theta)c}{c^\top A(\theta)c}.
\label{eq:maynard_profiled_rayleigh}
\end{equation}
The nonlinear optimizer consequently searches only over the channel manifold; the optimal linear combination is re-solved as a generalized Ritz vector at each objective evaluation \cite{Parlett1998, GolubVanLoan2013}. The precise normalization of $M_k$, its $\varepsilon$-enlarged analogue, and the conversion of thresholds in $M_k$ to bounds on $H_m$ are given in Supplementary Methods, Section~\ref{sup:maynard_variational_details}.

\subsection{Neural discovery of separable trial functions}
\label{sec:maynard_neural_discovery}

The discovery stage is used only to identify strong explicit trial functions and does not establish a rigorous inequality. To expose the increasingly sharp boundary structure at large $k$, each channel is parameterized as
\begin{equation}
g_{\theta,j}(x)=r_{\theta,j}(x)e^{-\rho_jx},
\qquad \rho_j>0,
\label{eq:maynard_channel_envelope}
\end{equation}
where the residual functions are produced by a shared multilayer perceptron with channel-specific outputs. The network receives $(x,e^{-kx})$ as input, explicitly exposing the natural $O(k^{-1})$ boundary-layer scale. In the positive-channel solver,
$r_{\theta,j}(x)=\exp(f_{\theta,j}(x,e^{-kx}))$, allowing the convolution recursion to be performed without cancellation in logarithmic coordinates.

The leading generalized Ritz vector is profiled out rather than learned jointly with the channel parameters. If
\begin{equation}
B(\theta)v_\theta=\lambda_\theta A(\theta)v_\theta,
\qquad
v_\theta^\top A(\theta)v_\theta=1,
\end{equation}
we freeze $v_\theta$ and differentiate
\begin{equation}
q_{v_\theta}(\theta)
=
k\frac{v_\theta^\top B(\theta)v_\theta}
       {v_\theta^\top A(\theta)v_\theta}.
\label{eq:maynard_frozen_ritz}
\end{equation}
Under the usual nondegeneracy condition, the Hellmann--Feynman/envelope identity \cite{Feynman1939, semay2015hellmann} gives the derivative of the profiled leading eigenvalue without differentiating through the eigendecomposition. Adam \cite{KingmaBa2015} is used for exploration and a block L-BFGS minorize--maximize procedure for polishing \cite{LiuNocedal1989}.
Channel-rank and grid-resolution continuation are used during search. Network architecture, optimization schedules, finite-difference gradient controls and continuation rules are specified in Supplementary Methods, Sections~\ref{sup:channel_model}--\ref{sup:optimization_details}.

\subsection{Stable evaluation of high-order convolution powers}
\label{sec:maynard_stable_convolution}

Direct recursion on $h^{*p}$ becomes unstable as $p$ grows because algebraic concentration, exponential decay and global amplitude must otherwise be represented simultaneously. For each channel pair we write
\begin{equation}
h_{j\ell}(x)=r_{j\ell}(x)e^{-a_{j\ell}x},
\qquad a_{j\ell}=\rho_j+\rho_\ell,
\end{equation}
and factor its convolution powers as
\begin{equation}
\nu^{(j\ell)}_p(s)
=
s^{p-1}e^{-a_{j\ell}s}e^{\sigma^{(j\ell)}_p}
\psi^{(j\ell)}_p(s).
\label{eq:maynard_factored_convolution}
\end{equation}
The algebraic and exponential terms are handled analytically, the scalar $\sigma_p$ carries the global amplitude, and only the residual shape $\psi_p$ is represented numerically. Substitution into the convolution identity and the change of variables $x=su$ yield a beta-weighted recursion,
\begin{equation}
\psi_{p+q}(s)
=
B(p,q)\int_0^1
\psi_p(su)\psi_q(s(1-u))
\frac{u^{p-1}(1-u)^{q-1}}{B(p,q)}\,du,
\label{eq:maynard_beta_recursion}
\end{equation}
which is evaluated with Gauss--Jacobi quadrature \cite{GolubWelsch1969}. The concentration associated with increasing $p$ and $q$ is thereby moved into a known quadrature weight rather than left as unresolved structure in the represented residual.

For positive channels we store $\xi_p=\log\psi_p$ and evaluate the positive quadrature sum by log-sum-exp, preventing underflow across repeated convolutions. The required power $h^{*(k-1)}$ is assembled through a binary addition chain, reducing convolution depth from $O(k)$ to $O(\log k)$. Representation, convolution, primitive-integration and outer-integration grids are kept distinct. Full recursion formulas, grid definitions and quadrature construction are given in Supplementary Methods, Sections~\ref{sup:factored_convolution}--\ref{sup:discovery_grids}.

\subsection{Structure-preserving Rayleigh--Ritz optimization}
\label{sec:maynard_structure_preserving}

The denominator matrix $A$ is mathematically a Gram matrix and must satisfy
$A\succeq0$. Entrywise accurate discretization does not automatically
preserve this property. We therefore use positive two-point interpolation in
the convolution recursion and a shared positive-weight outer integration mesh
across all channel pairs. Whenever the inner quadrature nodes are common to
all pairs, the discrete denominator has the form
\begin{equation}
A^{(N)}_{j\ell}
=
\sum_\alpha w_\alpha\Phi_{\alpha j}\Phi_{\alpha\ell},
\qquad w_\alpha>0,
\end{equation}
and therefore retains the positive-semidefinite Gram structure up to
floating-point round-off. In the saturated regime, where the inner rule for
the channel primitives becomes pair-dependent, this common representation is
preserved only up to inner-quadrature error; numerical adequacy is then
assessed by the positive-semidefiniteness, two-grid and refinement
diagnostics of Section~\ref{sec:maynard_validation}, which are safeguards
rather than rigorous bounds on that error. No bound reported here depends on
this discrete structure: the certified values are produced by the independent
exact certifier.

Because channel norms may span hundreds of orders of magnitude, the Gram
pencil is diagonally equilibrated by the exact congruence transformation
\begin{equation}
A\mapsto D^{-1/2}AD^{-1/2},
\qquad
B\mapsto D^{-1/2}BD^{-1/2},
\qquad
D=\operatorname{diag}(A_{11},\ldots,A_{mm}).
\end{equation}
Numerically unresolved directions are removed by rank-revealing spectral
whitening; no ridge term is added to the variational denominator. The
interpolation, shared-mesh construction, equilibration and rank-revealing
Ritz solve are detailed in Supplementary Methods,
Sections~\ref{sup:gram_preservation}--\ref{sup:ritz_details}.

\subsection{Validation of neural candidates}
\label{sec:maynard_validation}

Validation is embedded into model selection because a flexible optimizer can exploit structured
discretization error. Candidate states are therefore required to reproduce the independently recomputed
Rayleigh quotient, preserve numerical positive semidefiniteness of the denominator, retain sufficient
effective rank, remain stable under a more conservative rank threshold, respect applicable rigorous
upper bounds for both individual channels and the full quotient (rejecting numerical states that spuriously exceed the Cauchy–Schwarz ceiling \cite{polymath2014variantsselbergsievebounded}), and reproduce on a finer representation grid. Closed-form controls based on $g(x)=1$ and $g(x)=x$ exercise the convolution and integration paths before optimization. Final candidates are re-evaluated on multiple grids;
extrapolation is used only diagnostically and never as a certified lower bound. Exact validation criteria, controls and refinement procedures are provided in Supplementary Methods, Sections~\ref{sup:validation_details}--\ref{sup:refinement_details}.

\subsection{Exact certification of neural candidates}
\label{sec:maynard_exact_certification}

A discovered candidate is projected onto rational polynomial channels,
\begin{equation}
F(t_1,\ldots,t_k)
=
\sum_{j=1}^{m}c_j
\prod_{i=1}^{k}
q_j\!\left(\frac{t_i}{1+\varepsilon}\right),
\label{eq:maynard_rational_trial}
\end{equation}
and the mixing vector is transferred from the preconditioned discovery frame to the corresponding
exact channel frame. For fixed rational channels and rational coefficients the associated Maynard
quotient is rational; certification therefore reduces to exact evaluation of the two quadratic forms.

Rather than constructing and reconstructing every exact Gram-matrix entry, the certifier accumulates
the pair contributions to $c^\top Ac$ and $c^\top Bc$ modulo machine-word primes. For each prime
$p$ it computes only two aggregate residues,
\begin{equation}
S_A(p)=
\sum_{j\leq\ell}\kappa_{j\ell}c_jc_\ell A^{\mathrm{int}}_{j\ell}\pmod p,
\qquad
S_B(p)=
\sum_{j\leq\ell}\kappa_{j\ell}c_jc_\ell B^{\mathrm{int}}_{j\ell}\pmod p.
\end{equation}
Only the aggregate integers $S_A$ and $S_B$ are reconstructed, with analytically known positive
denominators $D_A,D_B$:
\begin{equation}
c^\top Ac=\frac{S_A}{D_A},
\qquad
c^\top Bc=\frac{S_B}{D_B}.
\end{equation}
Rigorous a priori integer bounds $\lvert S_A\rvert\leq\mathcal B_A$ and
$\lvert S_B\rvert\leq\mathcal B_B$ determine the reconstruction budget. If
$M_P=\prod_{p\in P}p$ and
\begin{equation}
M_P>2\max(\mathcal B_A,\mathcal B_B),
\end{equation}
the centered Chinese-remainder representative is unique. The final quotient is then formed exactly
from the reconstructed integers and known denominators. Thus the neural optimizer, numerical
quadrature and grid extrapolation no longer enter the proof once the rational trial function has been
fixed.

Before exact evaluation, the candidate is compressed by loss-controlled backward elimination:
channels are removed only while the re-optimized Rayleigh quotient remains within a prescribed
relative loss budget. The exact backend certifies the explicit reduced function, so pruning affects
certificate strength and computational cost but not validity. Detailed denominator formulas,
coefficient-extraction identities, rational representation transfer, pruning rules, integer bounds and
deterministic CRT reconstruction are given in Supplementary Methods,
Sections~\ref{sup:aggregate_modular}--\ref{sup:pruning_details}.


\subsection{Large-$k$ rational construction}
\label{sec:maynard_rational_largek}

At large $k$, neural optimization is used as a structural discovery instrument rather than as the
source of a certified numerical value. The neural candidate is compressed into clustered rational
families
\begin{equation}
g_{\mathrm{rat}}(t)
=
\sum_{a=1}^{r}\sum_{j=1}^{p_{\max}}
u_{a,j}(c_a+nt)^{-j},
\qquad n=k-1,
\label{eq:clustered-rational}
\end{equation}
where $c_a>0$ are pole locations and the higher powers represent confluent directions. For fixed
$c_a$, the coefficients $u_{a,j}$ are eliminated by variable projection, so only the nonlinear pole
locations are optimized. The confluent basis has a direct geometric interpretation because
\begin{equation}
\frac{\partial^q}{\partial c^q}\frac{1}{c+nt}
=
(-1)^q q!(c+nt)^{-(q+1)}.
\end{equation}

The quality of a distilled representation is not determined by pointwise fitting error alone. Each
candidate is rebuilt in an independent deterministic evaluator, the corresponding Gram pencil is
recomputed, and the generalized Rayleigh problem is solved afresh. Candidates are rejected if they
fail probability-mass conservation, analytic moment checks, Fourier-inversion stability, Gram-rank
diagnostics, cross-cluster conditioning, or the rigorous ceiling
\begin{equation}
R_k\leq \frac{k}{k-1}\log k.
\end{equation}
Surviving pole locations are then locally refined under the deterministic Rayleigh objective.

In the large-$k$ regime, additional rational clusters can reduce the pointwise approximation error
while failing these distributional checks. The stable deterministic representation collapses to the
first-order family
\begin{equation}
g(t)=\frac{1}{c+nt},
\qquad n=k-1,
\label{eq:maynard_largek_rational}
\end{equation}
which is subsequently optimized independently of the neural model. This distillation step therefore
uses neural search to identify structure, while the numerical value submitted for certification is
obtained from a separate rational representation. The clustered fit, deterministic Fourier evaluator,
failure diagnostics and local refinement are detailed in Supplementary Methods,
Section~\ref{sup:large-k-failure}.

\subsection{Certified large-$k$ evaluation in ball arithmetic}
\label{sec:maynard_ball_arithmetic}

For the first-order family in Eq.~\eqref{eq:maynard_largek_rational}, certification can be reduced
to a one-dimensional probabilistic Fourier calculation. Let $w=g^2$,
$m_0=\int_0^1w(t)\,dt=[c(c+n)]^{-1}$, and let $S$ be the sum of
$n=k-1$ independent draws from the probability density $w/m_0$ on $[0,1]$.
With
\begin{equation}
G(\rho)=\int_0^\rho g(t)\,dt,
\qquad
H(\rho)=\int_0^\rho w(t)\,dt,
\end{equation}
the separable Maynard quotient reduces exactly to
\begin{equation}
R=\frac{kN}{D},
\qquad
N=\mathbb E\!\left[G(1-S)^2\mathbf 1_{S<1}\right],
\qquad
D=\mathbb E\!\left[H(1-S)\mathbf 1_{S<1}\right].
\label{eq:maynard_probabilistic_reduction}
\end{equation}
Writing
\begin{equation}
\varphi(\theta)
=
\left(\frac{\widehat w(\theta)}{m_0}\right)^n,
\qquad
\Psi_X(\theta)
=
\int_0^1X(\rho)e^{i\theta\rho}\,d\rho ,
\end{equation}
for $X\in\{G^2,H\}$, Fourier inversion gives
\begin{equation}
I_X
=
\frac{1}{2\pi}
\int_{\mathbb R}
\varphi(\theta)e^{-i\theta}\Psi_X(\theta)\,d\theta .
\label{eq:maynard_fourier_certificate}
\end{equation}

The certifier evaluates a truncated trapezoidal form of
Eq.~\eqref{eq:maynard_fourier_certificate} using Arb ball arithmetic \cite{johansson2017arb}. The difference from the
exact integral is bounded by three separately controlled contributions: periodic aliasing, omitted
Fourier mass, and rigorous numerical enclosure of the transforms and auxiliary integrals. Aliasing
is one-sided because the corresponding convolution density is nonnegative and compactly supported;
its residual mass is bounded by certified moment inequalities. The omitted Fourier range is divided
into a finite band controlled by Arb wide-ball suprema and an analytic tail obtained from
integration-by-parts decay.

All error budgets remain in ball arithmetic until the final quotient is assembled. If
$N_{\rm ball}$ and $D_{\rm ball}$ are the certified truncated-sum enclosures and $E_N,E_D$
the rigorous aliasing and truncation budgets, the exported lower bound is
\begin{equation}
R_{\rm low}
=
\operatorname{lb}\!\left(
\frac{k\,\operatorname{lb}(N_{\rm ball}-E_N)}
     {\operatorname{ub}(D_{\rm ball}+E_D)}
\right),
\label{eq:maynard_ball_lower_bound}
\end{equation}
with all conversions explicitly rounded outward. The stored claim is rounded downward, so changes
in valid Arb subdivision or enclosure radii cannot strengthen the reported inequality. The complete
reduction, Fourier lemmas, moment bounds, tail estimates, special-function evaluation and directed
rounding procedure are given in Supplementary Methods,
Sections~\ref{sup:probabilistic_reduction}--\ref{sup:arb_certification}.

\subsection{Finite-range certification and stable-law asymptotics}
\label{sec:maynard_asymptotics}

The first-order rational family also permits a certified bound over a continuous range of integer
dimensions. Let $R_k^{(1)}$ denote the Rayleigh quotient of
$g_c(t)=(c+(k-1)t)^{-1}$ after selection of $c$. To avoid optimizing $c$ independently at every
integer $k$, we use the standardized simplex headroom
\begin{equation}
\eta(c,k)
=
\frac{1-\mathbb E[S]}{\operatorname{sd}(S)}
\end{equation}
as a scale coordinate. Its moments are available in closed form, and the large-$k$ optimizers
empirically concentrate near a fixed value of $\eta$. A log-uniform ladder beginning
at $k=100$ was therefore generated by solving $\eta(c,k)=-0.4746$ and certifying the resulting
fixed rank-one trial at each rung. For consecutive certified values $k_i<k_{i+1}$, monotonicity of
$M_k$ gives, for every integer $k\in[k_i,k_{i+1}]$,
\begin{equation}
M_k
\ge
R_{\rm cert}^{(1)}(k_i)
\ge
\log k-
\left[
\log k_{i+1}-R_{\rm cert}^{(1)}(k_i)
\right].
\label{eq:maynard_interval_cover}
\end{equation}
The maximum interval deficit over the complete ladder is $0.306754936$, yielding the uniform
certified statement
\begin{equation}
M_k\ge\log k-0.307,
\qquad
100\le k\le1.88\times10^9.
\label{eq:maynard_uniform_0307}
\end{equation}
The ladder construction, scale surrogate and interval-by-interval calculation are given in
Supplementary Methods, Sections~\ref{sup:rank1_ladder}--\ref{sup:rank1_scale}.

The asymptotic behavior of the same rational family can be analysed independently of the numerical
discovery pipeline. Put $n=k-1$ and choose the fixed-shift scaling
\begin{equation}
c_k^{-1}=\log n+a,
\qquad a\in\mathbb R.
\label{eq:maynard_fixed_shift}
\end{equation}
After rescaling the squared rational channel, the associated triangular array has Lévy density
converging to $x^{-2}\,dx$. The centered boundary variable converges in distribution to
\begin{equation}
X_a=(a+1)-\Lambda,
\end{equation}
where $\Lambda$ is the spectrally positive $1$-stable random variable with characteristic function
\begin{equation}
\mathbb E e^{is\Lambda}
=
\exp\left\{
\int_0^\infty
\left(
e^{isx}-1-isx\mathbf1_{\{x\le1\}}
\right)\frac{dx}{x^2}
\right\}.
\label{eq:maynard_stable_cf}
\end{equation}
Uniform density and logarithmic-integrability estimates permit passage from weak convergence to the
logarithmic observables in the Rayleigh quotient. The resulting asymptotic defect is
\begin{equation}
R(F_{k,c_k})
=
\log k-\mathcal C(a)+o(1),
\qquad
\mathcal C(a)
=
a-2\,\mathbb E[\log X_a\mid X_a>0].
\label{eq:maynard_Ca}
\end{equation}
Thus, with
\begin{equation}
\mathcal C_*=\inf_{a\in\mathbb R}\mathcal C(a),
\end{equation}
the Maynard constant satisfies
\begin{equation}
M_k\ge\log k-\mathcal C_*-o(1).
\label{eq:maynard_asymptotic_bound}
\end{equation}
Numerical evaluation of the limiting expression suggests \(C^\ast\approx0.3343\). This is a lower-bound construction obtained from fixed-shift scalings and is not asserted here to be the asymptotic optimum over all possible sequences \(c_k\).
The stable-limit proof, logarithmic uniform-integrability argument and numerical evaluation of
$\mathcal C_*$ are detailed in Supplementary Methods, Section~\ref{sup:stable_law}.

\subsection{$\varepsilon$-enlarged sieve calculations}
\label{sec:maynard_epsilon}

We additionally evaluate the enlarged-domain variant of the Maynard variational problem. For each
computed pair $(k,\varepsilon)$, the same discovery-to-certification separation is retained: a
candidate is optimized numerically, transferred to the explicit representation used by the
$\varepsilon$-certifier, and the resulting lower bound is recomputed independently. To compare the
enlarged and vanilla searches we define
\begin{equation}
\Delta_{\rm cert}(k,\varepsilon)
=
L_{k,\varepsilon}-L_{k,0},
\label{eq:eps_delta}
\end{equation}
where $L_{k,\varepsilon}$ and $L_{k,0}$ are the certified lower bounds obtained by the two
pipelines. Across the reliable sweep, $\Delta_{\rm cert}$ decreases with $k$ and changes sign for
the tested values of $\varepsilon$. Linear interpolation of the reliable sign-bracketing pairs gives
the crossover estimates reported in the Results (Supplementary Table \ref{tab:epsilon_sweep}).

Because Eq.~\eqref{eq:eps_delta} compares two lower bounds rather than the unknown exact variational
constants, a negative value does not prove
$M_{k,\varepsilon}<M_k$. It establishes only that the certified vanilla construction found by our
pipeline outperforms the certified enlarged-domain construction at that $k$. We therefore treat the
finite crossover as an empirical property of the certified construction ladder and state its
extension to the exact variational constants as a conjecture. In particular, the observed crossover
locations motivate
\begin{equation}
\varepsilon k^*(\varepsilon)\longrightarrow\infty
\qquad (\varepsilon\to0),
\end{equation}
but the available data do not determine a unique finer scaling law. The sign brackets, excluded
optimization failures, interpolation procedure and scope of the conjecture are given in
Supplementary Methods, Section~\ref{sup:epsilon_crossover}.

\subsection{Independent certificate verification}
\label{sec:maynard_verification}

Large-$k$ certificates are self-contained records containing the exact rational trial parameter,
Fourier discretization parameters, error summaries and the claimed lower bound. The standalone
verifier does not trust stored intermediate numerical bounds. It reconstructs the analytic constants,
regenerates the far-field partition, recomputes wide-ball suprema, certified moments, Fourier nodes,
trapezoidal sums, aliasing and truncation errors, and assembles a new $R_{\rm low}$ in ball arithmetic.
The claim is accepted only when this independently recomputed lower bound supports the stored
inequality.

The certificate proves the Rayleigh quotient of one fixed admissible trial function; it does not
establish optimality of its parameter, re-prove the external Maynard--Tao sieve theorem, or certify
the diameter of the admissible $k$-tuple used to convert a certified $M_k$ value to a prime-gap bound.
Monte--Carlo and small-$k$ comparisons are falsification tests rather than proof components.
Certificate schema and trust boundary are specified in Supplementary Methods,
Section~\ref{sup:standalone_verifier}.

%% file: Section2/03methods_Delsarte_v5.tex
\section{Higher-order Delsarte discovery and certification}

\paragraph{Symmetry-reduced hierarchy.}
For hierarchy level $r$, blocklength $n$ and minimum distance $d$ we
construct the symmetry-reduced higher-order Delsarte LP over type vectors
\[
I_{r,n}=
\Bigl\{\alpha\in\mathbb N^{2^r}:
\textstyle\sum_{u\in\mathbb F_2^r}\alpha_u=n\Bigr\},
\]
where a type records the multiplicity of each column $u\in\mathbb F_2^r$ in
an $r\times n$ binary matrix. For linear codes the program is invariant
under $\mathrm{GL}(r,2)$, so variables are indexed by group orbits. Full
Fourier constraints are represented by multivariate Krawtchouk coefficients
$K_\alpha(\beta)$, and the Loyfer--Linial strengthening adds the
corresponding partial transforms \cite{loyfer2023new}. We implemented these
transforms recursively and extended the symmetry reduction to $r=3$, where
sorting-based orbit representatives are no longer valid; construction,
orbit enumeration and validation anchors are given in Supplementary
Methods~\ref{sup:scaleup}.

\paragraph{Exact dual certification.}
Floating-point optimization is used only to locate competitive dual
solutions. Candidates are reconstructed at high precision, rounded to
dyadic rationals and checked against every dual inequality in exact
arithmetic. Residual rounding violations are repaired using
\[
\sum_{\beta\in I_{r,n}}
\binom{n}{\beta}K_\alpha(\beta)
=
2^{rn}\mathbf 1_{\{\alpha=n\varepsilon_0\}},
\]
which supplies an exactly feasible repair direction with a rational cost.
Finite-$n$ bounds therefore rest on exact dual feasibility and weak duality
alone, not on the numerical LP tolerance, and are upper bounds on the
level-$r$ optimum (Supplementary Methods~\ref{sup:exactcert}).

\subsection*{Reduction to an asymptotic configuration search}

We next considered the spectral dual construction of Coregliano et al., in
which the complementary spectral function is supported on a single
configuration orbit \cite{coregliano2025higher}. A normalized configuration
is a probability distribution $G$ on $\mathbb F_2^\ell$. Because a
level-$\ell$ dual bounds $|C|^\ell$, the asymptotically relevant rate
objective is
\[
J_\ell(G)=\frac{\mathsf H(G)}{\ell},
\]
with $\mathsf H(\cdot)$ the Shannon entropy in bits; this normalization,
and two discrepancies in the cited preprint that it exposes, are recorded in
Supplementary Methods~\ref{sup:normalization}. For a non-zero shift $v$ we
use the Bhattacharyya translation affinity
\[
\Phi(G,v)=\sum_u\sqrt{G(u)G(u+v)} .
\]
Writing $G=p^2$ with $p\ge0$ and $\|p\|_2=1$ gives
$\Phi(G,v)=p^{\mathsf T}S_vp$, where $S_v$ is the permutation induced by
translation by $v$, so discovery reduces to entropy minimization on the
positive unit sphere under quadratic affinity constraints. The analytic and
symmetry reductions leave $2^\ell-1$ free probability coordinates, at most
$15$ for $\ell\le4$, with no remaining dependence on $n$. We had intended a
neural parameterization of $G$, but at this point a generic network was
unnecessary, so we searched the reduced space directly by multistart
sequential quadratic programming: $28$ instance solves over $\ell\le4$ and
$\varepsilon\in\{0.30,0.20,0.10,0.05\}$, $13{,}600$ local optimizations in
total. No run produced a configuration below the first MRRW benchmark, and
the recovered optima matched the quasirandom tensor-product family to
within $10^{-11}$ coordinatewise (Supplementary
Methods~\ref{sup:search}).

\subsection*{The tensorization obstruction}

The repeated null result is explained by an inequality that holds for every
admissible configuration. Let $\varepsilon\in(0,1)$, let $V$ be the set of
shifts at which the spectral walk condition of
\cite{coregliano2025higher} is verified, and let
\[
h(\varepsilon)=H_2\!\left(\tfrac12\bigl(1-\sqrt{1-\varepsilon^2}\bigr)\right)
=H_2\!\left(\tfrac12-\sqrt{\delta(1-\delta)}\right),
\qquad
\delta=\tfrac{1-\varepsilon}{2},
\]
where $H_2$ is the binary entropy function; the second form is the first
MRRW expression. Two properties of the construction enter: the admissible
shifts span $\mathbb F_2^\ell$, and at each such shift a finite even
spectral-walk order $m$ satisfies a lower bound of the form
$W_m(G,v)\ge c\,\varepsilon^{m}$ with $c\ge1$, where $W_m$ is the paired
walk weight appearing in the configuration asymptotics of
\cite{coregliano2025higher}. Comparing $W_m$ termwise with the multinomial
expansion of $\Phi(G,v)^m$ gives $W_m(G,v)\le\Phi(G,v)^m$, so the walk
condition forces the affinity floor $\Phi(G,v)\ge\varepsilon$ at a fixed
$m$, with no exchange of the $n\to\infty$ and $m\to\infty$ limits.
Conditioning each coordinate on the others turns that floor into an entropy
floor: the conditional entropy of $X_i$ given $X_{-i}$ equals
$\mathbb E[h(2\sqrt{p_Y(1-p_Y)})]$ while $\Phi(G,e_i)$ equals
$\mathbb E[2\sqrt{p_Y(1-p_Y)}]$ for the same conditional marginals, so
convexity of $h$ and Jensen's inequality give
$\mathsf H(X_i\mid X_{-i})\ge h(\Phi(G,e_i))$. The entropy chain rule then
yields
\[
\mathsf H(G)\;\ge\;\sum_{i=1}^{\ell}h\bigl(\Phi(G,e_i)\bigr)\;\ge\;\ell\,h(\varepsilon),
\qquad\text{hence}\qquad
J_\ell(G)\;\ge\;h(\varepsilon).
\]
The tensor-product configuration
$G_\varepsilon=\operatorname{Bernoulli}(p_\varepsilon)^{\otimes\ell}$
attains equality in the relaxed affinity-constrained problem. Under the stronger finite-walk hypotheses (A1)--(A2), however, the entropy-based rate estimate is strictly larger than $h(\varepsilon)$. Thus the computationally recovered tensor-product structure is explained by the tensorization inequality within the analyzed relaxation. This does not establish feasibility of $G_\varepsilon$ for the original finite-walk construction, nor does it rule out improvements through configurations or dual constructions that lie outside these hypotheses. Full hypotheses, proofs and scope are given in
Supplementary Methods~\ref{sup:obstruction}
(Theorem~\ref{thm:obstruction} and Corollary~\ref{cor:no-improvement}); the
algebraic reductions underlying the theorem were verified against
independent brute-force implementations
(Supplementary Methods~\ref{sup:verification}).

The obstruction constrains only the configuration. It does not constrain
alternative sign functions $\phi$, spectral functions supported on several
configuration orbits, other uses of the partial-Fourier dual families, or
general dual feasible solutions, and the higher-order hierarchy itself
remains complete. The computational search therefore localized the missing
asymptotic freedom in those components rather than in $G$.

%% file: Section3/03methods.tex
\section{Sign-uncertainty discovery and certification}

\subsection*{Fourier-eigenfunction formulation}

We use
\[
\widehat f(\xi)=\int_{\mathbb R^d}
f(x)e^{-2\pi i\langle x,\xi\rangle}\,dx
\]
and consider the $+1$ sign-uncertainty constant $A_{+}(d)$ over non-zero even integrable functions satisfying $\widehat f=f$, $f(0)=0$, and $f(x)\ge0$ for all sufficiently large $|x|$ \cite{bourgain2010principe, cohn2019optimal}. To construct upper bounds we restrict the search to finite-dimensional radial Laguerre
polynomial--Gaussian eigenspaces. With
\[
u=\pi |x|^2,\qquad \alpha=\frac d2-1,
\]
the functions
\[
\psi_n(u)=L_n^{(\alpha)}(2u)e^{-u}
\]
satisfy $\widehat{\psi_n}=(-1)^n\psi_n$. Hence the truncated $+1$ eigenspace is
\[
V_N=\operatorname{span}\{\psi_0,\psi_2,\ldots,\psi_{2N}\}.
\]
Writing $f=P(u)e^{-u}$ converts the Fourier constraint into the linear requirement that $P$ lie in the corresponding even Laguerre span. 
For integer $d$ these generalized Laguerre polynomials have rational coefficients, which enables exact reconstruction.

\subsection*{From neural proposals to a convex search}

The initial discovery stage searched a broader Fourier-eigenfunction family with trainable Gaussian scales and contact locations. Local and neural optimization rapidly improved the objective but became unreliable when the coefficient systems approached condition numbers of $10^{17}$--$10^{18}$.
Independent multiprecision reevaluation showed that apparently improving trajectories were not reproducible. We therefore treated learned widths and contacts only as proposals and used this failure to reconsider the search coordinates rather than to infer that the function class had been exhausted.

For the pure Laguerre family the rescaled function is simply the polynomial
$P(u)=e^u f(u)$. Since $e^{-u}>0$, fixing a candidate last-sign point $u_0$
reduces admissibility to
\[
P(0)=0,\qquad P(u)\ge0\quad (u\ge u_0).
\]
The feasible set in the Laguerre coefficients is convex and monotone in $u_0$. 
Thus, the non-convex optimization over roots and contact locations can be replaced by a one-dimensional bisection in $u_0$ with a convex semi-infinite feasibility problem at each step.

\subsection*{Adaptive semi-infinite feasibility}

At each bisection point we solve a margin-maximizing LP on an adaptive grid. A positive linear normalization,
\[
\int_{u_0}^{\infty}P(u)e^{-u}\,du=1,
\]
removes the conic scale, and the leading Laguerre coefficient is constrained to have the sign required for positivity at infinity. The grid is only a relaxation. After each LP solve, roots of $P'$ are located at high precision; any missed negative minimum is inserted as a cutting plane and the LP is resolved. Near the fixed-degree feasibility boundary, the active constraints identify candidate double contacts.

The exchange loop is used to locate the boundary and its contact structure, not to certify the quoted upper bound. The underlying space is not a Haar system, so uniqueness of the active contact set is not assumed. Instead, the convex semi-infinite formulation supplies a numerical bracket for sharpness within the chosen finite-dimensional space, while exact admissibility is established independently downstream. Failure of the numerical oracle can therefore reduce sharpness but cannot make an invalid certificate valid.

\subsection*{Exact rational reconstruction and verification}

Let $\tau_1<\cdots<\tau_m$ denote the active tangencies returned by the convex search, rounded to nearby rational values. We reconstruct $P$ exactly in the Laguerre basis by imposing
\[
P(0)=0,\qquad
P(\tau_i)=P'(\tau_i)=0\quad(1\le i\le m),
\]
together with a fixed positive top coefficient. In the square certificates, $n_{\mathrm b}=2m+2$ Laguerre coordinates are retained. Solving over $\mathbb Q$ gives
\[
P(u)=\prod_{i=1}^{m}(u-\tau_i)^2R(u),
\qquad
R\in\mathbb Q[u],
\]
with every polynomial division checked to have zero remainder.

Positivity on the complete ray is then decided independently of the LP grid. We verify that the leading coefficient of $R$ is positive, that $R(\bar u)>0$, and that a Sturm chain has the same sign-variation count at $\bar u$ and at $+\infty$. Hence $R$ has no zero on $[\bar u,\infty)$ and therefore
\[
P(u)\ge0\qquad(u\ge\bar u).
\]
Since $\widehat f=f$ and $f(0)=0$ hold exactly by construction,
$f=P(u)e^{-u}$ is admissible and
\[
A_{+}(d)\le\sqrt{\bar u/\pi}.
\]

For the $d=1$ degree-$38$ certificate, $m=9$ and
\[
\bar u=
\frac{514\,997\,856\,761}{5\cdot10^{11}},
\]
which yields
\[
A_{+}(1)\le0.572588699\ldots.
\]
The $d=2$ degree-$22$ certificate is constructed analogously and gives 
$A_{+}(2)\le0.756206236\ldots$. 
The final certificate contains only rational Laguerre data and integer/rational Sturm arithmetic; floating-point computation is confined to discovery of a useful contact structure.

\subsection*{Scope}

The finite-dimensional convex reformulation does not contradict the
Cohn--Dong--Gon\c{c}alves asymptotic ceiling for sublinear-degree
polynomial--Gaussian families \cite{cohn2022sign}. It shows instead that, at small dimension, the classical family had not been numerically exhausted by the previous root-parameterized searches. Detailed tripwires, discovery experiments, convergence diagnostics, contact degeneracies and higher-dimensional numerical limits are given in the Supplementary Methods.

%% file: 05methods-llm.tex
\section{Use of large language models}

A large language model (Claude, Anthropic; Opus 4.5, 4.8 and Fable5) was used throughout the exploratory development of the computational pipeline as an interactive programming and research-assistance tool. Its contributions were concentrated in software engineering (making the scripts into a package) and numerical debugging: proposing and revising implementation strategies, generating and refactoring code, and diagnosing numerical failure modes in the certification pipeline, including floating-point underflow (e.g. float-64 versus float-32 in GPU representations) and dynamic-range collapse, basis-conditioning and rank-instability effects, quadrature overflow, and (formulation) inconsistencies between the discovery and certification stages (see Supplementary Tables \ref{tab:v9changelog} and \ref{tab:crt-history}). The model also served as a discussant for (formal) mathematical framing.

The mathematical content of this work is the author's. The variational formulation, the reduction underlying the pipeline, the certification scheme, and all theorem statements were conceived, derived, and verified by the author; the model's role with respect to the mathematics was limited to exposition, sanity-checking, and the surfacing of relevant literature, all of which the author confirmed. Where the model proposed an algorithmic optimisation, the proposal was accepted only after the author verified it. For reference; PyTorch and Flint are not installed on Anthropic servers so no computations were done by a LLM.

All model outputs were critically evaluated by the author, who bears sole responsibility for the correctness of every claim. All reported numerical bounds are the output of deterministic software. Crucially, the language model is not invoked by the final certification pipeline: the validity of certified bound rests entirely on the exact-arithmetic certificate and its independent verification, which can be checked without reference to any part of the development process. A structured contribution-provenance record, and the complete certification and verification software are provided in the Supplementary Methods.

%% file: Section1/Maynard_SuppMethods_v8.tex

\section{Supplementary Methods: Maynard--Tao sieve}
\label{sup:maynard}

\subsection{Variational definitions and normalization}
\label{sup:maynard_variational_details}

Let
\[
\mathcal R_k
=
\left\{
\mathbf t\in\mathbb R_{\ge0}^{k}:
\sum_{i=1}^{k}t_i\le1
\right\}.
\]
For an admissible square-integrable trial function \(F\) supported on
\(\mathcal R_k\), define
\[
I_k(F)
=
\int_{\mathcal R_k}F(\mathbf t)^2\,d\mathbf t
\]
and
\[
J_k^{(m)}(F)
=
\int_{\mathcal R_{k-1}}
\left(
\int_0^{1-\sum_{i\neq m}t_i}
F(\mathbf t)\,dt_m
\right)^2
d\mathbf t_{-m}.
\]
The Maynard variational constant is
\[
M_k
=
\sup_{F\neq0}
\frac{\sum_{m=1}^{k}J_k^{(m)}(F)}{I_k(F)}.
\]
For symmetric \(F\), all marginal functionals are equal and
\[
M_k
=
k\sup_{F\neq0}\frac{J_k^{(1)}(F)}{I_k(F)}.
\]
Throughout the computational pipeline, the reported Rayleigh quotient is normalized according to this convention. We also use the classical Cauchy--Schwarz \cite{polymath2014variantsselbergsievebounded} ceiling
\[
M_k\le \frac{k}{k-1}\log k,
\]
as a validation constraint on both single-channel and full numerical quotients.
The enlarged-support calculations use the corresponding Polymath
\(\varepsilon\)-variant, with the same discovery--certification separation.
Because the exact normalization of the enlarged polytope and the external
conversion from a certified variational threshold to a numerical \(H_m\)
bound involve the sieve theorem and an admissible \(k\)-tuple rather than
the NeuralCert certificate itself, those ingredients are kept distinct from
the variational certificate. In particular, the certificate proves the
stated lower bound for the explicit trial function; the final \(H_m\)
conversion additionally uses the external Maynard--Tao/Polymath theorem and
a separately supplied admissible-tuple diameter.

\subsection{From exact replication to neural discovery}
\label{sup:maynard_history}

The Maynard experiments began from an exact finite-dimensional control rather than from a neural model. Following Lemmas~8.1 and~8.2 of \cite{maynard2015small}, we used the symmetric polynomial family
\[
F(\mathbf t)
=
\sum_{\ell=1}^{d}
a_\ell
(1-P_1(\mathbf t))^{b_\ell}
P_2(\mathbf t)^{c_\ell},
\qquad
P_j(\mathbf t)=\sum_{i=1}^{k}t_i^j,
\]
for which both \(I_k\) and \(\sum_mJ_k^{(m)}\) reduce to rational quadratic forms in the coefficient vector \(a\). The finite-dimensional optimization is therefore a generalized Rayleigh--Ritz problem. This approach is consistent with Maynard's observation that an extremizer should satisfy an eigenfunction equation for the underlying integral operator. The neural stage can therefore be interpreted as a nonlinear numerical search for a dominant variational eigenfunction, rather than as unconstrained function fitting.
As a first control we reproduced Maynard's explicit \(k=5\) trial
\[
F(\mathbf t)
=
(1-P_1)P_2
+\frac{7}{10}(1-P_1)^2
+\frac{1}{14}P_2
-\frac{3}{14}(1-P_1),
\]
for which
\[
Q_5(F)=\frac{1417255}{708216}>2.
\]
We subsequently reproduced the degree-constrained generalized eigenvalue calculation at \(k=105\). Exact rational assembly was retained throughout; the floating-point generalized eigensolve was used only to locate a Ritz vector, which was then rationalized and reevaluated in the exact quadratic forms.

The exact polynomial controls were next used to calibrate Monte Carlo estimators before any black-box optimization. Uniform simplex samples were generated from normalized independent exponential variables. For the marginal functional, two conditionally independent samples \(U,U'\) were drawn on the same fibre. This gives the unbiased product estimator
\[
\mathbb E[
F(U,\mathbf W)F(U',\mathbf W)\mid\mathbf W]
=
\left(
\frac1C\int_0^CF(u,\mathbf W)\,du
\right)^2,
\]
where \(C=1-\sum W_i\). Squaring a single inner sample would add its conditional variance and bias the estimate upward. Agreement of the Monte Carlo estimates with the exact polynomial values, expressed through standardized residuals, was used as the first stochastic validation gate.

Only after these controls were in place was the fixed polynomial ansatz replaced by a trainable symmetric function. Early neural models used scaled power-sum coordinates
\[
\phi_j(\mathbf t)=k^{j-1}\sum_i t_i^j
\]
and optimized a stochastic Rayleigh quotient. These experiments established
that the optimizer could recover useful symmetric structure without being
given the final polynomial representation, but also exposed the variance,
conditioning and representation-transfer problems that motivated the
deterministic separable pipeline used in the main analysis.

Finally, further control showed that enlargement of the admissible domain does not by itself improve a fixed trial function. For example, Maynard's Eq.~(8.16), optimized for the simplex geometry, decreases from approximately $2.00$ to $1.66$ when evaluated at $\varepsilon=1/4$. The gain from enlarged support therefore requires re-optimization to the modified geometry rather than simple reuse of a simplex-optimized extremizer. This provided an additional motivation for learning the trial function directly in each $\varepsilon$-geometry.

Recent computational work has also explored heuristic modifications of
Maynard-type multidimensional sieves, including dynamically enlarged support
regions and random-matrix-inspired perturbations \cite{Ghadimi2025}, but
their predicted prime-gap improvements remain heuristic. In contrast, the
present framework separates exploratory optimization from certification of
an explicit variational trial function and invokes only independently stated
sieve-theoretic inputs for the final number-theoretic consequence.

\subsection{Exploratory representation experiments}
\label{sup:maynard_exploratory_experiments}

Before arriving at the final separable channel representation, five candidate compression mechanisms, most of which were ultimately rejected. These experiments were diagnostic rather than proof components: their role was to identify which representations preserved the variational structure well enough to justify further development. All bounds in the paper are certified independently of
the outcomes described below.

\subsubsection{Experiment 1: partitions}

Exact Rayleigh--Ritz optimization in the full monomial-symmetric basis is effective at small $k$, but encounters a symmetric curse of dimensionality: although the partition basis is $L^2$-dense, the number of basis elements up to degree $D$ grows as the restricted partition count $p_k(D)$ and becomes rapidly prohibitive. This motivated attempts to retain only a small, variationally relevant subset of partitions.

Three selectors were tested: exact one-vector residual greedy expansion, a global floating-point Ritz oracle, and neural coefficient extraction through a power-sum-product representation. The residual score
\[
\eta_\lambda
=
\frac{
|\langle m_\lambda,(L-\lambda_S)F_S\rangle|
}{
\|m_\lambda\|_2
}
\]
improved the recorded exploratory bound by only about
\(2.5\times10^{-3}\) after adding approximately 30 highest-scoring
partitions, much less than adding a complete degree shell. The global floating-point and neural coefficient selectors were additionally destabilized by the ill-conditioned change of coordinates into the monomial-symmetric basis. The experiment therefore rejected sparse single-partition selection as an effective acceleration mechanism: the missing variational gain was collective and strongly correlated rather than localized in a few coordinates.

This failure suggested that the relevant structure was not sparse in the partition basis but instead closer to a low-complexity product or tensor representation, motivating the tensor-decomposition experiments below.

\subsubsection{Experiment 2: symmetric tensor decompositions}

The failure of sparse partition selection suggested that the optimizer might
be compact in a tensor representation. We therefore fitted the symmetric CP
family
\[
F_{\rm CP}(\mathbf t)
=
\sum_{q=1}^{r}c_q\prod_{i=1}^{k}\phi_q(t_i)
\]
to the certified \(k=5\), degree-\(10\) Ritz optimizer. The relative empirical
\(L^2\) reconstruction errors were
\[
\begin{array}{c|rrrrrr}
r & 1 & 2 & 3 & 4 & 5 & 6\\
\hline
\mathcal E_r
&0.1887&0.1536&0.1032&0.1028&0.1076&0.1089
\end{array}
\]
and plateaued near \(10\%\) beyond rank three. The corresponding Monte Carlo
Rayleigh estimates were statistically compatible with the exact target but
noisy and non-monotone; values above the target were within the quoted Monte
Carlo uncertainty and were never treated as improved lower bounds. Thus,
increasing the symmetric CP rank beyond \(r=3\) produced essentially no
further reduction in reconstruction error over the tested range. This
rejected the simplest low-rank CP compression as a complete representation,
but not the broader tensor-network hypothesis.

\subsubsection{Experiment 3: structural signatures of the CP residual}

For the rank-six symmetric CP fit, the relative empirical reconstruction
error was
\[
\frac{\|F^\star-F_{\mathrm{CP},6}\|_2}{\|F^\star\|_2}
=0.10903.
\]
We projected the residual onto the full degree-10 monomial-symmetric basis.
The largest fitted coordinate signatures occurred in the one-part directions
\[
(4),(3),(2),(5),(1)
\]
and the two-part directions
\[
(2,1),(3,1),(1,1).
\]
Since one-part monomial-symmetric functions are power sums, this pointed to
low-order power-sum structure in the component not captured by the CP fit.

As a diagnostic, we formed the diagonal coordinate proxy
\[
\zeta_\lambda=b_\lambda^2\|m_\lambda\|_{2,\mathrm{emp}}^2.
\]
Approximately \(83\%\) of this proxy was associated with one-body partitions
and \(17\%\) with two-body partitions, with negligible proxy weight at higher
body counts. This must not be interpreted as an orthogonal \(L^2\)-energy
decomposition:
\[
\sum_\lambda\zeta_\lambda=6.71\times10^6,
\]
whereas the empirical squared residual norm was only \(1.43\), revealing
strong cancellation in the non-orthogonal partition basis.

The diagnostic nevertheless suggested that the residual was concentrated in
structurally simple low-order directions rather than in a diffuse high-body
tail. Since such components are in principle representable within a
sufficiently flexible CP family, the observed rank saturation pointed more
naturally to an optimization or conditioning limitation than to an immediate
expressivity obstruction. One plausible explanation, explored in subsequent
experiments, was that these directions were difficult to resolve by
gradient-based fitting against Monte Carlo noise.

\subsubsection{Experiment 4: explicit power-sum corrections}

The residual signature motivated the augmented model
\[
F_{\rm CP+P}
=
F_{\rm CP}
+\beta_0+\sum_{j=1}^{J}\beta_j\widetilde P_j,
\qquad
\widetilde P_j
=
P_j/\mathbb E[P_j].
\]
This experiment did not yield a reliable acceleration. The revised
implementation failed first to reproduce its own CP-only baseline, the Monte Carlo estimator for \(J\) became high variance for some augmented functions, and the explicit power-sum branch overlapped strongly with directions already represented by the CP factors. Consequently, the small observed changes with increasing \(J\) could not be separated cleanly from optimization and implementation signal. 
\subsubsection{Experiment 5: body-restricted partition spaces}

The final pre-separable experiment restricted the exact polynomial trial
space simultaneously by total degree and partition length, or ``body count''.
For
\[
\mathcal V(\{D_b\})
=
\operatorname{span}\{
m_\lambda:\ell(\lambda)=b,\ |\lambda|\le D_b
\},
\]
matrix entries were assembled by enumerating coalescences of the positive
parts of two partitions rather than their full permutation orbits. The
restriction is therefore specified by a degree cap \(D_b\) for each body
count \(b\); at \(k=50\) no run included \(b>5\), so every space reported
below is a genuine body-restricted subspace of the full
monomial-symmetric space of the same total degree. At \(k=5\), where
\(\ell(\lambda)\le k=5\) holds automatically, the unrestricted
degree-\(10\) space contained \(113\) partitions and reproduced
\[
M_5\ge 2.0071443298,
\]
approaching Bogaert's Krylov value \(2.0071451444\). Restricting to
partition length at most two reduced the space to \(36\) basis elements and
gave a strictly weaker value, confirming the coalescence construction while
showing that low-body truncation alone does not recover the full small-\(k\)
optimizer.

At \(k=50\), exact matrix assembly remained feasible, but the generalized
eigensolve became the dominant limitation. Rank-revealing whitening retained
only a fraction of the nominal Gram modes; for a common degree cap of
\(15\), \(18\), \(20\) and \(22\) on every body count the reductions were
\[
408\to191,\qquad
769\to278,\qquad
1125\to344,\qquad
1601\to421.
\]
The retained block consistently approached the imposed conditioning
threshold, \(\kappa\sim10^{12}\), indicating that the effective ceiling was
set by float64 numerical resolution rather than by the nominal size of the
partition space. This effect is visible in the non-monotone certified ladder
obtained from these increasing nominal spaces:
\[
3.36957,\qquad
3.36618,\qquad
3.36800,\qquad
3.36435 .
\]
This does not contradict Rayleigh--Ritz monotonicity: the nominal trial
spaces are nested, whereas the independently whitened effective subspaces
need not be. Exact reevaluation still certifies the returned trial vector,
but cannot restore directions discarded before the eigensolve.

The precision dependence is particularly clear in a matched
\(408\)-dimensional calculation at degree cap \(15\). Retaining the
complete pencil at high precision gave the certified value
\[
M_{50}\ge 3.6347,
\]
whereas float64 spectral whitening retained only \(191\) modes and yielded
\(3.3696\). Clearly, the lost variational gain was already present in the
polynomial trial space but fell below the numerical resolution of the
whitened Gram pencil. The loss is strongly space-dependent rather than
systematic: on the \(441\)-dimensional one--two-body space the
high-precision solve gave \(3.1862092\) against \(3.1862074\) from \(134\)
whitened modes, so there the discarded directions carried almost no
variational weight. A larger space admitting three-body partitions up to
degree \(20\) similarly reached \(3.3904\) with all \(678\) modes retained
at high precision, while larger float64-whitened spaces did not
systematically improve the bound. Supplementary
Table~\ref{tab:precision-vs-dimension-condensed} summarizes representative
calculations.

\paragraph{Why float64 whitening was insufficient in the monomial-symmetric basis.}
One numerical difficulty in the early polynomial calculations was severe ill-conditioning of the denominator Gram matrix in the monomial-symmetric basis. The matrix is positive definite on a linearly independent trial space, but after conversion to double precision its smallest eigenvalues can fall below the numerical resolution of the eigensolver. Whitening then became problematic because the transformation
\[
W = U_{\mathrm{keep}}\operatorname{diag}(\sigma_i^{-1/2})
\]
amplified perturbations associated with small Gram eigenvalues, while directions below the chosen threshold were removed entirely. So, the optimization is no longer performed over the nominal polynomial space, but only over its numerically resolved spectral image. Exact rational re-evaluation of the returned Ritz vector still yields a valid lower bound for that particular vector, but it cannot recover variationally relevant directions that were discarded during whitening. With increasing $k$, larger float64-whitened spaces could produce weaker certified bounds than smaller spaces solved at high precision, demonstrating the numerical precision bottleneck. This motivated representations with Gram structure without spectral truncation.

\renewcommand{\tablename}{Supplementary Table}
\setcounter{table}{0}

\begin{table}[H]
\centering
\caption{Representative \(k=50\) partition-space calculations. The trial
space is specified by a degree cap per body count \(b\) (partition length);
no run included \(b>5\). Certified values refer to the returned explicit
trial function. High-precision solves retain the full trial space, whereas
float64 whitening removes Gram directions below the numerical resolution
threshold.}
\label{tab:precision-vs-dimension-condensed}
\begin{tabular}{llllr}
\toprule
Trial space (degree cap by body count) & Eigensolver & Dimension & Retained & Certified \\
\midrule
\(b\le5\), all degrees \(\le15\)
    & high precision & 408 & 408 & \(3.6347\) \\
\(b\le5\), all degrees \(\le15\)
    & float64 whitening & 408 & 191 & \(3.3696\) \\
\(b\le2\), degrees \(\le40\)
    & float64 whitening & 441 & 134 & \(3.1862\) \\
\(b\le3\), degrees \((40,40,20)\)
    & high precision & 678 & 678 & \(3.3904\) \\
\(b\le5\), all degrees \(\le22\)
    & float64 whitening & 1601 & 421 & \(3.3644\) \\
\bottomrule
\end{tabular}

\par\smallskip
\parbox{0.95\linewidth}{%
\footnotesize
\textit{Note.} Bogaert reported \(M_{50}\ge 3.9358660346\).
}
\end{table}

The experiment therefore identified two distinct obstacles in the
monomial-symmetric representation: rapid growth of the partition basis and
loss of numerically resolvable Gram directions. The limiting issue at
\(k=50\) was not expressive power of the polynomial space itself, but the
combination of basis growth and near-degeneracy. This motivated the move to
representations whose complexity is controlled before the generalized
eigensolve and whose structure is more directly compatible with exact
certification.

\begin{figure}[H]
\centering
\begin{tikzpicture}[
    node distance=1.7cm and 2.0cm,
    every node/.style={font=\small},
    box/.style={
        draw,
        rounded corners,
        align=center,
        minimum width=3.5cm,
        minimum height=1.0cm
    },
    arrow/.style={->, thick}
]

\node[box] (poly) {
Full symmetric-polynomial space\\
expressive and \(L^2\)-dense
};

\node[box, below left=of poly] (growth) {
Partition growth\\
\(\dim \mathcal V_D \sim p_k(D)\)
};

\node[box, below right=of poly] (gram) {
Gram near-degeneracy\\
loss of float64-resolved modes
};

\node[box, below=2.0cm of poly] (wall) {
Practical \(k=50\) wall:\\
large nominal space, limited effective rank
};

\node[box, below=of wall] (pivot) {
Pivot to compact separable /\\
product-structured representations
};

\draw[arrow] (poly) -- (growth);
\draw[arrow] (poly) -- (gram);
\draw[arrow] (growth) -- (wall);
\draw[arrow] (gram) -- (wall);
\draw[arrow] (wall) -- (pivot);

\end{tikzpicture}
\caption{
Scaling obstruction of the monomial-symmetric representation. Increasing
polynomial degree expands an expressive symmetric trial space, but at
\(k=50\) this produces both rapid partition growth and increasingly
ill-conditioned Gram pencils. The resulting loss of numerically resolved
directions motivates the later separable representation.
}
\label{fig:partition-wall}
\end{figure}

\subsubsection{Consequence for the final representation}

The abovementioned five experiments ruled out several superficially natural routes: sparse partition support, very-low-rank symmetric CP compression, unqualified interpretation of residual coefficients, additive power-sum patches, and brute-force growth of ill-conditioned polynomial spaces. The common failure mode was not lack of a strong candidate function but loss of structure during representation or optimization. This motivated the separable-channel formulation used in the main Methods, in which a small dictionary of one-dimensional functions is optimized directly, the generalized Ritz coefficients are profiled out, and independent certification operates only after the discovered function has been reduced to an explicit mathematical representation.

Scripts for the exploratory analyses described above are not included in NeuralCert. Together with additional exploratory outputs including full residual rankings, body-count and parity summaries, and repeated fit diagnostics, they are available upon request.

\subsection{Neural channel model}
\label{sup:channel_model}

The discovery stage uses the symmetric separable representation
\[
F_{\theta,c}(t_1,\ldots,t_k)
=
\sum_{j=1}^{m}c_j\prod_{i=1}^{k}g_{\theta,j}(t_i).
\]
The one-dimensional channels are written
\[
g_{\theta,j}(x)
=
r_{\theta,j}(x)e^{-\rho_jx},
\qquad \rho_j>0,
\]
with trainable rates initialized across a broad geometric range. The residual factors are produced
jointly by a shared multilayer perceptron with channel-specific outputs and input
\[
(x,e^{-kx}).
\]
For the positive-channel model,
\[
r_{\theta,j}(x)=
\exp\!\left(f_{\theta,j}(x,e^{-kx})\right),
\qquad
g_{\theta,j}(x)=
\exp\!\left(f_{\theta,j}(x,e^{-kx})-\rho_jx\right).
\]
Positivity permits cancellation-free log-domain convolution. The architecture differs from a conventional DeepSets model: permutation invariance follows from the product structure of the separable expansion, while the neural network supplies a dictionary of one-dimensional factors.

\begin{lstlisting}[style=neuralcert,
  caption={Positive neural channel dictionary. A shared network produces the
  residual factors, while the exponential envelopes are represented
  analytically.},
  label={lst:channel-model}]
PROCEDURE CHANNELS(x, k, theta, eta, rho_max)
    rho     <- MIN(SOFTPLUS(eta), rho_max)
    features <- STACK(x, EXP(-k*x))
    log_raw <- SHARED_MLP_theta(features)
    g       <- EXP(log_raw - OUTER(x, rho))
    RETURN g, log_raw, rho
END

FUNCTION TRIAL(t, c, theta)
    RETURN SUM_j c[j] * PRODUCT_i CHANNEL_j(t[i], theta)
END
\end{lstlisting}

\subsection{Factored convolution powers and Gauss--Jacobi recursion}
\label{sup:factored_convolution}

For a channel pair $(j,\ell)$, write
\[
h_{j\ell}(x)
=
r_{j\ell}(x)e^{-a_{j\ell}x},
\qquad
r_{j\ell}(x)=r_{\theta,j}(x)r_{\theta,\ell}(x),
\qquad
a_{j\ell}=\rho_j+\rho_\ell.
\]
Every convolution power is represented as
\[
\nu^{(j\ell)}_p(s)
:=
h_{j\ell}^{*p}(s)
=
s^{p-1}e^{-a_{j\ell}s}
e^{\sigma^{(j\ell)}_p}\psi^{(j\ell)}_p(s).
\]
The initialization is
\[
\sigma^{(j\ell)}_1=0,
\qquad
\psi^{(j\ell)}_1(s)=r_{j\ell}(s).
\]
We suppress the channel-pair superscript below.

For positive integers $p$ and $q$, substitution into
\[
\nu_{p+q}(s)=\int_0^s\nu_p(x)\nu_q(s-x)\,dx
\]
and the change of variables $x=su$ give
\[
\nu_{p+q}(s)
=
s^{p+q-1}e^{-as}e^{\sigma_p+\sigma_q}
\int_0^1
\psi_p(su)\psi_q(s(1-u))
u^{p-1}(1-u)^{q-1}\,du.
\]
Let
\[
B(p,q)
=
\int_0^1u^{p-1}(1-u)^{q-1}\,du
=
\frac{\Gamma(p)\Gamma(q)}{\Gamma(p+q)}.
\]
We retain the beta factor in the residual recursion and therefore use
\[
\sigma_{p+q}=\sigma_p+\sigma_q,
\]
together with
\[
\psi_{p+q}(s)
=
B(p,q)\int_0^1
\psi_p(su)\psi_q(s(1-u))
\frac{u^{p-1}(1-u)^{q-1}}{B(p,q)}\,du.
\]
Thus the integral is an expectation with respect to the normalized
$\operatorname{Beta}(p,q)$ density. The factor $B(p,q)$ occurs once,
in $\psi_{p+q}$, and is not also included in $\sigma_{p+q}$.

The expectation is evaluated by Gauss--Jacobi quadrature:
\[
\psi_{p+q}(s)
\approx
B(p,q)\sum_{r=1}^{n_q}
\widehat w_r^{(p,q)}
\psi_p(su_r^{(p,q)})
\psi_q(s(1-u_r^{(p,q)})),
\]
where
\[
0<u_r^{(p,q)}<1,
\qquad
\widehat w_r^{(p,q)}>0,
\qquad
\sum_{r=1}^{n_q}\widehat w_r^{(p,q)}=1.
\]
The nodes and normalized weights incorporate the beta-weight
concentration associated with the convolution orders $p$ and $q$.

An optional numerical normalization can transfer residual amplitude
into the scalar scale. If $\widetilde\psi_{p+q}$ denotes the residual
produced by the recursion above, choose a positive scalar
$C_{p+q}$, independent of $s$, and set
\[
\psi_{p+q}(s)
=
\frac{\widetilde\psi_{p+q}(s)}{C_{p+q}},
\qquad
\sigma_{p+q}
=
\sigma_p+\sigma_q+\log C_{p+q}.
\]
This leaves the represented convolution power unchanged. Taking
$C_{p+q}=1$ recovers the unnormalized recursion. For positive residuals,
the factor $B(p,q)$ is handled through $\log B(p,q)$ in the log-domain
recursion; any additional normalization subtracts $\log C_{p+q}$ from
the log residual and adds it to $\sigma_{p+q}$.

The power $h^{*(k-1)}$ is assembled by a binary addition chain,
reducing convolution depth from $k-2$ sequential stages to
$O(\log k)$.

\subsection{Log-domain recursion}
\label{sup:log_convolution}

For positive channels define $\xi_p(s)=\log\psi_p(s)$. The recursion becomes
\[
\xi_{p+q}(s)
=
\log B(p,q)
+
\operatorname{logsumexp}_r
\left[
\log\widehat w_r^{(p,q)}
+\xi_p(su_r^{(p,q)})
+\xi_q(s(1-u_r^{(p,q)}))
\right].
\]
Values that would underflow in linear arithmetic remain representable as large negative logarithms, and the positive quadrature sum is evaluated without cancellation. This is important because loss of a small component during an early convolution can propagate into entire entries of the subsequent Gram matrices.

\begin{lstlisting}[style=neuralcert,
  caption={Binary-chain evaluation of the factored convolution power in
  logarithmic coordinates.},
  label={lst:factored-log-convolution}]
PROCEDURE LOG_CONVOLUTION_POWER(log_raw, j, l, k)
    xi[1](s) <- log_raw_j(s) + log_raw_l(s)

    FOR EACH (p,q) IN BINARY_ADDITION_CHAIN(k-1)
        (u,w) <- NORMALIZED_GAUSS_JACOBI(p,q)
        FOR EACH representation node s
            a <- LOG_POSITIVE_INTERPOLATE(xi[p], s*u)
            IF q = 1
                b <- EVALUATE_BASE_PAIR_EXACTLY(s*(1-u), j, l)
            ELSE
                b <- LOG_POSITIVE_INTERPOLATE(xi[q], s*(1-u))
            END
            xi[p+q](s) <- LOG_BETA(p,q)
                          + LOGSUMEXP_r(LOG(w[r])+a[r]+b[r])
        END
    END
    RETURN xi[k-1]
END
\end{lstlisting}

\subsection{Representation and integration grids}
\label{sup:discovery_grids}

Residual functions are stored on an endpoint-inclusive Chebyshev--Lobatto grid
\[
0=s_0<s_1<\cdots<s_{N-1}=L,
\qquad L=1+\varepsilon.
\]
Every convolution target $s_i u_r$ or $s_i(1-u_r)$ lies in $[0,s_i]$, so no extrapolation is required.
Four numerical point sets are kept distinct: the Chebyshev--Lobatto representation grid;
Gauss--Jacobi rules for the beta-weighted convolution; Gauss--Legendre rules for the primitives
\[
G_j(y)=\int_0^y g_j(x)\,dx,
\qquad
H_{j\ell}(y)=\int_0^y g_j(x)g_\ell(x)\,dx;
\]
and a geometrically graded outer mesh for the final entries of $A$ and $B$. The latter must resolve
the pair-dependent factor $x^{k-2}e^{-a_{j\ell}x}$.

\begin{lstlisting}[style=neuralcert,
  caption={Construction of the four numerical point sets used by the
  discovery evaluator.},
  label={lst:numerical-grids}]
PROCEDURE BUILD_GRIDS(k, epsilon, N, n_jacobi, n_inner, n_outer)
    L <- 1 + epsilon;  U <- 1 - epsilon
    s[i] <- L*(1-COS(pi*i/(N-1)))/2,  i=0,...,N-1

    FOR EACH (p,q) IN BINARY_ADDITION_CHAIN(k-1)
        jacobi[p,q] <- NORMALIZED_GAUSS_JACOBI(n_jacobi,p,q)
    END
    inner <- GAUSS_LEGENDRE(n_inner, interval=[0,1])

    x_star <- MIN(X,(k-2)/a_ref)
    edges  <- GEOMETRIC_MESH(x_star/32,X)
    outer  <- COMPOSITE_GAUSS_LEGENDRE(edges,n_outer)
    // The same positive-weight outer rule is used for every channel pair.
    RETURN s, jacobi, inner, outer
END
\end{lstlisting}

\subsection{Preservation of Gram structure}
\label{sup:gram_preservation}

The exact denominator matrix is a Gram matrix and therefore satisfies
$A\succeq 0$. To preserve this structure numerically, a target
$z\in[s_i,s_{i+1}]$ is interpolated using the positive stencil
\[
\psi(z)
\approx
\omega_i(z)\psi(s_i)+\omega_{i+1}(z)\psi(s_{i+1}),
\qquad
\omega_i,\omega_{i+1}\geq 0,
\qquad
\omega_i+\omega_{i+1}=1.
\]
In log-space, this interpolation is evaluated by a two-term log-sum-exp.
Interpolating $\psi$ itself, rather than $\log\psi$, is what makes the
stencil multilinear and hence compatible with the Gram form. When the inner
quadrature nodes are common to all channel pairs, the shared positive-weight
outer rule yields the discrete Gram representation
\[
A^{(N)}_{j\ell}
=
\sum_\alpha w_\alpha
\Phi_{\alpha j}\Phi_{\alpha\ell},
\qquad
w_\alpha>0,
\]
and hence $A^{(N)}\succeq 0$ up to floating-point round-off. The shared mesh
therefore preserves a common discrete Gram representation rather than merely
improving entrywise quadrature accuracy.

In the saturated regime, where the substitution used for the channel
primitives places the inner nodes at pair-dependent positions, the common
discrete Gram representation is preserved only up to inner-quadrature error.
Numerical adequacy is then assessed using the positive-semidefiniteness,
two-grid and refinement diagnostics described below; these diagnostics are
safeguards rather than rigorous bounds on the quadrature error. Since the
reported bounds come from the exact certifier and not from the discrete
pencil, a failure of this structure degrades the discovery objective rather
than the validity of any certificate.

\begin{lstlisting}[style=neuralcert,
  caption={Log-domain assembly of one pair of Gram entries. With common inner
  quadrature nodes, the shared positive-weight outer rule gives an exact
  discrete Gram representation; in the saturated regime, pair-dependent inner
  quadrature preserves this structure only up to inner-quadrature error.},
  label={lst:gram-pair-assembly}]
FUNCTION LOG_INTERPOLATE(z, xi)
    (i,i+1,omega0,omega1) <- POSITIVE_TWO_POINT_STENCIL(z)
    RETURN LOGADDEXP(LOG(omega0)+xi[i], LOG(omega1)+xi[i+1])
END

PROCEDURE GRAM_PAIR(j, l)
    a  <- rho[j] + rho[l]
    xi <- LOG_CONVOLUTION_POWER(log_raw,j,l,k)
    log_Ghat, log_Hhat <- GAUSS_LEGENDRE_PRIMITIVES(j,l)
    // Saturated inner rules may depend on the channel pair (j,l).

    log_A[j,l] <- LOGSUMEXP_x((k-2)*LOG(x)-a*x
                    + LOG_INTERPOLATE(x,xi)
                    + LOG(L-x)+LOG_INTERPOLATE(L-x,log_Hhat)+LOG(w_x))
    log_B[j,l] <- LOGSUMEXP_x((k-2)*LOG(x)-a*x
                    + LOG_INTERPOLATE(x,xi)+2*LOG(L-x)
                    + LOG_INTERPOLATE(L-x,log_Ghat_j)
                    + LOG_INTERPOLATE(L-x,log_Ghat_l)+LOG(w_x))
    RETURN log_A[j,l], log_B[j,l]
END
\end{lstlisting}

\subsection{Diagonal equilibration and rank-revealing Ritz solution}
\label{sup:ritz_details}

With
\[
D=\operatorname{diag}(A_{11},\ldots,A_{mm}),
\]
the generalized Rayleigh quotient is invariant under
\[
A\mapsto D^{-1/2}AD^{-1/2},
\qquad
B\mapsto D^{-1/2}BD^{-1/2}.
\]
After equilibration, write
\[
\widetilde A=U\operatorname{diag}(\alpha_1,\ldots,\alpha_m)U^\top.
\]
Directions satisfying
\[
\alpha_i\leq\tau_{\rm rank}\max_j\alpha_j
\]
are removed before whitening. On the retained space,
\[
W=
U_{\rm keep}\operatorname{diag}(\alpha_i^{-1/2}),
\]
and the leading eigenpair of $W^\top\widetilde BW$ determines the profiled quotient and mixing
vector. No ridge term is introduced; rank truncation is treated as a numerical-resolution decision
and monitored by rank-stability tests.

\begin{lstlisting}[style=neuralcert,
  caption={Diagonally equilibrated, rank-revealing generalized Ritz solve.},
  label={lst:rank-revealing-ritz}]
PROCEDURE TOP_RITZ(log_A, log_B, tau_rank)
    d[j]   <- log_A[j,j]
    Ahat[j,l] <- EXP(log_A[j,l]-(d[j]+d[l])/2)
    Bhat[j,l] <- EXP(log_B[j,l]-(d[j]+d[l])/2)

    (alpha,U) <- SYMMETRIC_EIGENDECOMPOSITION(Ahat)
    keep <- {i : alpha[i] > tau_rank*MAX(alpha)}
    REQUIRE keep is nonempty
    W <- U[:,keep] * DIAG(alpha[keep]^(-1/2))

    (lambda,z) <- LEADING_EIGENPAIR(TRANSPOSE(W)*Bhat*W)
    v <- W*z
    v <- v/SQRT(TRANSPOSE(v)*Ahat*v)
    RETURN k*lambda, v, CARDINALITY(keep), d
END
\end{lstlisting}

\subsection{Hellmann--Feynman gradients and optimization}
\label{sup:optimization_details}

Numerical solution of the generalized Ritz problem may require removal of
directions associated with numerically negligible eigenvalues of
$A(\theta)$. Let
\[
\mathcal S(\theta)
\]
denote the retained coefficient subspace after this rank-revealing
truncation, and define the corresponding numerical Ritz value by
\[
\widehat R(\theta)
=
\max_{\substack{v\in\mathcal S(\theta)\\v\ne0}}
k\frac{v^\top B(\theta)v}
       {v^\top A(\theta)v}.
\]
Because $\mathcal S(\theta)$ may change with $\theta$, the map
$\widehat R(\theta)$ need not be differentiable when a singular direction
crosses the truncation threshold. Moreover, the standard
Hellmann--Feynman or envelope identity does not automatically apply across
such a parameter-dependent change of retained subspace.

We therefore freeze the retained subspace during each differentiable
optimization block. At outer iteration $s$, let
\[
\mathcal S_s:=\mathcal S(\theta_s)
\]
and define the fixed-subspace profile
\[
R_s(\theta)
=
\max_{\substack{v\in\mathcal S_s\\v\ne0}}
q_v(\theta),
\qquad
q_v(\theta)
=
k\frac{v^\top B(\theta)v}
       {v^\top A(\theta)v}.
\]
Let $v_s\in\mathcal S_s$ be the normalized leading generalized Ritz
vector at the anchor point:
\[
B(\theta_s)v_s
=
\lambda_s A(\theta_s)v_s
\quad\text{on }\mathcal S_s,
\qquad
v_s^\top A(\theta_s)v_s=1.
\]
Provided that $A(\theta)$ remains positive definite on $\mathcal S_s$ in
a neighbourhood of $\theta_s$ and that the leading restricted
generalized eigenvalue is simple, the fixed-subspace envelope identity
gives
\[
\nabla_\theta R_s(\theta_s)
=
\nabla_\theta q_{v_s}(\theta_s).
\]
Thus the Ritz vector may be frozen when differentiating the quotient, but
only while the retained subspace is held fixed.

For every fixed $v\in\mathcal S_s$,
\[
q_v(\theta)\le R_s(\theta),
\]
and at the anchor point
\[
q_{v_s}(\theta_s)=R_s(\theta_s).
\]
Consequently, an inner step satisfying
\[
q_{v_s}(\theta_{s+1})
\ge
q_{v_s}(\theta_s)
\]
obeys the fixed-subspace minorize--maximize relation
\[
R_s(\theta_{s+1})
\ge
q_{v_s}(\theta_{s+1})
\ge
q_{v_s}(\theta_s)
=
R_s(\theta_s).
\]
This monotonicity statement concerns $R_s$, with $\mathcal S_s$ fixed. It
does not by itself imply monotonicity of the adaptively truncated value
$\widehat R(\theta)$ after the numerical rank is recomputed.

Neural channels are first optimized with Adam using learning-rate warm-up,
cosine decay, and gradient clipping. Candidate solutions are subsequently
polished by block minorize--maximize L-BFGS. Within each such block, the
retained subspace, truncation mask, and frozen Ritz vector are held fixed.
The rank-revealing decomposition is recomputed only between blocks.

After this recomputation, a candidate outer step is accepted only if the
adaptively evaluated Ritz value satisfies
\[
\widehat R(\theta_{s+1})
\ge
\widehat R(\theta_s)-\tau_{\mathrm{acc}},
\]
where $\tau_{\mathrm{acc}}$ is a prescribed numerical acceptance
tolerance. If a change in retained rank causes this test to fail, the
candidate is rejected and the step size or optimization block is
restarted. This acceptance test is an explicit numerical safeguard; it is
not presented as a consequence of the Hellmann--Feynman identity.

Finite-difference gradient tests are performed with the retained subspace
and truncation mask frozen. A finite-difference stencil that changes the
retained numerical rank crosses a nonsmooth truncation boundary and is
therefore not used as a test of the Hellmann--Feynman gradient. Such cases
are instead flagged as rank-transition events. Away from these events, the
test checks the complete differentiation path through the neural channels,
factored convolution recursion, integration, and matrix assembly.

Optimization proceeds through increasing channel ranks
\[
m_1<\cdots<m_S
\]
and representation resolutions
\[
N_1<\cdots<N_S.
\]
Existing channels are copied when the channel rank is increased, and new
channels are initialized as perturbed duplicates. Because pairwise
assembly scales as $m(m+1)/2$, more iterations are allocated to the
smaller early models.

\begin{lstlisting}[style=neuralcert,
  caption={Profiled optimization with a frozen Ritz vector.},
  label={lst:profiled-optimization}]
FUNCTION FROZEN_QUOTIENT(theta, v)
    (A_theta,B_theta) <- ASSEMBLE_GRAM(theta)
    RETURN k*(TRANSPOSE(v)*B_theta*v)/(TRANSPOSE(v)*A_theta*v)
END

PROCEDURE OPTIMIZE_CHANNELS(theta)
    FOR EACH increasing pair (channel rank m, grid size N)
        GROW_DICTIONARY_BY_PERTURBED_DUPLICATES(theta,m)
        FOR EACH Adam iteration, with warm-up and cosine decay
            IF this is a reporting point
                VALIDATE_CURRENT_STATE(theta)   // before the update
            END
            (_,v) <- TOP_RITZ(ASSEMBLE_GRAM(theta))
            theta <- CLIPPED_ADAM_ASCENT(
                         FROZEN_QUOTIENT(theta,STOP_GRADIENT(v)))
        END
    END

    REPEAT for each block minorize-maximize step
        (_,v) <- TOP_RITZ(ASSEMBLE_GRAM(theta))
        theta <- LBFGS_MAXIMIZE(FROZEN_QUOTIENT(theta,STOP_GRADIENT(v)))
        VALIDATE_CURRENT_STATE(theta)
        UPDATE incumbent only if its two-grid score improves
    END
    RETURN theta
END
\end{lstlisting}

\subsection{Validation-aware model selection}
\label{sup:validation_details}

At each reporting point the candidate must satisfy:
\begin{enumerate}
\item agreement between the training objective and an independently recomputed Rayleigh quotient;
\item numerical positive semidefiniteness of the denominator matrix;
\item retention of a prescribed minimum fraction of the channel span after rank truncation;
\item stability of the leading quotient under a more conservative rank threshold;
\item compliance of every diagonal ratio $kB_{jj}/A_{jj}$ with the applicable rigorous upper bound;
\item compliance of the full quotient with the corresponding rigorous upper bound; and
\item reproduction on a finer representation grid within the prescribed snapshot tolerance.
\end{enumerate}
When two grids are used during training, candidates are ranked by
\[
R_{\rm score}=\min\{R_N,R_{N_{\rm hi}}\}.
\]
A substantial negative eigenvalue of $A$ is treated as failure of the numerical representation rather
than repaired by discarding the negative branch. Rank-stability tests separately guard against
whitening directions below the accuracy of the assembled matrices.

\begin{lstlisting}[style=neuralcert,
  caption={Validation and two-grid model selection applied at each reporting
  point.},
  label={lst:validation-gates}]
FUNCTION RIGOROUS_CEILING(k, epsilon)
    IF epsilon = 0
        RETURN k*LOG(k)/(k-1)
    ELSE
        RETURN k*LOG(2*k-1)/(k-1)
    END
END

PROCEDURE VALIDATE_CURRENT_STATE(theta)
    R_bound <- RIGOROUS_CEILING(k,epsilon)

    (R_train,A,B,v,rank) <- INDEPENDENT_RECOMPUTATION(theta)
    REQUIRE AGREES(R_train,FROZEN_QUOTIENT(theta,v))
    REQUIRE MIN_EIGENVALUE(A) >= -tau_psd*MAX_EIGENVALUE(A)
    REQUIRE rank >= required_fraction*NUMBER_OF_CHANNELS(theta)
    REQUIRE STABLE_UNDER_STRONGER_RANK_TRUNCATION(A,B)
    REQUIRE EACH k*B[j,j]/A[j,j] <= R_bound
    REQUIRE R_train <= R_bound

    R_hi <- RECOMPUTE_ON_GRID(theta,1.5*N)
    REQUIRE RELATIVE_DIFFERENCE(R_train,R_hi) <= tau_snapshot
    score <- MIN(R_train,R_hi)
    STORE theta only if score improves the validated incumbent
    RETURN score
END
\end{lstlisting}

\subsection{Closed-form controls and representation refinement}
\label{sup:refinement_details}

Every run begins with closed-form controls. For $g(x)=1$ and $\varepsilon=0$,
\[
R_{\rm const}=\frac{2k}{k+1},
\]
and for $g(x)=x$,
\[
R_x=\frac{3k}{3k+1}.
\]
The first exercises convolution, outer integration and scaling; the second also probes interpolation
of a nonconstant residual convolution shape.

After training, candidates are evaluated on grids $N$, $1.25N$ and $1.5N$ and fitted to
\[
R_N=R_\infty-cN^{-p}.
\]
The fitted limit is diagnostic only. the export is marked unconverged when the sequence is noncontracting,
violates a rigorous ceiling or fails the refinement tolerance. No extrapolated discovery value is
treated as a certified lower bound.

The final separable discovery solver emerged through a sequence of numerical and structural corrections. Supplementary Table~\ref{tab:v9changelog} summarizes the principal failure modes identified during development and the corresponding fixes. These historical iterations are not separate proof components; they document how the numerical invariants enforced in the final
solver were established.

Supplementary Table~\ref{tab:v9changelog} lists the different development steps for the discovery script. 

\begingroup
\small
\begin{longtable}{@{}p{0.5cm}p{6.5cm}p{7cm}@{}}
\caption{Reconstructed development history of the \texttt{maynard\_separable\_rayleigh\_v9\_factored}
lineage (neural discovery). Version numbers map to successive saved iterations.}\label{tab:v9changelog}\\
\hline
\textbf{\#} & \textbf{Correction} & \textbf{Cause / symptom} \\
\hline
\endfirsthead
\multicolumn{3}{c}{\tablename\ \thetable\ -- continued}\\
\hline
\textbf{\#} & \textbf{Correction} & \textbf{Cause / symptom} \\
\hline
\endhead
\hline \multicolumn{3}{r}{\emph{continued on next page}}\\
\endfoot
\hline
\endlastfoot
1 & Baseline v9 (factored): $\nu_p=s^{p-1}e^{-as}e^{\sigma}\psi_p$ per pair; stable \texttt{inv\_softplus}; per-pair max-rescale; adaptive log-space GL outer windows; diagonal preconditioning ($\operatorname{diag}A=1$); \texttt{parts()} protocol. & Foundation. \texttt{inv\_softplus} fixes $\mathrm{softplus}^{-1}=\log(\mathrm{expm1})\to+\infty$ for rate $\ge 710$ (all $k\ge355$); per-pair rescale replaces a global max that flushed small pairs to round-off. \\
2 & $\sigma$ double-count fix: divide exact base re-evaluations by $e^{-\sigma_1}$. & Exact base evals carried $e^{\sigma_1}$ while $\sigma_1$ was also added in \texttt{new\_sig}; the spurious factor is not of Gram form $\Rightarrow$ indefinite $A$. \\
3 & \texttt{train()} validation-ordering fix: validate before \texttt{opt.step()}. & $R_{\text{train}}$ read pre-step, \texttt{report()} post-step; \texttt{agree} measured the Adam step, rejecting nearly all snapshots (fallback to iter 700). \\
4 & Log-space channel protocol: \texttt{log\_parts}; \texttt{-{}-channel-sign positive/free}. & Prerequisite for the log-space recursion; needs sign-definite channels. \\
5 & Log-space $\xi$ recursion: $\xi_{p+q}=\log B(p,q)+\operatorname{logsumexp}(\log\hat w+\xi_p+\xi_q)$. & The $(\min\psi_1/\max\psi_1)^{k-1}$ collapse drove $\psi$ past the float64 floor; round-off put random signs on diagonal pairs ($\psi\ge0$), making $A$ indefinite. \\
6 & Shared graded outer mesh (one geometric Gauss rule for all pairs). & Per-pair windows gave each entry its own rule $\Rightarrow A$ the Gram matrix of nothing; $\lambda_{\min}/\lambda_{\max}\approx-1$ and did not improve under refinement. \\
7 & Log-singularity fix: $G(y)=y\,\hat G(y)$; interpolate $\log\hat G$, reattach $\log y$ analytically. & $G(y)$ vanishes linearly $\Rightarrow\log G\to-\infty$ (clamped); interpolating that oscillated $\Rightarrow B\approx10^{45}$. \\
8 & Gram-preserving positive interpolation: two-point stencil, weights $\ge0$, via \texttt{logaddexp}; \texttt{-{}-interp positive/spectral}. & Interpolating the log gives $\prod\psi_i^{M_i}$ (geometric), not $\sum M_i\psi_i$ (linear) $\Rightarrow$ not multilinear $\Rightarrow$ not Gram $\Rightarrow$ indefinite $A$. \\
9 & Hard gates (PSD, $R\le k$, gradient) + removed false ``PSD by construction'' + parameter-independent mesh. & $R=118,\,979,\,5.2\times10^{8}$ were optimised toward; positive-entrywise $\ne$ PSD; moving mesh gave a spurious HF-gradient gap. \\
10 & Resolution diagnostics ($\xi$ span/jump), $g=x$ preflight ($3k/(3k{+}1)$), interp-dependent $n_{\text{rep}}$, sharp bound $\tfrac{k}{k-1}\ln k$, Richardson gate, $N$ provenance. & $g\equiv1$ is exact for any $N$ in positive mode (blind to under-resolution); $R\le k$ too loose to catch a $6\to49$ climb. \\
11 & Refinement-stable snapshots + strikes + $1.5\times$ validation twin. & Optimiser maximises $R_{\text{true}}+\mathrm{error}(\theta)$; refinement-instability is the exploitation signature and must gate snapshots. \\
12 & Two-tier snapshots + $\min(R_N,R_{1.5N})$ ranking + fitted-order Richardson + grid continuation + result reorder. & One tolerance did two jobs and rejected every honest state; trained-channel order $p\approx1.6\ne2$, so fixed-order Richardson mis-extrapolated. \\
13 & $\varepsilon>0$ bound fix: $R_{\text{bound}}=k$ for $\varepsilon>0$; tight bound only for $\varepsilon=0$. & $M_{k,\varepsilon,1/2}$ legitimately exceeds the vanilla bound ($4.02>3.99$ at $k=50$), so the tight bound falsely aborted. Later aligned this with proposition 6.5 from Polymath8b to 
\[
R_{\mathrm{bound}}=
\frac{k}{k-1}\log(2k-1)
\qquad (\varepsilon>0).
\] There is a more stringent bound in Polymath8b Remark 6.6 but this suffices as hard-coded discovery threshold / safeguard. \\
14 & Per-pair diagnostic + \texttt{-{}-dump-violation} + \texttt{-{}-patience}; $k$-aware panels; excursion guard; \texttt{polish()} incumbent seeding. & Fixed 8-panel mesh under-resolved $x^{k-2}$ at $k\ge250$ ($e^{152}$ vs $2n_{\text{out}}{=}128$); \texttt{polish()} started at $-\infty$ so step~1 installed $R=70$ as best. \\
15 & \texttt{-{}-trunc-floor} prevention + rank-stability detection + strike-counter fix. & $R=56$ lived in $A$-directions at $\lambda/\lambda_{\max}\sim10^{-6}$, below assembly accuracy $10^{-3}$; whitening amplified error (again!) $\sim10^{3}$. \texttt{strikes} reset before the check $\Rightarrow$ never accrued. \\
16 & Stability recalibration (25\%, floored) + absolute $R\le k$ backstop + excursion baseline anchor + grow-step guard + Richardson already-converged path. & $5\%$ flagged honest wobble; $\text{best}=-\infty$ let a runaway pass at iter 0; a $0.375$ baseline made $3.66$ look $9.7\times$; a post-grow state crashed the run; a converged $k{=}500$ run got a false FAIL. \\
\end{longtable}
\endgroup

\begin{lstlisting}[style=neuralcert,
  caption={Closed-form preflights and post-training representation refinement.},
  label={lst:controls-refinement}]
PROCEDURE PREFLIGHT_AND_REFINE(theta,k,epsilon,N)
    REQUIRE NUMERICAL_QUOTIENT(g=1) AGREES WITH CLOSED_FORM_CONSTANT(k,epsilon)
    IF epsilon = 0
        REQUIRE NUMERICAL_QUOTIENT(g=x) AGREES WITH 3*k/(3*k+1)
    END

    FOR Nq IN {N, CEIL(1.25*N), CEIL(1.5*N)}
        R[Nq] <- VALIDATED_QUOTIENT(theta,Nq)
    END
    IF successive differences contract with a common sign
        FIT R[Nq] = R_infinity-c*Nq^(-p)
        REQUIRE 0.8 <= p <= 4
        REQUIRE ABS(R[1.5*N]-R_infinity)/ABS(R_infinity) <= tau_refine
        REQUIRE R_infinity <= RIGOROUS_CEILING(k,epsilon)
    ELSE
        MARK candidate as numerically unresolved
    END
    RETURN R, R_infinity, p
END
\end{lstlisting}

\subsection{Aggregate modular evaluation}
\label{sup:aggregate_modular}

After projection, let $q_j,q_\ell$ be rational channel polynomials and
\[
h_{j\ell}(u)=q_j(u)q_\ell(u).
\]
Rather than reconstructing every exact entry of the Gram matrices, the
aggregate certifier accumulates the weighted pair contributions modulo
machine-word primes:
\[
S_A(p)
=
\sum_{j\leq\ell}
\kappa_{j\ell}c_jc_\ell A^{\rm int}_{j\ell}
\pmod p,
\qquad
S_B(p)
=
\sum_{j\leq\ell}
\kappa_{j\ell}c_jc_\ell B^{\rm int}_{j\ell}
\pmod p.
\]
All polynomial powers, coefficient extractions, pair weights and channel
sums are evaluated in $\mathbb F_p[x]$. The machine-word primes have no denominator occurring in the modular formulas with vanishing modulo
$p$, ensuring that every required inverse in $\mathbb F_p$ is defined.

Polynomial arithmetic is performed with exact compiled integer and modular
kernels. Earlier exact-certification implementations used Kronecker
substitution: integer polynomial coefficients were packed into large
integers, multiplied using GMP-backed arbitrary-precision arithmetic, and
unpacked exactly; polynomial powers were then formed by binary
exponentiation. In the later FLINT-backed implementations, polynomial
multiplication, powering and modular reduction are delegated to FLINT
\cite{flint}, accessed through \texttt{python-flint}. FLINT selects its
internal multiplication strategy according to operand size and
representation. These implementation choices affect runtime and memory use
but not the mathematical certificate, which depends only on the exact
resulting elements of $\mathbb F_p[x]$ and the subsequent CRT
reconstruction.

Only the two scalar aggregate integers $S_A,S_B$ are reconstructed. Their
exact quadratic forms are
\[
c^\top Ac=\frac{S_A}{D_A},
\qquad
c^\top Bc=\frac{S_B}{D_B},
\]
with global denominators fixed before the modular sweep.

Let
\[
N_{\mathrm{CRT}}=ks_{\max},\qquad
r_{\max}=(k-1)s_{\max},\qquad
E=k-1+r_{\max}+m_{\max},
\]
and
\[
L_{\mathrm{cm}}
=
\operatorname{lcm}
\{k-1,\ldots,k-1+r_{\max}+m_{\max}\}.
\]
The resulting global denominators are
\[
D_A
=
(k+N_{\mathrm{CRT}})!\,d_h^k d_c^2,
\]
and
\[
D_B
=
(k-2+r_{\max})!\,
L_{\mathrm{cm}}\,
\rho_d^E T_{\rm den}\,
d_h^{k-1}d_c^2.
\]
All factors are therefore known before reconstruction.

The numerator-side coefficient sum can be expressed as a polynomial
correlation. If $T_{j\ell}(x)$ denotes the pair-dependent
integrated-channel polynomial and
\[
V_p(x)
=
\sum_{v=0}^{r_{\max}+m_{\max}}
(k-1+v)^{-1}x^v
\in\mathbb F_p[x],
\]
then the required shifted inner products are obtained from multiplication
with a reversed form of $T_{j\ell}$, implementing the transposition
principle.

Polynomial powers are formed deterministically from exact arithmetic.
Classical sparse-polynomial powering algorithms provide related
powering recurrences \cite{ProbstAlagar1979}, but in the present
application the factorially weighted pair polynomials are typically dense.
The principal computational gain therefore comes from aggregate modular
evaluation, exact coefficient extraction and CRT reconstruction rather than
from sparse polynomial storage itself.

\subsection{Known denominators and deterministic CRT reconstruction}
\label{sup:deterministic_crt}

The modular stage reconstructs signed integer numerators rather than generic rational numbers.
Rigorous bounds
\[
|S_A|\leq\mathcal B_A,\qquad |S_B|\leq\mathcal B_B
\]
are obtained using
\[
\|h^r\|_1\leq\|h\|_1^r
\]
together with explicit bounds for factorial ratios, powers of the rational support ratio, integrated
channel coefficients and rational mixing weights. If
\[
M_P=\prod_{p\in P}p
\]
and
\[
M_P>2\max(\mathcal B_A,\mathcal B_B),
\]
the centered CRT representative in $(-M_P/2,M_P/2]$ is unique. The implementation uses an
additional safety margin, selecting primes until the product exceeds four times the larger bound.
Held-out primes provide an implementation-level check but are not needed for mathematical uniqueness.

The certified quotient is
\[
R_{\rm cert}
=
k(1+\varepsilon)
\frac{S_B/D_B}{S_A/D_A}.
\]
Once the rational channels and mixing vector are fixed, this value is independent of the neural
discovery stage.

\subsection{Development and validation history of the CRT certifier}
\label{sup:crt-history}

The exact certifier was developed through a sequence of increasingly large
stress tests in which numerical agreement between the floating-point proposal
stage and the exact reconstruction stage was treated as a falsification
criterion. Supplementary Table~\ref{tab:crt-history} records the principal
failure modes and the corresponding corrections. These historical iterations
are not independent proof components; they document how the final
implementation acquired the numerical invariants and trust boundaries used
throughout the reported certifications.

%
%

\newcolumntype{V}{>{\raggedright\arraybackslash}p{0.055\textwidth}}
\newcolumntype{C}{>{\raggedright\arraybackslash}p{0.52\textwidth}}
\newcolumntype{M}{>{\raggedright\arraybackslash}p{0.36\textwidth}}

{\small
\begin{longtable}{@{}VCM@{}}
\caption{Development history of the exact multimodular (CRT) certifier for
$M_{k,\varepsilon,1/2}$. Each version records the change and the failure mode
that motivated it. With one exception (the FFT accuracy floor, v4), every
defect was a silent disagreement between the floating-point pass and the exact
pass at a specific weight vector~$c$, invisible at small~$k$ and exposed only
by scale, conditioning, or polynomial degree.}
\label{tab:crt-history}\\
\toprule
\textbf{Ver} & \textbf{Change} & \textbf{Failure mode / motivation}\\
\midrule
\endfirsthead

\multicolumn{3}{@{}l}{\small\itshape Table~\ref{tab:crt-history}, continued}\\
\toprule
\textbf{Ver} & \textbf{Change} & \textbf{Failure mode / motivation}\\
\midrule
\endhead

\midrule
\multicolumn{3}{r@{}}{\small\itshape continued on next page}\\
\endfoot

\bottomrule
\endlastfoot

v1 &
Initial multimodular backend. $\hat{h}^{\,k-1}$ is never formed over
$\mathbb{Z}$; per $62$-bit prime, one \texttt{nmod} powering plus two
weight-table multiplies execute both linear functionals as single coefficient
extractions: $[x^{n_{\max}}](bh^{k}\cdot W_{\mathrm{rev}})$ for~$A$, and for~$B$
the pair-dependent $t$-weights combined as ${\sim}m_{\max}$ scalar--poly ops
against per-prime tables $G^{(m)}$ folding in
$\Lambda=\operatorname{lcm}(k{-}1,\dots,k{-}1{+}r_{\max}{+}m_{\max})$, the
$\rho$-power clearing, and batched harmonic inverses. Reconstruction is
aggregate-only---two integers $S_A,S_B$ with known-denominator scalings---via
deterministic centered CRT against exact a~priori bounds, plus held-out
verification primes. Float pass decoupled from the exact engine
($A=\int_0^1 h^{*k}$, $B=\int_0^{\rho} h^{*(k-1)}Q_jQ_l(1{-}s)$ by renormalized
binary powering). Greedy $c$-weighted pruning on the float pencil
(\texttt{-{}-prune-tol}); only surviving pairs reach the CRT engine. &
Establishes the architecture: avoid the one unavoidable giant integer,
reconstruct only two aggregate scalars, keep the float discovery pass cheap and
separate from exact arithmetic.\\
\addlinespace

v2 &
\texttt{\_ritz} no longer calls the Cholesky-based generalized solver: it
equilibrates by the diagonal, eigendecomposes the metric~$A$, truncates to
eigenvalues above $10^{-12}$ of the top one, and solves the standard symmetric
problem in whitened coordinates, embedding zeros for dead channels. Self-test
adds exact agreement with \texttt{scipy.eigh(B,A)} on a PD case and a finite
$\lambda$ on a singular metric. &
\texttt{scipy.eigh(B,A)} requires $A$ positive definite; at $k=500$ the float
Gram is numerically semidefinite and Cholesky fails
(\texttt{LinAlgError: leading minor not positive definite}). Restricting to
$\operatorname{range}(A)$ is the correct regularization: null-space components
contribute ${\approx}0$ to $c^{\mathsf T}Ac$.\\
\addlinespace

v3 &
B-side of the float DP rescaled by $u=s/\rho$: run the $(k{-}1)$-fold
convolution power on $h_\rho(t)=h(\rho t)$, so
$B=\rho^{\,k-1}\!\int_0^1 h_\rho^{*(k-1)}(u)\,T(\rho u)\,du$, with $\rho^{k-1}$
carried in the log-scale ($\varepsilon=0$ degenerates to the old path). Driver
interlock: float-predicted $R$ checked against discovery $R_{\mathrm{NN}}$
before any exact work. &
At $k=500$ the certified value collapsed to ${\sim}10^{-12}$ (valid but
worthless). $h^{*(k-1)}$ concentrates like $s^{k-2}$ near $s=1$, so the whole
B-region $s\le\rho=\tfrac{12}{13}$ sat $\rho^k\sim e^{-40}$ below the
renormalized peak---beneath float64 accuracy. $B$ was noise; the pruner dropped
$48$ channels ``for free'' and the exact engine certified a garbage frozen~$c$.\\
\addlinespace

v4 &
Truncated convolution \texttt{\_tconv} forced to \emph{direct}
\texttt{np.convolve}, removing the FFT path. Large-$k$ float-DP regression test
against exact $g\equiv1$ closed forms at $k=200,500$. &
FFT convolution carries an \emph{absolute} error ${\sim}10^{-16}$ relative to
the vector peak, while $h^{*j}$ spans $s^{j-1}$ internally; entries sit up to
$2^{-j}$ below peak and are buried by noise by $j\sim55$, then amplified through
each squaring (closed-form check failed by $e^{+1000}$). Direct summation keeps
per-entry \emph{relative} accuracy; the self-convolution integrand peaks at the
interior $t=s/2$, representable to $j\sim511$ (float64 ceiling near $k\sim1000$).\\
\addlinespace

v5 &
Product-exact prime budgeting (\texttt{gen\_primes\_for\_bound}): accumulate the
actual prime product until it exceeds $4\cdot\text{bound}$ rather than assuming
$61$ bits/prime. Miller--Rabin citation corrected (least strong pseudoprime to
bases $2\ldots37$ is $3.19\times10^{23}$; Sorenson--Webster 2017, OEIS
A014233). Self-test cross-checks generated primes against FLINT's proven
\texttt{fmpz.is\_prime}; new strong-cancellation test ($c=(1,-1)$, near-identical
channels). &
Collaborator review round: tighten certification claims (exact prime count,
correct primality bound, FLINT belt-and-braces), cover the small-signed-aggregate
CRT branch, and \emph{measure} which proposed speedups were real
(\texttt{mul\_low} benched slower than the full FFT multiply; native modular dot
kernel deferred).\\
\addlinespace

v6 &
Added \texttt{-{}-float-only} survey mode: projection $+$ float DP $+$ pruning per
degree, stopping before any exact/CRT work. &
Cheap degree scouting at large~$k$: projection $L_\infty$ and float-predicted
$R$ answer ``is this $k$ reachable at this basis degree'' in minutes per degree,
without spending the prime budget.\\
\addlinespace

v7 &
Stable Legendre-basis float evaluation. \texttt{project\_channels\_v3} returns
both monomial integers (exact engine) and dyadic Legendre numerators (float
engine); the float engine evaluates channels and the B-weight antiderivative via
the shifted-Legendre recurrence, with a non-finite guard that raises with a
diagnosis. &
At $d=160$ the float pass produced NaN: channels were evaluated by
\texttt{np.polyval} on monomial coefficients of size ${\sim}8^{d}\sim10^{150}$,
so Horner formed an $O(1)$ value as an alternating sum of $10^{150}$-scale
terms (${\sim}150$ orders of cancellation), poisoning the convolutions to
$\pm\infty$ and hence $\text{off}_A=\max(-\infty)=-\infty$,
$(-\infty)-(-\infty)=\text{NaN}$. The Legendre recurrence
($|\tilde P_n|\le1$ on $[0,1]$) is well-conditioned at any degree.\\
\addlinespace

v8 &
Auto-escalating rank truncation in the whitened Ritz, anchored to
$R_{\mathrm{NN}}$: \texttt{\_ritz} gains a \texttt{rank\_tol} threaded through
\texttt{greedy\_prune}; the driver escalates
$10^{-10}\!\to\!10^{-8}\!\to\!10^{-6}\!\to\!10^{-4}$ until the predicted $R$ is
physical, or uses a fixed \texttt{-{}-rank-tol}. &
At $d=40$, $k=500$ the float $R$ came out $7.4\times10^{21}$. Gram entries were
cancellation-limited to ${\sim}10^{-8}$ but the rank cut was
$10^{-12}\lambda_{\max}$; directions above the cut yet below the accuracy floor
are noise, and whitening by $1/\sqrt{\text{noise}}$ makes
$\lambda=B_{\text{noise}}/A_{\text{noise}}$ explode. The true pencil is bounded
by $M_{k,\varepsilon}$ (single digits).\\
\addlinespace

v9 &
Worker refactor: the B-functional inner sum
$S_R=\sum_m tw_m/(k{-}1{+}R{+}m)$ computed for all $R$ by \emph{one} correlation
multiply $IT=\mathrm{INV}\cdot TW_{\mathrm{rev}}$ against a single global per-prime
inverse polynomial, then combined in a short loop against the base ladder. The
$(m_{\max}{+}1)$-polynomial $G_{\mathrm{rev}}$ tables are eliminated; per-prime
tables become three ladders regardless of degree. Added \texttt{-{}-resume}:
append-only $(p,S_A,S_B)$ checkpoint. &
At $d=160$, $m_{\max}=2(d{+}1)=322$, so the v1 $G_{\mathrm{rev}}$ tables were
$323\times160\text{k}\approx413$ MB per worker and ${\sim}5\times10^{7}$ Python
ops per prime for table construction alone---dominating the sweep across
${\sim}28$k primes. The correlation form is degree-oblivious; resume makes the
multi-hour sweep preemption-tolerant (spot instances).\\
\addlinespace

v10 &
Exact mantissa-preserving $c$ rationalization plus a frozen-$c$ gate.
$c_q=[\,\mathrm{Fraction}(\mathrm{float}(v))\,]$ (full $53$-bit mantissa per
component) replaces uniform-absolute $\mathrm{dyadic}(v,\text{bits})$ rounding;
the gate re-evaluates the float quotient at the rationalized $c$ and aborts if
it moved by $>1\%$. Resume caches gain a SHA-256 payload fingerprint and refuse
stale residues. &
After an $11$-hour $k=500$ sweep the certified value was $8.9\times10^{-5}$.
Uniform absolute $2^{-48}$ rounding annihilated small-but-essential components
of the balanced Ritz $c$, whose entries span dozens of orders
($c_j\sim A_{jj}^{-1/2}$, $A_{jj}\sim(\sup)^{2k}$, so channels differing by
$1.1$ in sup-norm have diagonals differing by $1.1^{1000}\sim10^{41}$). The
float pass evaluated the true $c$ (${\approx}4.19$); the exact engine certified
the mutilated one.\\
\addlinespace

v11 &
Fixed the Gauss--Jacobi overflow in the shared quadrature builder: replace
\texttt{scipy.roots\_jacobi} with an overflow-free Golub--Welsch construction
(Jacobi tridiagonal eigenproblem; weights $=$ squared first-eigenvector
components normalized to sum~$1$, so the diverging $\mu_0$ cancels).
\emph{(Shared-infrastructure fix; gates whether large-$k$ discovery npz files
can be produced at all.)} &
\texttt{roots\_jacobi} computes $\mu_0=2^{a+b+1}B(a{+}1,b{+}1)$ in linear space;
with $p-1\sim2499$ at $k=2500$ the Beta/Gamma factors overflow float64
($\Gamma(171)$ already overflows), returning NaN weights and crashing the
$g\equiv1$ preflight. Golub--Welsch needs only recurrence coefficients (no
$\Gamma$); validated to ${\sim}10^{-14}$ vs \texttt{roots\_jacobi} for
$p\le60$ and finite/correct to $p=6000$.\\
\addlinespace

v12 &
Diagonal preconditioning of the float Gram: \texttt{matrices()} subtracts
$(\log A_{jj}+\log A_{ll})/2$ from each log-entry (so $\hat A$'s diagonal is
exactly~$1$, off-diagonals $O(1)$ by Cauchy--Schwarz), applies the same
congruence to~$B$, and returns $\log A_{\mathrm{diag}}$. Regression: a $195$-nat
diagonal spread must keep all channels alive with float-vs-CRT agreement
$10^{-5}$. &
A single global $\exp(\log A-\text{off})$ offset underflowed entries
$<\text{off}-745$ to exactly~$0$; at $k=1000$ the diagonal spans ${\sim}6900$
nats, so the Ritz solve collapsed to rank~$2$ (observed: $30$ of $32$ channels
dropped ``for free'' at $0.0$ loss). Per-channel log-diagonal preconditioning
keeps every channel alive regardless of~$k$.\\
\addlinespace


v13 &
Frame conversion of the imported discovery~$c$ on load. When the npz carries
$\log A_{\mathrm{diag}}$, the loader maps discovery's exported
(preconditioned-frame) $c$ to the true channel frame via
$c_{\mathrm{true}}=c\cdot\exp(-\tfrac12\log A_{\mathrm{diag}})\cdot
\text{channel\_norms}$ (both factors were then thought necessary: the
$k$-th-power diagonal scale is independent of the $L^2$ scale).
Later corrected: discovery assembles its Gram pencil on the
$L^2$-normalized channels $g_j/\|g_j\|_2$, and the certifier rebuilds exactly
these channels by projecting $\texttt{g\_fine}/\text{channel\_norms}$. The
coefficient on the normalized channels is therefore
$c_{\mathrm{true}}=\exp(-\tfrac12\log A_{\mathrm{diag}})\,\hat c$, with no
channel-norm factor. A conversion to the raw channels would require
$\|g_j\|_2^{-k}$, not $\|g_j\|_2$. The conversion is now shared by all exact
backends (\texttt{certification/frames.py}). Validity was never affected,
since any explicit rational~$c$ yields a valid lower bound, and the re-Ritz
path (v14) never used the channel-norm factor.
&
At $k=53$ discovery reported $3.994$ but the certifier gave $3.594$. The
pencil's frame-invariant top eigenvalue was $3.9948$; discovery exports $c$ in
the frame of its \emph{preconditioned} Gram ($\hat c=D^{1/2}c_{\mathrm{true}}$),
while the certifier evaluates $c$ on the \emph{true} channels. The deflation
grows with the $\log A_{\mathrm{diag}}$ spread ($308$ nats at $k=53$, ${\sim}6900$
at $k=1000$).\\
\addlinespace

v14 &
Frame conversion of the re-Ritz~$c$ (second instance of the same bug). The
driver captures $\log A_{\mathrm{diag}}$ from the float engine and maps
\texttt{\_ritz}'s output back to the true frame
($c_{\mathrm{true}}=\exp(-\tfrac12\log A_{\mathrm{diag}})\,\hat c$) on the active
channels before rationalizing; the frozen-$c$ gate is made frame-consistent. &
After v13, \texttt{-{}-use-nn-c} still gave $3.594$. \texttt{\_ritz} whitens the
preconditioned Gram internally, so its eigenvector lives in the preconditioned
frame---and the driver certified it as if true-frame. Certifying $\hat c$ as
true-frame gives $3.5938$ (the observed value); the mapped $c$ gives $3.9939$,
and the end-to-end run then certified $3.9944$ (gap $+4\times10^{-4}$ to
discovery). Affected \emph{every} re-Ritz-path certification against a
preconditioned-export npz.\\

\end{longtable}
}

\subsection{Representation-preserving transfer}
\label{sup:representation_transfer}

The discovery solver uses diagonal preconditioning. If
\[
A_{\rm disc}=D^{-1/2}A_{\rm true}D^{-1/2}
\]
and $\widehat c$ is the mixing vector in discovery coordinates, then
\[
c_{\rm true}=D^{-1/2}\widehat c.
\]
Any additional channel normalization is likewise restored before certification.

During projection and floating-point evaluation the channels are represented in a shifted-Legendre
basis,
\[
q_j(u)=\sum_{r=0}^d a_{jr}\widetilde P_r(u),
\]
and evaluated through the three-term recurrence. For exact certification the same rational expansion
is transformed symbolically into integer monomial coefficients with a common denominator. If
\[
h(u)=\sum_{r=0}^s h_ru^r,
\]
define its factorially weighted coefficient polynomial by
\[
\widehat h(x)=\sum_{r=0}^s r!h_rx^r.
\]

For enlarged support,
\[
u=\frac{t}{1+\varepsilon},
\qquad
r_{\varepsilon}=\frac{1-\varepsilon}{1+\varepsilon},
\]
and the associated Jacobian and powers of $1+\varepsilon$ and $r_{\varepsilon}$ are retained explicitly. The
mixing vector is embedded into the rationals by preserving each binary floating-point component as
an exact dyadic rational, avoiding loss of small but cancellation-relevant entries. Before modular
evaluation, the frozen rational vector is re-evaluated in the floating-point quadratic forms and required
to reproduce the Ritz candidate within tolerance. This is a transfer check, not part of the proof.

\subsection{Loss-controlled channel reduction}
\label{sup:pruning_details}

Exact certification cost is quadratic in the number of surviving channels. Starting from the full
floating-point Gram pencil, channels are removed by backward elimination. For each candidate removal,
the generalized Rayleigh--Ritz problem is re-solved. If $\lambda_0$ is the original value and
$\lambda_S$ the value on the active subset,
\[
\frac{\lambda_0-\lambda_S}{\lambda_0}\leq\tau_{\rm prune}
\]
is required. At each step the channel whose removal leaves the largest quotient is discarded.
This differs from coefficient thresholding because importance is measured directly in the reoptimized
variational objective. The reduced rational trial function is then certified exactly; pruning changes
only cost and attainable bound, not mathematical validity.

\renewcommand{\figurename}{Supplementary Figure}
\setcounter{figure}{0}
\begin{figure}[t]
    \centering
    \includegraphics[width=0.85\linewidth]{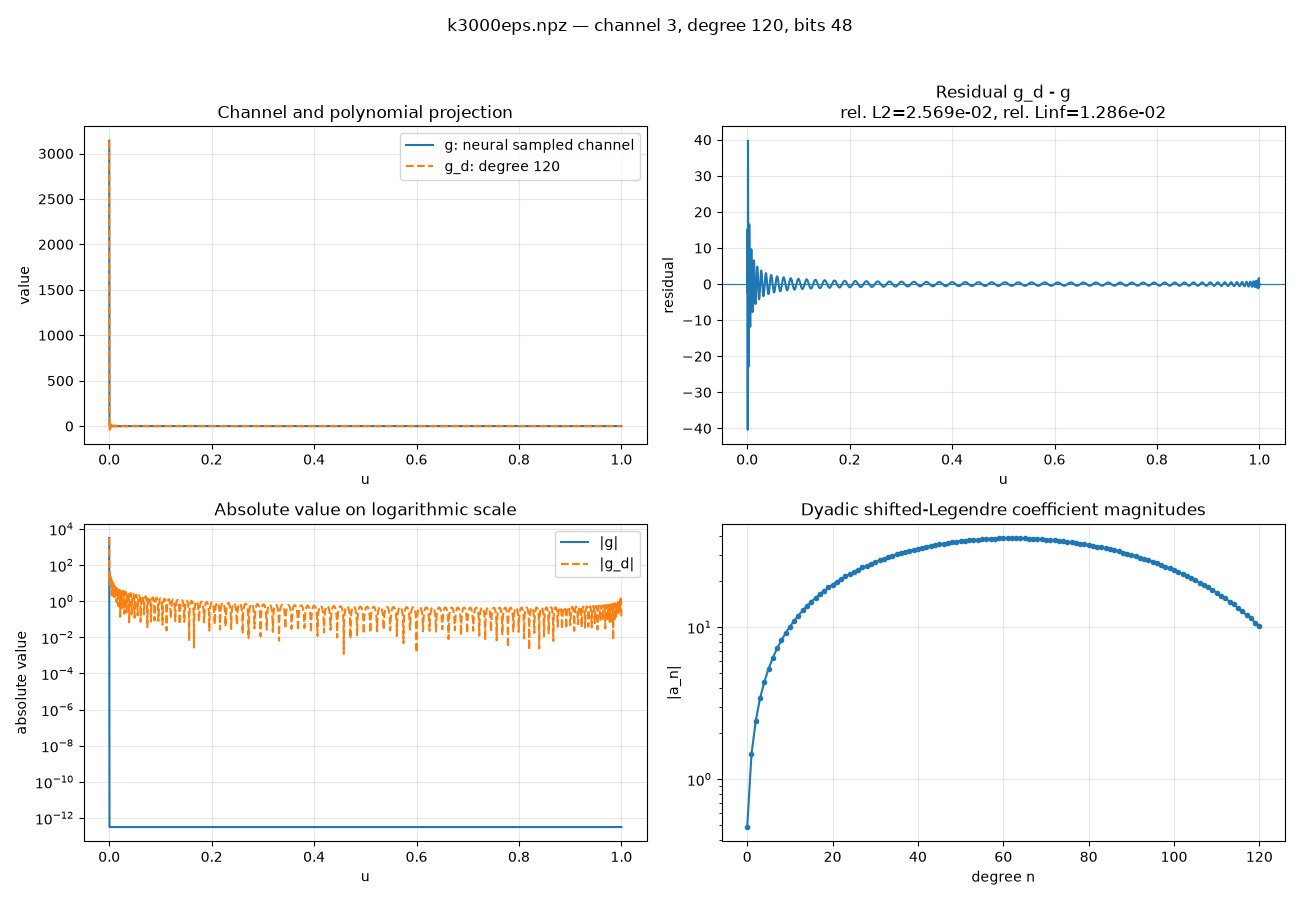}
\caption{\textbf{Diagnostic large-\(k\) failure of the early discovery pipeline.}
For \(k=3600\), the discovery stage reported the spurious value
\(R=8.1375743146\). Nearly all Ritz weight collapsed onto a single channel, shown here together with its degree-120 polynomial projection, residual structure, logarithmic-scale amplitude and dyadic shifted-Legendre coefficient magnitudes. Although the discovery objective was stable under grid refinement and remained below the Cauchy--Schwarz ceiling, the certification failed independent reconstruction and exposed the representation inconsistency; rejecting this candidate.}
    \label{fig:large-k-artifact1}
\end{figure}

\begin{figure}[t]
    \centering
    \includegraphics[width=0.85\linewidth]{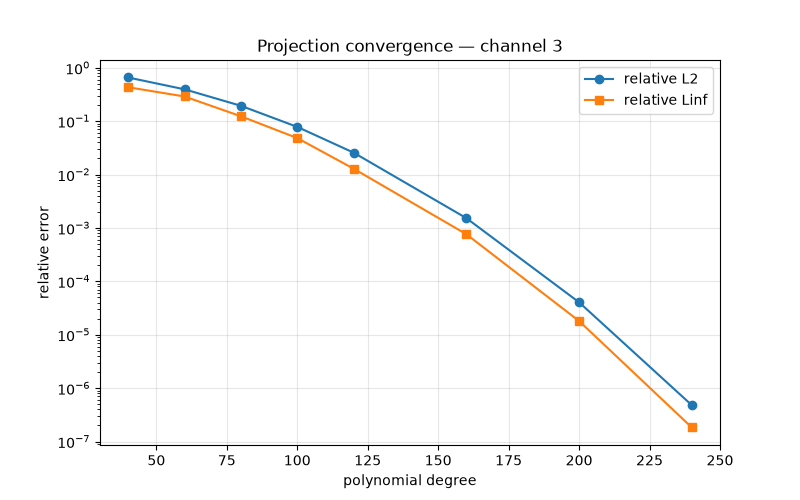}
    \caption{\textbf{Polynomial projection convergence for the single  Ritz weight holding channel of $k=3600$.} Relative \(L^2\) and \(L^\infty\) projection errors decrease with polynomial degree, illustrating that the transfer error can be reduced systematically once a numerically resolved channel has been obtained.}
    \label{fig:large-k-artifact2}
\end{figure}

\subsection{Diagnostic large-\texorpdfstring{$k$}{k} failure and rational distillation}
\label{sup:large-k-failure}

The large-\(k\) experiments exposed a failure mode that was not detected by the original discovery diagnostics. In an early run at \(k=3600\), the multi-channel optimizer collapsed onto a single sharply localized channel and reported a Rayleigh value above \(8\) (See Supplementary Figures \ref{fig:large-k-artifact1} and \ref{fig:large-k-artifact2}). The value was stable under refinement
of the representation grid and remained below the aforementioned ceiling ($R_k \le \frac{k}{k-1}\log k $) so the candidate initially appeared numerically credible. Also, across the validated separable runs, increasing \(k\) was accompanied by a progressive reduction in the number of channels needed to represent the best surviving candidates. This trend was empirical.

Upon certification, it appeared difficult to adequately represent the one-channel discovery solution. For a single exported channel \(q\), the scale-invariant quantity
\[
R_{\mathrm{1D}}
=
k\frac{\left(\int q\right)^2}{\int q^2}
\]
disagreed by orders of magnitude with the stored diagonal ratio
\(kB_{jj}/A_{jj}\). Since multiplication of \(q\) by an arbitrary scalar
cannot change either ratio, this discrepancy cannot be explained by a
normalization or frame convention. For the sharpest channel, the support of
the exported function also gave the Cauchy--Schwarz bound
$R_{\mathrm{1D}} \le k\,\sigma$,
where \(\sigma\) is the effective support measure; the stored discovery value
violated the corresponding scale by more than two orders of magnitude.

The discrepancy was caused by a scale-law inconsistency in the discovery evaluation of (increasingly) narrow channels. The convolution recursion operated in a normalized variable, whereas the exported channel lived in the physical coordinate. In the intermediate-rate regime the error followed approximately
the missing-Jacobian law
\[
R_{\mathrm{stored}}
\approx
\frac{\rho}{k}R_{\mathrm{direct}},
\]
with \(\rho\) the channel rate. At the highest rates a second resolution effect
appeared, and the sharpest exported channel was itself truncated. The optimizer
therefore concentrated its Ritz weight on the channel for which the numerical
artifact was largest. The apparent rank-one structure was consequently not,
by itself, evidence of a genuine extremizer.

The comparison is exact when the channel is genuinely compactly supported:
if $\operatorname{supp}(g)\subseteq[0,\sigma]$ with $k\sigma\le1$, then
$[0,\sigma]^k\subseteq\mathcal R_k$, so $I_k(F)=(\int g^2)^k$ and
$J^{(1)}_k(F)=(\int g)^2(\int g^2)^{k-1}$, giving
$R=k(\int g)^2/\int g^2$ identically. For a channel with only an
\emph{effective} support the identity acquires an uncontrolled tail term, so
the scalar test is applied as an empirical safeguard against gross
inflation rather than as an exact identity.

This observation is important because the artifact was close to the true large-\(k\) scale. The sharp-channel artifact approached a value proportional to the first inner quadrature weight, and for the quadrature order used in the run it crossed \(8\) in the same \(k\)-range in which the true variational constant approaches \(8\). Grid stability and upper bounds were therefore insufficient to identify artifact from valid.

The final discovery solver adds an independent scalar-channel gate. For each
sufficiently localized channel it compares \(kB_{jj}/A_{jj}\) with a direct
one-dimensional quadrature of
\[
k\frac{(\int g_j)^2}{\int g_j^2},
\]
using a quadrature path independent of the convolution assembly. Wide
channels are additionally checked against support-aware Cauchy--Schwarz
bounds. Narrow channels get the hard $\mu \times \text{inflation}$ rejection. Candidates failing these tests are rejected before export.

Conditional on candidates that survive these discovery checks, rational
distillation provides a second and logically independent filter. Richer
multi-cluster rational fits can reduce pointwise approximation error while
failing probability-mass, conditioning or Fourier-stability checks. Because
small one-dimensional transform errors are raised to the \((k-1)\)st power in
the large-\(k\) evaluation, pointwise agreement alone does not preserve the
Maynard functional.

In the stable distilled representation, the surviving family reduces to the
single rational pole
\[
g(t)=\frac{1}{c+(k-1)t}.
\]
Its Rayleigh quotient is then optimized and evaluated independently of the
neural model. Thus the rank-one collapse observed in the failed neural run and
the rank-one structure of the certified rational family have opposite logical
status: the former was an optimization response to a numerical artifact,
whereas the latter survives reconstruction and independent certification.
Neither the failed discovery value nor the neural rank collapse is used as
evidence for the final bound.

\begin{lstlisting}[style=neuralcert,
  caption={Checks used to diagnose unresolved or spuriously inflated neural
  channels before transitioning to the rational large-$k$ route. The all-width
  moment comparison is a conservative empirical discovery safeguard and is
  not part of the exact certificate.},
  label={lst:large-k-gates}]
FUNCTION LOG_INNER_MEAN(log_residual, rate, Y, z_cap)
    IF rate*Y < z_cap
        u <- Y*x_GL
        RETURN LOGSUMEXP_r(LOG(w_GL[r])+log_residual(u[r])-rate*u[r])
    ELSE
        z <- z_cap*x_GL
        // z=rate*u resolves the exponential boundary layer explicitly.
        RETURN LOG(z_cap/(rate*Y))
               +LOGSUMEXP_r(LOG(w_GL[r])+log_residual(z[r]/rate)-z[r])
    END
END

PROCEDURE LARGE_K_CHANNEL_CHECKS(g,A,B)
    rho <- SOFTPLUS(eta)
    (I1[j],I2[j],sigma[j]) <- FINE_GRADED_ONE_DIMENSIONAL_QUADRATURE(g_j)
    R_direct[j] <- k*I1[j]^2/I2[j]
    R_conv[j]   <- k*B[j,j]/A[j,j]

    REQUIRE R_conv[j] <= (1+tau_moment)*R_direct[j] FOR EVERY j
            // conservative empirical discovery filter
    IF k*sigma[j] <= 1 AND R_conv[j] > mu*R_direct[j]
        REJECT unresolved single-channel inflation
    END
    IF k*sigma[j] > 1 AND R_conv[j] > (1+tau_support)*k*sigma[j]
        FLAG the channel as support-inconsistent
    END

    curvature <- MEAN_SQUARED_CURVATURE(LOG_RAW, coordinate=rho*x)
    RETURN curvature                 // Adam search penalty only
END
\end{lstlisting}


\subsection{Neural-to-rational structural distillation}
\label{sup:rational_distillation}

At large $k$, the neural candidate is compressed into the clustered rational family
\begin{equation}
g_{\rm rat}(t)
=
\sum_{a=1}^{r}\sum_{j=1}^{p_{\max}}
u_{a,j}(c_a+nt)^{-j},
\qquad n=k-1.
\end{equation}
For fixed cluster locations $c=(c_1,\ldots,c_r)$, the coefficients enter linearly and are determined
by variable projection,
\begin{equation}
u^\star(c)
=
\arg\min_u
\sum_i\omega_i
\left[
g_{\rm NN}(t_i)
-
\sum_{a,j}u_{a,j}(c_a+nt_i)^{-j}
\right]^2.
\end{equation}
The outer optimizer therefore varies only the nonlinear pole locations. Negligible coefficients are
removed and the highest retained power at each pole determines an inferred multiplicity. Higher
powers are confluent directions, since
\[
\frac{\partial^q}{\partial c^q}(c+nt)^{-1}
=
(-1)^q q!(c+nt)^{-(q+1)}.
\]

Each distilled candidate is subsequently reconstructed in an independent deterministic evaluator.
Fourier-domain cross terms are reduced to
\begin{equation}
F_p(\xi)=\int_0^1e^{i\xi t}(c+nt)^{-p}\,dt ,
\end{equation}
with, for $p\ge2$,
\begin{equation}
F_p(\xi)
=
-
\frac{
e^{i\xi}(c+n)^{1-p}
-
c^{1-p}
-
i\xi F_{p-1}(\xi)}
{n(p-1)}.
\label{eq:Fp-recurrence-supp}
\end{equation}
Because upward recursion can amplify round-off at large $|\xi|$, the Fourier computation is
restricted to a stability window and requires the characteristic function to decay below a prescribed
threshold before truncation. Products involving distinct clusters are reduced by partial fractions;
cross-cluster cancellation is monitored and ill-conditioned configurations are rejected.

Further checks include conservation of probability mass, agreement of the first moment with its
analytic value, control of the negative Fourier-inversion noise floor, rank diagnostics for the Gram
matrix, and the analytic ceiling
\[
R_k\leq\frac{k}{k-1}\log k.
\]
A candidate that violates any check is rejected rather than assigned a numerical objective. A
surviving structure is locally refined in the deterministic evaluator, so the final rational value is a
newly optimized Rayleigh quotient rather than the neural objective or the pointwise fit.

\begin{lstlisting}[style=neuralcert,
  caption={Weighted variable-projection distillation of a neural channel,
  followed by independent Rayleigh optimization and evaluation in the
  resulting clustered rational family.},
  label={lst:rational-refinement}]
PROCEDURE DISTILL_NEURAL_CHANNEL(t, g_NN, omega, initial c, multiplicities mu, k)
    n <- k-1
    PARAMETERIZE c as positive and strictly ordered

    FUNCTION PROFILED_FIT_ERROR(c)
        X[:,a,j] <- (c[a]+n*t)^(-j),  j=1,...,mu[a]
        Xw <- DIAG(SQRT(omega))*X
        yw <- SQRT(omega)*g_NN
        scale each column of Xw to unit norm
        u <- LEAST_SQUARES(Xw,yw)
        undo the column scaling in u
        RETURN SUM_i omega[i]*(g_NN[i]-(X*u)[i])^2
    END

    c <- LBFGS_MINIMIZE(PROFILED_FIT_ERROR,c)
    u <- PROFILE_LINEAR_COEFFICIENTS(c)
    PRUNE terms with negligible scaled contribution
    INFER cluster multiplicities from the retained powers
    RETURN c, u, inferred multiplicities, relative fit error
END

PROCEDURE REFINE_RATIONAL_FAMILY(k, cluster_locations c, multiplicities mu)
    n <- k-1;  L <- 1+epsilon
    basis <- {(c[a]+n*t)^(-j) : j=1,...,mu[a]}
    PARAMETERIZE c as positive and strictly ordered

    FOR EACH channel pair
        IF both channels belong to one cluster
            F[1](theta) <- SINE_COSINE_INTEGRAL_FORMULA(theta,c,n,L)
            FOR p=2,...,p_max
                F[p] <- -(EXP(i*theta*L)*(c+n*L)^(1-p)-c^(1-p)
                          -i*theta*F[p-1])/(n*(p-1))
            END
        ELSE
            REDUCE the product by partial fractions
            REJECT excessive cross-cluster cancellation
        END

        theta_max <- n*(amplification_budget*(p_max-1)!)^(1/(p_max-1))
        REQUIRE the characteristic function is negligible at theta_max
        phi(theta) <- (F_pair(theta)/F_pair(0))^n FOR |theta|<=theta_max
        density <- REAL(FFT(phi))/period
        REQUIRE conservation of mass, first moment, and noise-floor control
        INTEGRATE A[j,l] and B[j,l] against density
    END

    PROFILE the linear coefficients by TOP_RITZ(A,B)
    OPTIMIZE c by Brent (one cluster) or finite-difference LBFGS (multiple)
    RETURN the best candidate re-evaluated on a finer grid
END
\end{lstlisting}

\subsection{Probabilistic reduction for the rational trial}
\label{sup:probabilistic_reduction}

For $k\ge3$, let $n=k-1$, choose an exact rational $c>0$, and define
\begin{equation}
F(t_1,\ldots,t_k)
=
\prod_{i=1}^kg(t_i)\mathbf 1_{\mathcal R_k}(t),
\qquad
g(t)=\frac{1}{c+nt},
\end{equation}
where $\mathcal R_k=\{t_i\ge0:\sum_i t_i\le1\}$. Put $w=g^2$ and
\[
m_0=\int_0^1w(t)\,dt=\frac{1}{c(c+n)}.
\]
Let $S=S_n$ denote the sum of $n$ independent variables with density $w/m_0$ on $[0,1]$.
Then
\begin{equation}
G(\rho)
=
\int_0^\rho g(t)\,dt
=
\frac{\log(1+n\rho/c)}{n},
\qquad
H(\rho)
=
\int_0^\rho w(t)\,dt
=
\frac1n\left(\frac1c-\frac1{c+n\rho}\right).
\end{equation}
Peeling one coordinate from the Maynard integrals gives
\begin{equation}
I_k(F)=m_0^nD,
\qquad
J_k^{(1)}(F)=m_0^nN,
\end{equation}
where
\begin{equation}
N=\mathbb E[G(1-S)^2\mathbf1_{S<1}],
\qquad
D=\mathbb E[H(1-S)\mathbf1_{S<1}],
\end{equation}
and consequently $M_k\ge kN/D$.

Let
\[
\widehat w(\theta)=\int_0^1e^{i\theta t}w(t)\,dt,
\qquad
\varphi(\theta)=\left(\frac{\widehat w(\theta)}{m_0}\right)^n,
\]
and, for $X\in\{G^2,H\}$,
\[
\Psi_X(\theta)=\int_0^1X(\rho)e^{i\theta\rho}\,d\rho.
\]
The required expectation has the Fourier representation
\begin{equation}
I_X
=
\frac1{2\pi}
\int_{\mathbb R}
\varphi(\theta)e^{-i\theta}\Psi_X(\theta)\,d\theta.
\end{equation}
The integrability needed for inversion follows from integration-by-parts decay of both factors.

\subsection{Aliasing bound}
\label{sup:aliasing}

Fix $P>2$ and $\Delta\theta=2\pi/P$. For the full trapezoidal sum $T_X$,
Poisson summation and compact support imply
\begin{equation}
0\le T_X-I_X
\le X_{\max}\mathbb P(S>P-1).
\end{equation}
The one-sided sign follows from nonnegativity of the aliased density. The residual probability is
bounded by certified moments. With $\widetilde M_0=1$ and
\begin{equation}
\widetilde M_r
=
\sum_{j=1}^{r}
\binom{r-1}{j-1}
n\,\mathbb E[t^j]\,
\widetilde M_{r-j},
\end{equation}
the supplied compound-Poisson domination gives
\begin{equation}
\mathbb P(S>y)
\le
\min_{1\le r\le r_{\max}}
\frac{\widetilde M_r}{y^r}.
\end{equation}
$E[S^r]$ expands over set partitions with falling-factorial coefficients $n^{\underline{b}} \leq n^{b}$, and the compound-Poisson moments have exactly $n^{b}$ with all terms nonnegative.
 
\subsection{Fourier truncation bound}
\label{sup:fourier_truncation}

Integration by parts yields, for $\theta\neq0$,
\begin{equation}
\left|\frac{\widehat w(\theta)}{m_0}\right|
\le
\frac{A}{|\theta|},
\qquad
A=\frac{2(c+n)}{c},
\end{equation}
and hence $|\varphi(\theta)|\le(A/|\theta|)^n$. The omitted positive
frequencies are divided into a finite band and an analytic tail. In the finite band, indices are
partitioned into blocks $[a_j,b_j]$ and Arb wide-ball evaluation supplies certified suprema
$\sigma_j$ for $|\widehat w/m_0|$ on each block. Thus
\begin{equation}
|T_X^{\rm band}|
\le
\frac{\Delta\theta}{\pi}X_{\rm int}
\sum_j(b_j-a_j+1)\sigma_j^n.
\end{equation}
For
\[
M_0=
\max\!\left(
\left\lfloor\Theta_{\rm ball}/\Delta\theta\right\rfloor,
n_{\rm near}
\right),
\qquad
\Theta_{\rm ball}=2A,
\]
the remaining tail satisfies
\begin{equation}
|T_X^{\rm tail}|
\le
\frac{\Delta\theta}{\pi}X_{\rm int}
\left(\frac{A}{\Delta\theta}\right)^n
\frac{M_0^{1-n}}{n-1}.
\end{equation}
Taking $M_0\ge n_{\rm near}$ ensures that the analytic tail begins where the computed sum ends.

\subsection{Arb evaluation and directed rounding}
\label{sup:arb_certification}

Every scalar entering the computation is produced by Arb ball arithmetic. The transform
$\widehat w/m_0$ is evaluated from a closed form involving sine and cosine integrals away from the
origin and by direct rigorous integration in the region where the special-function representation is
ill-conditioned. The transforms $\Psi_X$ and the moments entering the aliasing bound are likewise
evaluated by rigorous integration. The pole and logarithmic branch point lie at $t=-c/n<0$, outside
the real integration interval; subdivision is used whenever an evaluation enclosure approaches a
singularity or branch cut.

All error terms remain in ball arithmetic until the final quotient is assembled. Floating-point
exports are explicitly nudged until Arb comparisons prove outward rounding. If
$N_{\rm ball}$ and $D_{\rm ball}$ denote the truncated-sum enclosures and $E_N,E_D$ the certified
totals of the aliasing and truncation errors, then
\begin{equation}
R_{\rm low}
=
\operatorname{lb}\!\left(
\frac{k\,\operatorname{lb}(N_{\rm ball}-E_N)}
     {\operatorname{ub}(D_{\rm ball}+E_D)}
\right).
\end{equation}
The stored claim is additionally rounded downward so that independent recomputation with different
valid Arb subdivisions or enclosure radii cannot strengthen the stated result.

\subsection{Certified rank-one ladder and uniform finite-range bound}
\label{sup:rank1_ladder}

For \(n=k-1\) and \(c>0\), let
\[
g_c(t)=\frac{1}{c+nt},
\]
and denote by \(R_k^{(1)}\) the value attained by this rank-one rational
family. Every such value is a lower bound for \(M_k\).

The fixed headroom value used below was selected from the independently
optimized rank-one sweep rather than imposed a priori. Writing
\[
\eta(c,k)
=
\frac{1-\mathbb E[S]}{\operatorname{sd}(S)},
\]
the median optimized value of \(\eta\) moves steadily toward approximately
\(-0.475\) as \(k\) increases:
\[
\begin{array}{c|c}
\text{range of }k & \operatorname{median}\eta \\ \hline
k<10^3                & -0.4256 \\
10^3\le k<10^5        & -0.4593 \\
10^5\le k<10^7        & -0.4715 \\
10^7\le k<10^9        & -0.4747
\end{array}
\]
This convergence motivated the fixed choice
\[
\eta^\ast=-0.4746,
\]
which agrees most closely with the independently optimized scale in the
large-\(k\) regime containing the principal threshold results.

Fixing \(\eta\) incurs very little loss for the bulk of the optimized sweep.
Interpolating the fixed-headroom ladder to the \(k\)-values of the direct
rank-one sweep, the optimized certified value exceeds the corresponding
fixed-\(\eta\) value by a median of only
\[
4.2\times10^{-5},
\]
with maximum positive difference
\[
4.2\times10^{-3}.
\]
The comparison is not uniformly positive: the minimum difference is
\[
-4.2\times10^{-2},
\]
so at least one nominally optimized sweep row is outperformed by the
fixed-headroom construction at the same \(k\). This is consistent with the
isolated high-\(k\) outlier near \(\eta=-0.32\), and is one reason the direct
sweep is interpreted as a record of the numerical optimization landscape
rather than as a certified sequence of exact one-parameter optima.

To obtain a uniform finite-range statement rather than isolated certified
dimensions, we constructed an approximately log-uniform grid beginning at
\(k=100\), with target spacing
\[
\Delta\log k=0.02.
\]
At each grid point, \(c\) was determined by solving
\[
\eta(c,k)=\eta^\ast=-0.4746
\]
using the closed-form moments of the rank-one probability measure. The
resulting explicit trial function was then certified independently using the
ball-arithmetic procedure of Section~\ref{sup:arb_certification}. All retained
rungs passed certification.

For consecutive certified grid points \(k_i<k_{i+1}\), monotonicity of
\(M_k\) implies that every integer \(k\in[k_i,k_{i+1}]\) satisfies
\[
M_k
\ge
M_{k_i}
\ge
R_{\rm cert}^{(1)}(k_i)
\ge
\log k-
\left[
\log k_{i+1}
-
R_{\rm cert}^{(1)}(k_i)
\right].
\]
Thus the relevant covering deficit on the interval
\([k_i,k_{i+1}]\) is
\[
\delta_i
=
\log k_{i+1}
-
R_{\rm cert}^{(1)}(k_i).
\]

The largest interval deficit over the certified ladder is
\[
\max_i\delta_i
=
0.306754936,
\]
attained on the penultimate full-size interval
\[
k_i=1{,}834{,}543{,}653,
\qquad
k_{i+1}=1{,}880{,}000{,}000.
\]
The maximum does not occur on the final interval because the last grid step
was deliberately shortened to terminate at the round endpoint
\[
k=1.88\times10^9.
\]
Its logarithmic width is only
\[
\Delta\log k=0.004476,
\]
substantially smaller than the nominal spacing \(0.02\). The final truncated
interval therefore contributes a smaller covering deficit than the preceding
full-size rung.

The distinction between pointwise and interval-covering deficits is also
important. On the certified grid points themselves, the largest observed
pointwise deficit is
\[
\max_i
\left[
\log k_i-R_{\rm cert}^{(1)}(k_i)
\right]
=
0.286812601.
\]
Hence the certified rungs satisfy the sharper pointwise statement
\[
R_{\rm cert}^{(1)}(k_i)
\ge
\log k_i-0.287.
\]
The larger constant \(0.307\) is required only to cover every intermediate
integer \(k\) between successive certified rungs.

\paragraph{Consistency of the threshold certificates with the ladder.}

The four threshold certificates reported in the Results lie inside the
range covered by this ladder, and were produced independently of it: the
pole parameter $c$ was optimised at each threshold rather than selected
from the fixed-headroom relation, and each was certified as an isolated
trial function. They therefore provide an internal check on the
finite-range behaviour rather than additional coverage.

Supplementary Table~\ref{tab:threshold-deficit} records the certified deficit
$\log k-R^{(1)}_{\mathrm{cert}}$ at each threshold, together with the
value obtained from the empirical finite-size relation
\begin{equation}
\log k-R^{(1)}_k \;\approx\; \mathcal C_*-\frac{1.047}{\log k},
\qquad \mathcal C_*\simeq 0.3343 ,
\label{eq:finite-size-fit}
\end{equation}
which was introduced in Section~\ref{sup:rank1_scale} from the optimised
rank-one sweep. Agreement is within $3\times10^{-3}$ at every point,
across more than five orders of magnitude in $k$, and the final row shows
that the upper end of the certified ladder follows the same relation.

\begin{table}[htbp]
\centering
\caption{Certified deficits at the four thresholds and at the upper end
of the fixed-headroom ladder, compared with the finite-size relation Eq. 
\eqref{eq:finite-size-fit}. Thresholds were optimised and certified
independently of the ladder; the final row is the pointwise deficit at
the last ladder rung. The relation was not
fitted to these values.}
\label{tab:threshold-deficit}
\small
\begin{tabular}{@{}rrlrrr@{}}
\toprule
$k$ & $\log k$ & $R^{(1)}_{\mathrm{cert}}$
  & $\log k-R^{(1)}_{\mathrm{cert}}$
  & $\mathcal C_*-1.047/\log k$ & difference \\
\midrule
$3\,655$        & $8.203851$  & $8.0000648690$  & $0.203787$ & $0.206677$ & $-0.002891$ \\
$208\,910$      & $12.249659$ & $12.0000019900$ & $0.249657$ & $0.248828$ & $+0.000829$ \\
$11\,655\,069$  & $16.271252$ & $16.0000003924$ & $0.271251$ & $0.269953$ & $+0.001298$ \\
$644\,589\,002$ & $20.284123$ & $20.0000009143$ & $0.284123$ & $0.282683$ & $+0.001439$ \\
\midrule
$1\,880\,000\,000$ & $21.354538$ & \textit{(ladder rung)} & $0.286813$ & $0.285271$ & $+0.001542$ \\
\bottomrule
\end{tabular}
\end{table}

Two remarks on the status of this comparison. First, it is a consistency
check and not a proof component: the constant $\mathcal C_*$ is a
numerical evaluation of the limiting formula, and the coefficient
$1.047$ is an empirical finite-size fit for which
Section~\ref{sup:stable_law} establishes no quantitative rate. Neither
enters any certified inequality. Second, the agreement is nevertheless
informative precisely because nothing was tuned to produce it: the
thresholds were optimised for a different purpose (crossing the integer
values $8$, $12$, $16$ and $20$), the ladder was generated from a fixed
headroom target, and the relation \eqref{eq:finite-size-fit} was
extracted from a third sweep. That three independently constructed
families of certified rank-one trials trace the same $\mathcal C_*-
b/\log k$ curve supports the interpretation of the finite-$k$ behaviour
as the pre-asymptotic regime of the stable-law limit rather than as a
numerical coincidence. The single deviation of appreciable size, at
$k=3655$, is in the conservative direction and is consistent with the
relation being fitted at larger $k$.

The interval constant is close to saturated at the upper end of the computed
range. The final four full-size interval deficits are
\[
0.306613,\qquad
0.306660,\qquad
0.306708,\qquad
0.306755,
\]
increasing by approximately
\[
4.7\times10^{-5}
\]
per rung. Continuing the same fixed-headroom ladder for only several further
full-size steps would therefore push the covering deficit above \(0.307\).
Accordingly, rounding the observed maximum upward gives the finite-range
statement
\[
\boxed{
M_k\ge\log k-0.307,
\qquad
100\le k\le 1{,}880{,}000{,}000.
}
\]
No extension of the constant \(0.307\) beyond this explicitly certified range
is claimed.

\subsection{Rank-one scale selection and the headroom invariant}
\label{sup:rank1_scale}

Under the probability measure proportional to \(g_c(t)^2\,dt\), let
\(S=\sum_{i<k}t_i\) and write
\[
c^{-1}=\log n+a,\qquad n=k-1.
\]
Uniformly for \(a=O(\sqrt{\log n})\), the closed-form moments of this measure
imply
\begin{equation}
1-\mathbb E[S]
=
c\left(a+1-\log c^{-1}\right)+O\!\left(\frac{c^{2}\log n}{n}\right),
\qquad
\operatorname{sd}(S)=\sqrt c\left(1+O\!\left(\frac{\log n}{n}\right)\right).
\end{equation}
For fixed \(a\), \(\log c^{-1}=\log\log n+O(1/\log n)\).
Consequently,
\begin{equation}
\eta(c,k)
=
\frac{1-\mathbb E[S]}{\operatorname{sd}(S)}
=
\sqrt c\left(a+1-\log c^{-1}\right)+O\!\left(\frac{\log n}{n}\right).
\label{eq:eta_asymptotic}
\end{equation}
Fixing \(\eta=-\gamma_{\mathrm h}\), multiplying Eq.~\eqref{eq:eta_asymptotic} by
\(c^{-1/2}\) and substituting \(a=c^{-1}-\log n\) gives the implicit scale
equation
\begin{equation}
c^{-1}+\gamma_{\mathrm h}\,c^{-1/2}-\log c^{-1}
=
\log n-1+O\!\left(\frac{(\log n)^{3/2}}{n}\right).
\label{eq:fixed_eta_exact}
\end{equation}
The left-hand side is strictly increasing in \(c^{-1}\) for \(c^{-1}\ge 1\), so
Eq.~\eqref{eq:fixed_eta_exact} determines the fixed-headroom scale to the stated
accuracy. Writing \(x=c^{-1}\), it gives \(x=\log n+O(\sqrt{\log n})\), hence
\(\log x=\log\log n+O\big((\log n)^{-1/2}\big)\) and
\(\sqrt x=\sqrt{\log n}-\gamma_{\mathrm h}/2+O\big(\log\log n/\sqrt{\log n}\big)\).
Substituting these expansions yields
\begin{equation}
c^{-1}
=
\log n-\gamma_{\mathrm h}\sqrt{\log n}+\log\log n-1+\frac{\gamma_{\mathrm h}^{2}}{2}
+O\!\left(\frac{\log\log n}{\sqrt{\log n}}\right).
\label{eq:fixed_eta_scaling}
\end{equation}
The constant \(\gamma_{\mathrm h}^2/2\) arises from the shift of \(c^{-1/2}\) relative to
\(\sqrt{\log n}\). The remainder decays only like
\(\log\log n/\sqrt{\log n}\) and is not small over the certified range
(\(\log\log n/\sqrt{\log n}\approx0.66\) at \(\log n\approx21.35\)). The ladder
was therefore computed from the exact closed-form moments, which agree with
Eq.~\eqref{eq:fixed_eta_exact} to the stated accuracy, and not from the
truncated expansion~\eqref{eq:fixed_eta_scaling}.
Thus the empirically selected coefficient
\(\gamma_{\mathrm h}\simeq0.4746\) is not an independent asymptotic fit:
it is the same quantity as \(-\eta^\ast\) and follows directly from the
fixed-headroom parameterization.

Equations~\eqref{eq:fixed_eta_exact} and~\eqref{eq:fixed_eta_scaling} describe
the fixed-headroom parameterization of the certified ladder; they are not a
sharper asymptotic optimizer law. Indeed, fixed \(\eta=-\gamma_{\mathrm h}\) corresponds to
the shift
\[
a(L)=c^{-1}-L
=
\log L-1-\gamma_{\mathrm h}\sqrt L+\frac{\gamma_{\mathrm h}^{2}}{2}
+O\!\left(\frac{\log L}{\sqrt L}\right),
\qquad L=\log n,
\]
so \(a(L)\to-\infty\) and fixed \(\eta\) eventually leaves every fixed-shift
family. The leading-order part of \(a(L)\) is not monotone: its derivative
\(1/L-\gamma_{\mathrm h}/(2\sqrt L)\) vanishes at
\[
\sqrt L=\frac{2}{\gamma_{\mathrm h}},
\]
where it attains its maximum, and it decreases thereafter. Over the certified
range, however, the induced shift obtained from
Eq.~\eqref{eq:fixed_eta_exact} remains close to the small negative values where
the numerical limiting defect \(\mathcal C(a)\) is minimized. This explains the
effectiveness of the headroom surrogate over the finite range studied here: it
selects rational trials lying close to the same near-optimal fixed-shift
regime, even though the two parameterizations diverge asymptotically.

At the upper end of the certified ladder, the measured pointwise deficit is
approximately
\[
\log k-R(F_{k,c_k})\simeq0.2868,
\]
whereas numerical evaluation of the limiting fixed-shift formula gives
\[
\mathcal C_*\simeq0.3343.
\]
The difference is approximately
\[
0.3343-0.2868=0.0475.
\]
Over the finite range considered in the numerical sweep, this is close to
the magnitude of the empirically fitted correction
\[
\frac{1.047}{\log k}\simeq0.049
\qquad
(\log k\simeq21.35).
\]
Equivalently, the observed deficits are numerically consistent with the
finite-range relation
\[
\log k-R(F_{k,c_k})
\approx
\mathcal C_*-\frac{1.047}{\log k}.
\]
This relation is an empirical consistency check only. In particular, the
coefficient $1.047$ has not been derived analytically and is not identified
with the coefficient of a proved second-order expansion.

The distinction is important because the stable-law theorem concerns the
fixed-shift family
\[
c^{-1}=\log n+a,
\qquad a\in\mathbb R\ \text{fixed},
\]
whereas the fixed-headroom ladder corresponds to a shift $a=a(\log n)$
that eventually tends to $-\infty$. Therefore the finite-size expansion
for fixed $a$ cannot be applied directly to the optimized headroom ladder.
Even within the fixed-shift family, the proof establishes only
\[
\log k-R(F_{k,c_k})
=
\mathcal C(a)+o(1),
\]
because no sufficiently sharp convergence rate has been proved for
\[
\frac{L_n}{P_n}-\frac{L_a}{P_a}.
\]
Consequently, neither the numerical agreement above nor the fixed-headroom
parameterization establishes a rigorous $1/\log k$ correction. The
coefficient $1.047$ should therefore be reported solely as a numerical
finite-size observation.

\subsection{Spectrally positive $1$-stable limit}
\label{sup:stable_law}

Section~\ref{sec:rational-asymptotics} gives the complete asymptotic
analysis of the single-channel rational family. For
\[
n=k-1,
\qquad
c_k^{-1}=\log n+a,
\]
with fixed $a\in\mathbb R$, the rescaled boundary variable converges in
distribution to
\[
X_a=(a+1)-\Lambda,
\]
where $\Lambda$ is the spectrally positive $1$-stable random variable
defined by
\[
\mathbb E e^{is\Lambda}
=
\exp\left\{
\int_0^\infty
\left(
e^{isx}-1-isx\mathbf1_{\{x\le1\}}
\right)\frac{dx}{x^2}
\right\}.
\]
The proof supplements weak convergence with uniform density, small-ball,
tail, and logarithmic moment estimates, which are needed because the
Rayleigh quotient contains logarithmic observables at the simplex
boundary.

Theorem~\ref{thm:rational-asymptotic} consequently gives
\[
R(F_{k,c_k})
=
\log k-\mathcal C(a)+o(1),
\]
where
\[
\mathcal C(a)
=
a-2\,\mathbb E[\log X_a\mid X_a>0].
\]
Thus, with
\[
\mathcal C_*:=\inf_{a\in\mathbb R}\mathcal C(a),
\]
Corollary~\ref{cor:Mk-asym} yields
\[
M_k\ge \log k-\mathcal C_*-o(1).
\]

Here $\mathcal C_*$ is the optimal asymptotic defect only within the
fixed-shift family
\[
c_k^{-1}=\log(k-1)+a,
\qquad a\in\mathbb R.
\]
The proof does not establish optimality over arbitrary sequences $c_k$,
nor does it require the infimum to be attained. Numerical evaluation
suggests
\[
\mathcal C_*\approx0.3343,
\]
but this decimal value should be regarded as a numerical estimate unless
the global infimum is separately certified. A rigorous enclosure of
$\mathcal C(a_0)$ at one fixed shift $a_0$ already suffices for an explicit
asymptotic lower bound.

Finally, although the finite-$n$ identity in
Section~\ref{sec:rational-asymptotics} isolates a possible
$1/\log k$ correction, the required convergence rate for
\[
\frac{L_n}{P_n}-\frac{L_a}{P_a}
\]
has not been proved. Accordingly, no second-order asymptotic coefficient
is claimed here.

\paragraph{Relation to the optimized rank-one sweep.}
The optimized rank-one sweep explores a different asymptotic parameter
regime, including an empirically observed $\sqrt{\log n}$ correction.
Those numerical observations are not used in the fixed-shift theorem
above.

\begin{figure}[t]
    \centering
\includegraphics[width=\textwidth]{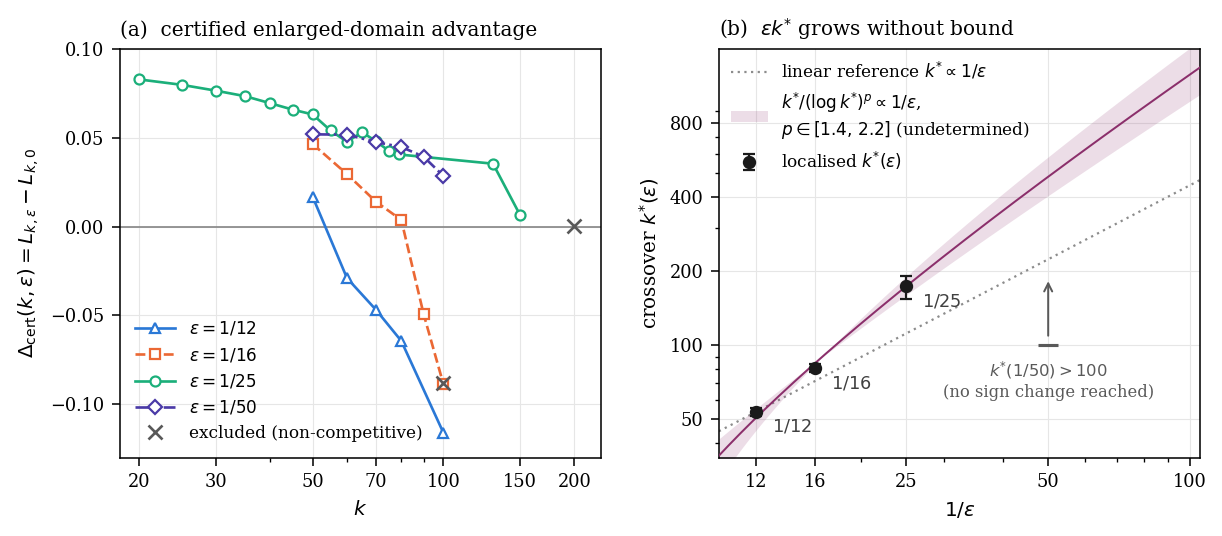}
\caption{\textbf{Enlarged-support crossover.}
  \textbf{(a)} The certified difference
  $\Delta_{\mathrm{cert}}(k,\varepsilon)=L_{k,\varepsilon}-L_{k,0}$, a difference of certified lower bounds rather than of the exact constants, against $k$ for four values of $\varepsilon$. It is positive at small $k$, decreases with $k$, and changes sign at a finite scale for $\varepsilon=1/12$ and $1/16$; no sign change is reached for $\varepsilon=1/50$ within the computed range. Grey crosses mark certified but non-competitive runs excluded from crossover inference (Section~\ref{sup:epsilon_crossover}).
  \textbf{(b)} The localised crossovers $k^{*}(\varepsilon)$ against
  $1/\varepsilon$ on logarithmic axes. The $\varepsilon=1/12$ and $1/16$ points are linear interpolations between the certified brackets; the $\varepsilon=1/25$ marker is the midpoint of the extrapolative range $[155,192]$, with the bar spanning that range. The points lie above the linear reference $k^{*}\propto1/\varepsilon$ (dotted) and pull away from it, giving $\varepsilon k^{*}=4.48,\,5.04,\,6.94$ and motivating
  $\varepsilon k^{*}(\varepsilon)\to\infty$. The band shows the family
  $k^{*}/(\log k^{*})^{p}\propto1/\varepsilon$ over the range $p\in[1.4,2.2]$ admitted by the data, with the constant refitted at each $p$; the exponent is not determined by these points.}
    \label{fig:epsilon-crossover}
\end{figure}

\subsection{Enlarged-support crossover analysis}
\label{sup:epsilon_crossover}

Across the ladder \(50\leq k\leq220\), the advantage conferred by \(\varepsilon\) fluctuates at small \(k\), then decreases and changes sign at a finite crossover \(k_\ast(\varepsilon)\), which we locate at \(k_\ast(1/12)\approx53.7\), \(k_\ast(1/16)\approx80.7\) and \(k_\ast(1/25)\in[155,192]\). We conjecture that such a crossover exists for every \(\varepsilon>0\) and that \(k_\ast(\varepsilon)\) grows superlinearly in \(1/\varepsilon\), so that \(\varepsilon\,k_\ast(\varepsilon)\to\infty\) (Supplementary Table~\ref{tab:epsilon_sweep}).

Write $L_{k,\varepsilon}$ for the certified lower bound produced for the enlarged-domain
construction and $L_{k,0}$ for the corresponding vanilla lower bound, and define
\[
\Delta_{\rm cert}(k,\varepsilon)
=
L_{k,\varepsilon}-L_{k,0}.
\]
For the reliable computed configurations, this difference is positive at smaller $k$, decreases
with $k$, and changes sign at a finite scale. The certified pairs used to localize the observed
crossovers include
\[
\begin{array}{c|cc}
\varepsilon & k_{\rm left} & k_{\rm right}\\
\hline
1/12 & 50 & 60\\
1/16 & 80 & 90
\end{array}
\]
with certified differences $+0.016961$ and $-0.028948$ for $\varepsilon=1/12$, and
$+0.003771$ and $-0.049093$ for $\varepsilon=1/16$. Linear interpolation gives
\[
k^*(1/12)\approx53.7,
\qquad
k^*(1/16)\approx80.7.
\]
For $\varepsilon=1/25$, the last reliable positive certified points are
$k=130$ and $150$, with differences $+0.035443$ and $+0.006581$; the supplied analysis reports
the broader extrapolative range
\[
k^*(1/25)\in[155,192]
\]
rather than treating a degraded $k=200$ pair as a direct bracket. No sign change was reached for
$\varepsilon=1/50$ within the reliable computed range.

All certified runs are in Supplementary Table \ref{tab:epsilon_sweep}. Several runs are excluded from crossover inference because they satisfy numerical certification of
the particular trial function but are demonstrably noncompetitive discovery outcomes. These include
a reproducible local failure at $(k,\varepsilon)=(100,1/25)$, degraded paired runs at $k=200$,
and incomplete or nonmonotone optimization at $k=140$ and $220$. The exclusion criterion is
therefore not failure of the certificate: it is failure of the discovery stage to provide a
competitive construction. Monotonicity in $k$, the discovery-to-certification gap and retained
effective rank provide inexpensive diagnostics for these cases.

The observed crossover locations give increasing values of
$\varepsilon k^*(\varepsilon)$ as $\varepsilon$ decreases (see Supplementary Figure \ref{fig:epsilon-crossover}), motivating the conjecture that for each
fixed $\varepsilon>0$ a finite crossover exists and
\[
\varepsilon k^*(\varepsilon)\to\infty
\qquad(\varepsilon\to0).
\]
The available points are compatible with a scaling variable
\[
x=\frac{\varepsilon k}{(\log k)^p},
\]
but do not determine $p$: the supplied fits admit approximately $p\in[1.4,2.2]$, and the natural
$p=1$ and $p=2$ alternatives cannot be distinguished by the current data. We therefore do not
assign a specific exponent.

\paragraph{Scope of the crossover statement.}
A crucial distinction is that $\Delta_{\rm cert}$ is a difference of certified lower bounds.
Therefore $\Delta_{\rm cert}(k,\varepsilon)<0$ does not prove
$M_{k,\varepsilon}<M_k$; it proves only that the certified vanilla construction found by the
pipeline outperforms the certified enlarged-domain construction found at that $k$. Extending the
observed crossover to the exact variational constants is consequently conjectural. Likewise,
interpolated or extrapolated values of $k^*(\varepsilon)$ are descriptive summaries of the computed
ladder, not independently certified roots.

\subsection{Supplementary data: optimized rank-one sweep}
\label{sup:rankone_R_sweep}

Supplementary Data \texttt{maynard\_R\_sweep\_eta.csv} contains the direct
rank-one optimization sweep for
\[
g_c(t)=\frac{1}{c+(k-1)t},
\]
with one optimized pole parameter \(c\) for each reported \(k\).  The file
records the discovery value \(R\), the independently certified lower bound,
the discovery--certification gap, the Cauchy--Schwarz ceiling, the optimized
scale \(c\), the standardized simplex headroom
\[
\eta(c,k)=\frac{1-\mathbb E[S]}{\operatorname{sd}(S)},
\]
and numerical diagnostics used during optimization.  All retained solutions
have rank one in both numerator and denominator Gram representations.

The sweep contains \(729\) attempted configurations over
\(100\le k\le2\times10^9\).  Of these, \(727\) pass certification and two are
explicitly marked \texttt{FAILED}; failed rows are retained in the data for
provenance but are excluded from mathematical claims.  The certified
discovery gap is small throughout the successful runs, with median
\(9.9\times10^{-4}\) and maximum \(2.3\times10^{-3}\).  The optimized
headroom is not fixed: its variation across the sweep is used diagnostically
to identify the approximately constant headroom regime that motivates the
geometric ladder below.  This file is therefore a record of the numerical
optimization landscape, not itself the basis for the uniform finite-range
theorem.

\subsection{Supplementary data: fixed-headroom geometric ladder}
\label{sup:geometric_eta_sweep}

Supplementary Data
\texttt{maynard\_geometric\_eta\_sweep.csv} contains the certified
fixed-headroom ladder used to convert isolated rank-one trial functions into
a uniform bound over a continuous range of integer \(k\).  At each rung,
\(c\) is chosen from
\[
\eta(c,k)=-0.4746,
\]
and the resulting explicit rank-one trial is certified independently.  The
file contains \(828\) certified rungs from \(k=100\) to
\(k=1.88\times10^9\); every row has certification status \texttt{PASS}.
The target headroom is reproduced to numerical precision throughout the
sweep.

The ladder is approximately uniform in \(\log k\), with asymptotic spacing
\(\Delta\log k\simeq0.02\); the small deviations at the lower end and final
endpoint arise from integer rounding.  For consecutive certified rungs
\(k_i<k_{i+1}\), monotonicity of \(M_k\) gives
\[
M_k\ge R_{\mathrm{cert}}(k_i)
\qquad
(k_i\le k\le k_{i+1}),
\]
and therefore
\[
M_k\ge
\log k-
\left[
\log k_{i+1}-R_{\mathrm{cert}}(k_i)
\right].
\]
The largest interval deficit in the supplied data is
\[
0.3067549357,
\]
attained between
\[
k_i=1\,834\,543\,653,
\qquad
k_{i+1}=1\,871\,603\,894.
\]
Rounding this deficit upward yields the reported uniform statement
\[
M_k\ge \log k-0.307
\]
throughout the certified range.  The file additionally records the
pointwise deficit \(\log k-R_{\mathrm{cert}}\), the discovery--certification
gap, the Cauchy--Schwarz ceiling and the scale diagnostics used to relate the
finite ladder to the fixed-shift asymptotic analysis.

\subsection{Conversion to admissible-tuple bounds}
\label{sup:tuple_conversion}
A certified inequality $M_k>4m$ supplies the variational input to the
Maynard--Tao bounded-gap theorem. To obtain a numerical bound on $H_m$,
one additionally requires an admissible $k$-tuple and its diameter. We
therefore distinguish the variational certificate from the admissible-tuple
construction and write $D_k$ for the diameter of the explicit admissible
$k$-tuple used in the conversion.

For the first threshold, $M_{3655}>8$, a stronger admissible-tuple diameter
is already recorded in the Engelsma--Sutherland database
(\url{https://math.mit.edu/~primegaps/}). The entry for $k=3655$, submitted
by Sutherland on 27 June 2013, gives
\[
D_{3655}\le33118.
\]
Our independently constructed hybrid sieve gives the slightly weaker control
\[
D_{3655}\le33270,
\]
and is retained as a reproducibility check.

For $k=3655$ and $k=208910$, admissible tuples were constructed with a
hybrid shifted-Schinzel/greedy sieve. The algorithm first applies a
Schinzel-type presieve, then greedily removes a least-populated residue
class for successive primes, extracts the narrowest block of $k$ survivors,
and finally applies local diameter-reducing moves. Crucially, the discovery
trajectory is not trusted: the final tuple is rechecked from scratch for
every prime $p\le k$.

Each constructed tuple is exported as an
\texttt{admissible-tuple-certificate/1} record containing the ordered gap
sequence and, for every prime $p\le k$, an explicitly missing residue class.
A standalone verifier shares no construction code path. It regenerates the
primes up to $k$, reconstructs the tuple from the gap list, checks its
cardinality and diameter, verifies that the witness list contains exactly
the primes $p\le k$, and confirms by direct enumeration that every stated
residue class is absent. Since a $k$-element set cannot occupy all residue
classes modulo any prime $p>k$, these checks establish admissibility.

The resulting independently verified hybrid-sieve bounds are
\[
D_{3655}\le33270,\qquad
D_{208910}\le2718108.
\]
For $k=3655$, the stronger database value $D_{3655}\le33118$ is used in the
final bounded-gap comparison, while our independently constructed tuple
serves as a reproducibility control.

For the larger thresholds, explicit window sieving is unnecessary. Let
\[
\mathcal H_k=
\{p_{\pi(k)+1},\ldots,p_{\pi(k)+k}\},
\]
the first $k$ primes strictly exceeding $k$. Every element of
$\mathcal H_k$ is prime and greater than $k$, so residue class
$0\bmod p$ is empty for every prime $p\le k$; for $p>k$, a
$k$-element set cannot meet all $p$ residue classes. Hence
$\mathcal H_k$ is admissible by construction. Segmented prime enumeration
gives
\[
D_{11655069}\le214099720,
\qquad
D_{644589002}\le14541349288.
\]
The code additionally evaluates explicit prime-counting and prime-location
bounds as an independent large-$k$ consistency check.

Combining these tuple diameters with the certified variational thresholds
$M_k>4m$ gives
\[
H_m\le D_k,
\]
and hence the numerical bounded-gap statements reported in the Results.
The tuple-construction code, certificate files, and independent verifier
are supplied as Supplementary Software/Data.

\subsection{Standalone verification and certificate schema}
\label{sup:standalone_verifier}

The large-$k$ certificate is a self-contained JSON record containing the exact rational trial parameter
$c$, Fourier discretization parameters, intermediate summaries and the claimed lower bound. Stored
intermediate numerical enclosures are not trusted. The verifier reconstructs the analytic Fourier-tail
constant, regenerates the far-field block partition, recomputes wide-ball suprema, moment bounds,
Fourier nodes, trapezoidal sums, aliasing and truncation errors, and independently assembles
$R_{\rm low}$ in ball arithmetic.

The certificate proves only the Rayleigh quotient of the fixed admissible trial function encoded by
the exact rational $c$. It does not prove optimality of $c$, re-prove the external sieve criterion or
general upper bounds on $M_k$, or certify admissible-tuple diameters. Monte--Carlo comparisons and
external small-$k$ values are falsification checks rather than proof components.

%% file: Section1/input004asymptotic_v4.tex

\section{Asymptotics of the single-channel rational family}
\label{sec:rational-asymptotics}

We now explain analytically why the single-channel rational trial
functions used in the large-$k$ computations remain within a bounded
distance of the logarithmic scale $\log k$. 

For $k\ge 2$, write
\[
n:=k-1,
\]
and consider the symmetric trial function
\begin{equation}
F_{k,c}(t_1,\ldots,t_k)
=
\mathbf 1_{\{t_i\ge0,\ \sum_{i=1}^k t_i\le1\}}
\prod_{i=1}^k \frac{1}{c+n t_i},
\qquad c>0.
\label{eq:asym-trial}
\end{equation}
Its overall scalar normalization is immaterial.

The exact probabilistic reduction established in
Section~\ref{sup:probabilistic_reduction} gives
\begin{equation}
R(F_{k,c})
=
k\frac{N_{k,c}}{D_{k,c}},
\label{eq:asym-ratio-start}
\end{equation}
where
\[
w(t)=\frac{1}{(c+nt)^2},
\qquad
m_0=\int_0^1 w(t)\,dt
=\frac{1}{c(c+n)},
\]
and, if $T_1,\ldots,T_n$ are independent with density
\[
\frac{w(t)}{m_0}
=
\frac{c(c+n)}{(c+nt)^2},
\qquad 0\le t\le1,
\]
and
\[
S_n:=T_1+\cdots+T_n,
\]
then
\begin{align}
N_{k,c}
&=
\mathbb E\!\left[
G(1-S_n)^2\,;\,S_n<1
\right],
\\
D_{k,c}
&=
\mathbb E\!\left[
H(1-S_n)\,;\,S_n<1
\right],
\end{align}
with
\begin{align}
G(\rho)
&=
\frac{1}{n}
\log\left(1+\frac{n\rho}{c}\right),
\label{eq:asym-G}
\\
H(\rho)
&=
\frac{1}{n}
\left(
\frac1c-\frac{1}{c+n\rho}
\right).
\label{eq:asym-H}
\end{align}

The relevant scale for $c$ is
\begin{equation}
\frac1{c_k}
=
\log(k-1)+a,
\qquad a\in\mathbb R
\quad\text{fixed}.
\label{eq:fixed-shift-scaling}
\end{equation}
The numerical optimization of the rational family independently exhibits
this $c_k\asymp1/\log k$ scaling.  The result below identifies the limiting
$O(1)$ defect from $\log k$ for every fixed shift $a$.

\subsection{The limiting stable law}

We first isolate the random variable governing the boundary of the
simplex.  Define
\begin{equation}
X_n
:=
\frac{1-S_n}{c}.
\label{eq:Xn-def}
\end{equation}
Thus
\[
S_n<1
\quad\Longleftrightarrow\quad
X_n>0.
\]

It is convenient to rescale a single summand by writing
\[
V:=\frac{T}{c}.
\]
A direct change of variables gives the density
\begin{equation}
q_n(v)
=
\frac{c+n}{(1+nv)^2},
\qquad
0\le v\le\frac1c.
\label{eq:V-density}
\end{equation}
Hence
\begin{equation}
X_n
=
\frac1c-\sum_{j=1}^n V_j,
\label{eq:Xn-V}
\end{equation}
where $V_1,\ldots,V_n$ are independent with density
\eqref{eq:V-density}.

We use the following normalization of the totally right-skewed
$1$-stable law.

\begin{definition}[The limiting $1$-stable variable]
\label{def:Lambda}
Let $\Lambda$ be the infinitely divisible random variable with
characteristic function
\begin{equation}
\mathbb E e^{is\Lambda}
=
\exp\left\{
\int_0^\infty
\left(
e^{isx}-1-isx\mathbf 1_{\{x\le1\}}
\right)
\frac{dx}{x^2}
\right\}.
\label{eq:Lambda-cf}
\end{equation}
This is a spectrally positive stable law of index $1$, in the
L\'evy--Khintchine normalization used below.  Its law has a continuous,
everywhere positive density on $\mathbb R$.
\end{definition}

Because $\Lambda$ has only positive jumps, its Laplace transform is finite
in the damping direction.  The following elementary computation is used
repeatedly: it controls the left tail of $\Lambda$, supplies the
integrability needed in Lemma~\ref{lem:C-coercive}, and provides a
one-dimensional quadrature route to the constants $P_a$, $L_a$, $Q_a$
introduced below.

\begin{lemma}[Laplace transform and left tail of $\Lambda$]
\label{lem:laplace}
For every $\theta>0$,
\begin{equation}
\mathbb E\,e^{-\theta\Lambda}
=
\exp\bigl\{\theta\log\theta-\theta+\gamma_{\mathrm E}\theta\bigr\},
\label{eq:laplace-Lambda}
\end{equation}
where $\gamma_{\mathrm E}$ is the Euler--Mascheroni constant. To avoid collision we use $\gamma_{\mathrm E}$ instead of $\gamma_{\mathrm h}$ from Section \ref{sup:rank1_scale}. Consequently, for all
$y\ge0$,
\begin{equation}
\mathbb P(\Lambda\le -y)
\le
\exp\left\{-e^{\,y-\gamma_{\mathrm E}}\right\},
\label{eq:left-tail}
\end{equation}
and in particular
\begin{equation}
\kappa:=\mathbb E\bigl[(-\Lambda)^+\bigr]<\infty .
\label{eq:kappa-finite}
\end{equation}
\end{lemma}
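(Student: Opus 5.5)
The plan is to compute the Laplace exponent directly from the Lévy--Khintchine form \eqref{eq:Lambda-cf}. Then I will obtain the tail bound by Chernoff's inequality and read off $\kappa<\infty$ by integrating the tail.

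\emph{Step 1 (analytic continuation).} Since the Lévy measure $x^{-2}dx$ lives on $(0,\infty)$, the exponent
\[
\Psi(z)=\int_0^\infty\bigl(e^{izx}-1-izx\mathbf 1_{\{x\le1\}}\bigr)\frac{dx}{x^2}
\]
is well defined and analytic for $\operatorname{Im}z>0$, and continuous up to the real axis. Near $x=0$ the integrand is $O(|z|^2)$. For $x>1$ it is bounded by $2+|z|x$, but the $-izx$ term only appears on $x\le1$, so the tail integrand $e^{izx}-1$ is bounded and integrable against $x^{-2}$. The same holds for $\mathbb E e^{iz\Lambda}$, since positive jumps make $\mathbb E e^{-\theta\Lambda}$ finite. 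This finiteness is the standard exponential-moment criterion for infinitely divisible laws, $\int_{x>1}e^{-\theta x}x^{-2}dx<\infty$, which is trivially satisfied because the measure has no negative part. Both sides of $\mathbb E e^{iz\Lambda}=e^{\Psi(z)}$ are analytic on the upper half-plane and agree on the real axis by continuity, so they agree at $z=i\theta$. This gives
\[
\log\mathbb E e^{-\theta\Lambda}=\int_0^\infty\bigl(e^{-\theta x}-1+\theta x\mathbf 1_{\{x\le1\}}\bigr)\frac{dx}{x^2}.
\]

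\emph{Step 2 (evaluating the integral).} Call the right-hand side $f(\theta)$. Then $f(0)=0$, and differentiating under the integral (justified by dominated convergence for $\theta$ in compact subsets of $(0,\infty)$) gives
\[
f'(\theta)=\int_0^1\frac{1-e^{-\theta x}}{x}\,dx-\int_1^\infty\frac{e^{-\theta x}}{x}\,dx.
\]
Substitute $u=\theta x$ and use the classical identity
\[
\int_0^1\frac{1-e^{-u}}{u}\,du-\int_1^\infty\frac{e^{-u}}{u}\,du=\gamma_{\mathrm E}.
\]
This yields
\[
f'(\theta)=\int_0^\theta\frac{1-e^{-u}}{u}\,du-\int_\theta^\infty\frac{e^{-u}}{u}\,du=\gamma_{\mathrm E}+\log\theta.
\]
To see this, split at $u=1$: the extra pieces are $\int_1^\theta du/u=\log\theta$ for $\theta>1$, and the analogous computation for $\theta<1$. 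Integrating from $0$, with $f(0^+)=0$ by dominated convergence, gives $f(\theta)=\theta\log\theta-\theta+\gamma_{\mathrm E}\theta$, which is \eqref{eq:laplace-Lambda}. I expect this step to be the main obstacle, specifically getting the constant exactly right. As a cross-check I would verify the identity for $\gamma_{\mathrm E}$ through $\int_0^\infty e^{-u}\log u\,du=-\gamma_{\mathrm E}$, after integrating by parts on each piece.

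\emph{Step 3 (tail bound and $\kappa$).} Markov's inequality gives, for every $\theta>0$,
\[
\mathbb P(\Lambda\le-y)\le e^{-\theta y}\,\mathbb E e^{-\theta\Lambda}=\exp\{\theta\log\theta-\theta+\gamma_{\mathrm E}\theta-\theta y\}.
\]
Minimizing over $\theta$ gives $\log\theta=y-\gamma_{\mathrm E}$, that is, $\theta=e^{y-\gamma_{\mathrm E}}$, and the exponent becomes $-\theta=-e^{y-\gamma_{\mathrm E}}$, which is \eqref{eq:left-tail}. Finally, $\kappa=\int_0^\infty\mathbb P(\Lambda\le-y)\,dy\le\int_0^\infty\exp(-e^{y-\gamma_{\mathrm E}})\,dy<\infty$, since the integrand decays doubly exponentially. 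This proves \eqref{eq:kappa-finite}.
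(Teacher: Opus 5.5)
Your proposal is correct and follows the paper's proof essentially step by step: analytic continuation of the L\'evy--Khintchine exponent to $s=i\theta$, differentiation under the integral to get $\log\theta+\gamma_{\mathrm E}$ and integration from the value $0$ at $\theta=0^+$, then the Markov/Chernoff bound optimized at $\theta=e^{y-\gamma_{\mathrm E}}$, and integration of the tail to get $\kappa<\infty$. The differences are minor. The paper evaluates the derivative through the identity $\int_0^z(1-e^{-t})t^{-1}\,dt=\gamma_{\mathrm E}+\log z+E_1(z)$, whereas you use its $z=1$ case and split at $u=1$. You also justify the continuation step, via the exponential-moment criterion and a boundary-uniqueness (reflection) argument, where the paper simply asserts it.
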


\begin{proof}
Analytic continuation of \eqref{eq:Lambda-cf} to $s=i\theta$ gives
\[
J(\theta):=\log\mathbb E\,e^{-\theta\Lambda}
=
\int_0^\infty
\left(e^{-\theta x}-1+\theta x\mathbf 1_{\{x\le1\}}\right)
\frac{dx}{x^2},
\]
the integral being absolutely convergent for $\theta>0$.
Differentiating under the integral sign,
\[
J'(\theta)
=
\int_0^1\frac{1-e^{-\theta x}}{x}\,dx
-
\int_1^\infty\frac{e^{-\theta x}}{x}\,dx
=
\bigl(\log\theta+\gamma_{\mathrm E}+E_1(\theta)\bigr)-E_1(\theta)
=
\log\theta+\gamma_{\mathrm E},
\]
where $E_1$ is the exponential integral and we used the standard identity
$\int_0^z(1-e^{-t})t^{-1}dt=\gamma_{\mathrm E}+\log z+E_1(z)$.  Since $J(0^+)=0$,
integration gives \eqref{eq:laplace-Lambda}.

For \eqref{eq:left-tail}, Markov's inequality applied to $e^{-\theta\Lambda}$
yields, for every $\theta>0$,
\[
\mathbb P(\Lambda\le-y)
\le
e^{-\theta y}\,\mathbb E\,e^{-\theta\Lambda}
=
\exp\left\{\theta(\log\theta-1+\gamma_{\mathrm E}-y)\right\}.
\]
The exponent is minimized at $\theta=e^{\,y-\gamma_{\mathrm E}}$, where it equals
$-\theta$, giving \eqref{eq:left-tail}.  Finally
$\mathbb E[(-\Lambda)^+]=\int_0^\infty\mathbb P(\Lambda\le-y)\,dy
\le\int_0^\infty\exp\{-e^{y-\gamma_{\mathrm E}}\}\,dy<\infty$.
\end{proof}

\begin{lemma}[Stable limit]
\label{lem:stable-limit}
Suppose
\[
\frac1c=\log n+a
\]
with fixed $a\in\mathbb R$.  Then
\begin{equation}
X_n
\ \xrightarrow{\ d\ }\
X_a:=(a+1)-\Lambda .
\label{eq:stable-convergence}
\end{equation}
\end{lemma}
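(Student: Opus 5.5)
The plan is to prove convergence of characteristic functions (Lévy's continuity theorem) for the triangular array $V_1,\ldots,V_n$ with density \eqref{eq:V-density}. We will show directly that $\sum_{j\le n}V_j-(\log n+a)+(a+1)\to\Lambda$ in distribution. Since $1/c=\log n+a$, this is the same as $X_n\to(a+1)-\Lambda$.

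\emph{Step 1: Lévy density.} Write
\[
\mathbb E e^{is\sum V_j}=\bigl(1+\mathbb E[e^{isV}-1]\bigr)^n .
\]
The quantity $n\,\mathbb E[e^{isV}-1]$ is an integral against the measure
\[
\mu_n(dv)=n q_n(v)\,dv=\frac{n(c+n)}{(1+nv)^2}\,dv
\]
on $[0,1/c]$. For fixed $v>0$ we have $n(c+n)/(1+nv)^2\to v^{-2}$. The truncation point $1/c=\log n+a$ tends to infinity, so $\mu_n\to x^{-2}dx$ vaguely on $(0,\infty)$. We also need two tail facts:
\begin{itemize}
\item the tail mass $\mu_n((y,\infty))$ converges to $1/y$, with the mass beyond $1/c$ equal to zero;
\item the small-jump second moment $\int_0^\epsilon v^2\mu_n(dv)$ is $O(\epsilon)$ uniformly in $n$.
\end{itemize}
Both follow from the explicit density, via the elementary bound $n(c+n)/(1+nv)^2\le (1+c/n)v^{-2}$.

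\emph{Step 2: Centering.} Compute the truncated mean
\[
n\,\mathbb E[V\mathbf 1_{V\le1}]=\int_0^1 v\,\mu_n(dv)
=(c+n)\int_0^1\frac{nv}{(1+nv)^2}\,dv .
\]
Substituting $x=nv$ gives
\[
\frac{c+n}{n}\int_0^n\frac{x}{(1+x)^2}\,dx=\frac{c+n}{n}\Bigl(\log(1+n)-\frac{n}{1+n}\Bigr)=\log n-1+o(1).
\]
We then split
\[
n\,\mathbb E[e^{isV}-1]=\int\bigl(e^{isv}-1-isv\mathbf 1_{v\le1}\bigr)\mu_n(dv)+is\,(\log n-1+o(1)).
\]
Using Steps 1--2 together with dominated convergence (the integrand is $O(v^2)$ near $0$ and bounded by $2$ on $v>1$), the first integral converges to the exponent in \eqref{eq:Lambda-cf}. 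Therefore
\[
\mathbb E e^{is(\sum V_j-\log n+1)}\to\mathbb E e^{is\Lambda}.
\]
Since $\log n=1/c-a$, this says $\sum V_j-1/c+(a+1)\to\Lambda$, that is, $X_n\to(a+1)-\Lambda$. Here $X_n=1/c-\sum V_j$, so the constants line up.

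\emph{Step 3: From the mean to the product.} To pass from $n\,\mathbb E[e^{isV}-1]$ to the $n$-th power, we use
\[
\bigl|\log(1+z)-z\bigr|\le|z|^2\quad\text{for small }|z| .
\]
With $z=\mathbb E[e^{isV}-1]$, it suffices to show $n|z|^2\to0$. This holds because
\[
|z|\le|s|\,\mathbb E V=|s|\,O\!\left(\frac{\log n}{n}\right),
\]
as the mean of $V$ is about $(\log n)/n$ by the same computation as in Step 2. Hence $n|z|^2=O((\log n)^2/n)\to0$.

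\emph{Main obstacle.} I expect the delicate point to be the uniform control that justifies the dominated convergence in Step 2. On $v>1$ the integrand is bounded by $2$, but $\mu_n$ is supported up to the growing endpoint $1/c$, so we need the tail mass on $(M,1/c]$ to be uniformly at most $2/M$. The comparison $\mu_n\le(1+c/n)v^{-2}dv$ should settle this. The $O(1/n)$ corrections coming from $c/n$ and from the endpoint must also be checked to vanish in the centering; they do, since $c\to0$.
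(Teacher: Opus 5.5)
Your proposal is correct and follows essentially the same route as the paper: Lévy's continuity theorem for the triangular array with Lévy measure $\mu_n(dv)=n(c+n)(1+nv)^{-2}\,dv$, the truncated-mean computation giving the drift $\log n-1+o(1)$ so that the centering tends to $a+1$, and the linearization $n\log(1+z)=nz+O(n|z|^2)$ with $n|z|^2=O((\log n)^2/n)$. The differences are cosmetic. You center $\sum_j V_j$ rather than $X_n$, you bound $|z|$ by $|s|\,\mathbb E V$ instead of splitting at $1/|s|$, and your domination $\mu_n\le(1+c/n)v^{-2}\,dv$ simply spells out the paper's ``elementary control at $0$ and $\infty$'' in the dominated-convergence step.
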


\begin{proof}
The characteristic function of $X_n$ is
\[
\phi_n(s)
=
e^{is/c}
\left(
\mathbb E e^{-isV}
\right)^n.
\]
Introduce the finite L\'evy measure
\[
\mu_n(dx)
:=
nq_n(x)\,dx
=
\frac{n(c+n)}{(1+nx)^2}
\mathbf 1_{\{0\le x\le1/c\}}\,dx,
\]
of total mass $\mu_n([0,1/c])=n$.
For every fixed $x>0$,
\[
\frac{n(c+n)}{(1+nx)^2}
\longrightarrow
\frac1{x^2}.
\]

For fixed $s$ and sufficiently large $n$, choose the logarithm of
$1+z_n(s)/n$ near $1$ and use the corresponding characteristic exponent
for $\phi_n(s)$.  With this convention, we write
\begin{align}
\log\phi_n(s)
&=
\frac{is}{c}
+
n\log\left(
1+\frac{z_n(s)}{n}
\right),
\qquad
z_n(s):=\int(e^{-isx}-1)\,\mu_n(dx).
\label{eq:logphi-pre}
\end{align}
Fix $s$.  Using $|e^{-isx}-1|\le\min(2,|s|x)$ and splitting the integral at
$x=1/|s|$,
\begin{equation}
|z_n(s)|
\ \le\
|s|\int_0^{1/|s|}x\,\mu_n(dx)+2\,\mu_n\bigl((1/|s|,1/c]\bigr)
\ =\ O_s(\log n),
\label{eq:zn-bound}
\end{equation}
by the elementary evaluation \eqref{eq:drift-integral} below.  Thus
$z_n(s)/n\to0$, and the principal branch of the logarithm satisfies
\[
n\log\left(1+\frac{z_n}{n}\right)
=
z_n+O\!\left(\frac{|z_n|^2}{n}\right)
=
z_n+O_s\!\left(\frac{\log^2 n}{n}\right)
=
z_n+o(1).
\]
(The hypothesis needed here is $z_n=o(\sqrt n)$, not boundedness of
$z_n$; \eqref{eq:zn-bound} supplies it.)  Hence
\begin{align}
\log\phi_n(s)
&=
\frac{is}{c}
+
\int_0^{1/c}
(e^{-isx}-1)\,\mu_n(dx)
+o(1)
\nonumber\\
&=
is\left[
\frac1c-\int_0^1x\,\mu_n(dx)
\right]
\nonumber\\
&\quad+
\int_0^{1/c}
\left(
e^{-isx}-1+isx\mathbf 1_{\{x\le1\}}
\right)
\mu_n(dx)
+o(1).
\label{eq:LK-decomp}
\end{align}

The compensated integral converges, by dominated convergence on compact
subintervals and elementary control at $0$ and $\infty$, to
\[
\int_0^\infty
\left(
e^{-isx}-1+isx\mathbf 1_{\{x\le1\}}
\right)
\frac{dx}{x^2}.
\]

It remains to identify the drift.  Substituting $u=nx$,
\begin{align}
\int_0^1x\,\mu_n(dx)
&=
\left(1+\frac cn\right)\int_0^n\frac{u\,du}{(1+u)^2}
=
\left(1+\frac cn\right)
\left[
\log(1+n)-1+\frac{1}{1+n}
\right].
\label{eq:drift-integral}
\end{align}
Since $c=1/(\log n+a)$,
\[
\frac1c-\int_0^1x\,\mu_n(dx)
\longrightarrow a+1.
\]
Consequently
\begin{align}
\log\phi_n(s)
\longrightarrow
is(a+1)
+
\int_0^\infty
\left(
e^{-isx}-1+isx\mathbf 1_{\{x\le1\}}
\right)
\frac{dx}{x^2}.
\end{align}
By Definition~\ref{def:Lambda}, the limiting characteristic function is
that of $(a+1)-\Lambda$.  L\'evy's continuity theorem gives
\eqref{eq:stable-convergence}.
\end{proof}

\subsection{Uniform density and logarithmic moment control}

Weak convergence alone is insufficient for the logarithmic observables
appearing in the Rayleigh quotient, because $\log x$ is singular at
$x=0$.  We therefore record two uniform estimates.

\begin{lemma}[Uniform Fourier integrability and bounded densities]
\label{lem:uniform-density}
For fixed $a$, with $c^{-1}=\log n+a$, there exists a constant
$C_a<\infty$ such that, for all sufficiently large $n$,
\begin{equation}
\int_{\mathbb R}|\phi_n(s)|\,ds
\le C_a.
\label{eq:phi-L1}
\end{equation}
Consequently $X_n$ possesses a density $f_n$ satisfying
\begin{equation}
\sup_{n\ge n_0}\|f_n\|_\infty<\infty.
\label{eq:density-uniform}
\end{equation}
In particular, uniformly for $0<\delta\le1$,
\begin{equation}
\mathbb P(0<X_n\le\delta)
\ll_a \delta.
\label{eq:small-ball}
\end{equation}
\end{lemma}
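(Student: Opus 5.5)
The plan is to bound $|\phi_n(s)|=|\mathbb E e^{-isV}|^n$ (the factor $e^{is/c}$ has modulus one) separately in three frequency ranges. Throughout I take $n$ large enough that $1/c=\log n+a\ge 1$.

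\emph{Low and intermediate frequencies.} For every characteristic function $\psi$ one has $|\psi|\le \exp\{-(1-\operatorname{Re}\psi)\}$, because $1-x\le e^{-x}$ and $|\psi|\le1$. Applied to $\mathbb E e^{-isV}$ this gives
\[
|\phi_n(s)|\le \exp\Bigl\{-\int_0^{1/c}(1-\cos sx)\,\mu_n(dx)\Bigr\},
\]
with $\mu_n$ the finite L\'evy measure from the proof of Lemma~\ref{lem:stable-limit}. On $x\ge 1/n$ we have $(1+nx)^2\le 4n^2x^2$, so the density of $\mu_n$ is at least $1/(4x^2)$ there. Restricting to $[1/n,1/c]$ and substituting $y=|s|x$ gives
\[
\int_0^{1/c}(1-\cos sx)\,\mu_n(dx)\ \ge\ \frac{|s|}{4}\int_{|s|/n}^{|s|/c}\frac{1-\cos y}{y^2}\,dy .
\]
Fix a constant $K$ (say $K=8$) and consider $2\pi c\le |s|\le Kn$. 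In this range the lower limit is at most $K$ and the upper limit is at least $2\pi$. The integral is therefore at least $\int_{2\pi}^{\max(K,2\pi)+2\pi}(1-\cos y)y^{-2}\,dy=:\kappa_K>0$. This yields $|\phi_n(s)|\le e^{-\kappa_K|s|/4}$, which is integrable uniformly in $n$. The remaining band $|s|<2\pi c\le 2\pi$ is handled by the trivial bound $|\phi_n|\le1$ and contributes at most $4\pi$.

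\emph{High frequencies.} The single-summand density $q_n$ in \eqref{eq:V-density} is monotone decreasing on $[0,1/c]$ with $q_n(0)=c+n$. One integration by parts gives
\[
|\mathbb E e^{-isV}|\le \frac{q_n(0)+q_n(1/c)+\int|q_n'|}{|s|}\le\frac{2(c+n)}{|s|}\le\frac{3n}{|s|}.
\]
This is the same bound as the constant $A$ in Section~\ref{sup:fourier_truncation}, here in the rescaled variable. For $|s|\ge Kn$ with $K=8$ it follows that
\[
\int_{|s|\ge Kn}|\phi_n|\le 2\int_{Kn}^\infty (3n/s)^n\,ds=\frac{2Kn}{n-1}(3/K)^n,
\]
and the right-hand side tends to $0$. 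Together with the previous paragraph this proves \eqref{eq:phi-L1} with $C_a$ independent of $n\ge n_0$. The only role of $a$ is to decide how large $n_0$ must be for $1/c\ge1$ and $c+n\le \tfrac32 n$.

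\emph{Consequences.} Since $\phi_n\in L^1$, Fourier inversion gives a continuous density $f_n=\frac1{2\pi}\int e^{-isx}\phi_n(s)\,ds$. Hence $\|f_n\|_\infty\le C_a/(2\pi)$, which is \eqref{eq:density-uniform}. Integrating this bound over $(0,\delta]$ gives \eqref{eq:small-ball}.

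The main obstacle is the intermediate range $1\ll|s|\lesssim n$. There the summand's characteristic function is close to $1$, so no single-factor decay is available, and one must extract a linear-in-$|s|$ lower bound on $n(1-\operatorname{Re}\psi)$ that holds uniformly in $n$. The comparison of $\mu_n$ with $x^{-2}\,dx/4$ on $[1/n,1/c]$ is what supplies it. The one point to check carefully is that the ranges overlap: the linear-decay bound must extend past $|s|=n$ up to $Kn$, so that the $1/|s|$ estimate takes over in a regime where its $n$-th power is already summable.
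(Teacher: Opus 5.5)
Your low/intermediate-frequency estimate rests on the inequality $|\psi|\le\exp\{-(1-\operatorname{Re}\psi)\}$, and that inequality is false. From $1-x\le e^{-x}$ you only get $\operatorname{Re}\psi\le e^{-(1-\operatorname{Re}\psi)}$. Since $|\psi|\ge\operatorname{Re}\psi$, nothing follows for $|\psi|$: a point mass at $x_0$ with $sx_0=\pi$ has $|\psi|=1>e^{-2}$. The underlying issue is that $|\psi_n(s)|$ is invariant under translation of $V$, whereas $1-\operatorname{Re}\psi_n(s)=n^{-1}\int(1-\cos sx)\,\mu_n(dx)$ is not. A lower bound on the latter therefore says nothing about the former.

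For the present $V$ the failure is real, not just formal. Take $|s|=\lambda n$ with $\lambda$ small and fixed. Then $\psi_n(\lambda n)\to\int_0^\infty e^{-i\lambda u}(1+u)^{-2}\,du=1-\tfrac{\pi}{2}\lambda+i\lambda(\log\lambda+\gamma_{\mathrm E})+O(\lambda^2\log(1/\lambda))$. So the imaginary part exceeds $1-\operatorname{Re}\psi_n$ by a logarithmic factor, and $|\psi_n|^2-e^{-2(1-\operatorname{Re}\psi_n)}\to\lambda^2[(\log\lambda+\gamma_{\mathrm E})^2-\pi^2/4]+O(\lambda^3\log^2(1/\lambda))$. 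This is positive, about $1.4\times10^{-3}$ at $\lambda=10^{-2}$. Raising to the power $n/2$, your displayed bound $|\phi_n(s)|\le\exp\{-\int_0^{1/c}(1-\cos sx)\,\mu_n(dx)\}$ fails at $|s|=n/100$ by a factor exponential in $n$. The final estimate $|\phi_n(s)|\le e^{-\kappa|s|}$ you state happens to be true, but your argument does not prove it.

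What is missing is a phase-free quantity. The paper uses $1-|\psi_n(s)|^2=\mathbb E[1-\cos(s(V-V'))]$, with $V'$ an independent copy, and bounds it below by its two-interval argument. Your L\'evy-measure comparison can be rescued in the same spirit. With $\theta=-\arg\psi_n(s)$ one has $|\psi_n(s)|=\mathbb E\cos(sV-\theta)$, hence $|\phi_n(s)|\le\exp\{-\int(1-\cos(sx-\theta))\,\mu_n(dx)\}$. The bound $\mu_n\ge dx/(4x^2)$ on $[1/n,1/c]$ then still gives linear decay, provided the $y$-window contains a full period, because $\int_{y_0}^{y_0+2\pi}(1-\cos(y-\theta))y^{-2}\,dy\ge 2\pi/(y_0+2\pi)^2$ uniformly in $\theta$.

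Take the sliding window $[|s|/n,|s|/n+2\pi]$. It lies in $[|s|/n,|s|/c]$ once $|s|(1/c-1/n)\ge2\pi$, and it gives $|\phi_n(s)|\le\exp\{-\pi|s|/(2(K+2\pi)^2)\}$ for $2\pi c/(1-c/n)\le|s|\le Kn$. You need this change anyway: your fixed window $[2\pi,\max(K,2\pi)+2\pi]$ is not contained in $[|s|/n,|s|/c]$ near either end of your range, for example at $|s|=Kn$ or $|s|=2\pi c$.

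Your high-frequency integration by parts, and the passage to the uniform density and small-ball bounds, are correct and agree with the paper.
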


\begin{proof}
Since translation does not affect the modulus of the characteristic
function,
\[
|\phi_n(s)|
=
|\psi_n(s)|^n,
\qquad
\psi_n(s):=\mathbb E e^{-isV},
\]
and $|\psi_n|$ is even in $s$, so we may assume $s>0$.

\emph{An explicit two-interval bound.}
For $\beta>0$ let
\[
I_1(\beta):=\Bigl[0,\tfrac{\pi}{4}\beta\Bigr],
\qquad
I_2(\beta):=\Bigl[\tfrac{3\pi}{4}\beta,\tfrac{5\pi}{4}\beta\Bigr].
\]
These are disjoint, and for $v\in I_1(\beta)$, $v'\in I_2(\beta)$ we have
$v'-v\in[\tfrac{\pi}{2}\beta,\tfrac{5\pi}{4}\beta]$.  Taking
$\beta=1/s$ therefore gives $s(v'-v)\in[\pi/2,5\pi/4]$, on which
$\cos\le0$ and hence $1-\cos\bigl(s(v-v')\bigr)\ge1$.  Writing $V'$ for an
independent copy of $V$ and discarding all other configurations,
\begin{equation}
1-|\psi_n(s)|^2
=
\mathbb E\bigl[1-\cos\bigl(s(V-V')\bigr)\bigr]
\ \ge\
2\,\mathbb P\bigl(V\in I_1(1/s)\bigr)\,
\mathbb P\bigl(V\in I_2(1/s)\bigr).
\label{eq:two-interval}
\end{equation}
Both intervals lie in $[0,1/c]$ once $5\pi/(4s)\le1/c$, which holds for all
$s\ge1$ and $n$ large, since $1/c=\log n+a\to\infty$.

From \eqref{eq:V-density}, for $0\le\alpha<\beta\le1/c$,
\begin{equation}
\mathbb P(V\in[\alpha,\beta])
=
\frac{c+n}{n}
\left[\frac{1}{1+n\alpha}-\frac{1}{1+n\beta}\right].
\label{eq:V-interval-mass}
\end{equation}
Put $b:=\pi n/(4s)$.  Then \eqref{eq:V-interval-mass} gives
\[
\mathbb P\bigl(V\in I_1(1/s)\bigr)=\frac{c+n}{n}\cdot\frac{b}{1+b},
\qquad
\mathbb P\bigl(V\in I_2(1/s)\bigr)=\frac{c+n}{n}\cdot\frac{2b}{(1+3b)(1+5b)} .
\]

\emph{Low frequencies $1\le s\le n$.}  Here $b\ge\pi/4$, so
$b/(1+b)\ge(\pi/4)/(1+\pi/4)>\tfrac13$ and
$2b/\{(1+3b)(1+5b)\}\ge 2b/\{(3+4/\pi)b\cdot(5+4/\pi)b\}\gg 1/b\gg s/n$.
With \eqref{eq:two-interval},
\begin{equation}
1-|\psi_n(s)|^2
\gg \frac{s}{n},
\qquad
1\le s\le n,
\end{equation}
and therefore, for some $\kappa>0$ independent of $n$,
\begin{equation}
|\phi_n(s)|
=
\bigl(|\psi_n(s)|^2\bigr)^{n/2}
\le
\left(1-\frac{\kappa s}{n}\right)^{n/2}
\le e^{-\kappa s/2}
\qquad
(1\le s\le n).
\label{eq:lowfreq-decay}
\end{equation}

\emph{Intermediate frequencies $n\le s\le 8(c+n)$.}  Now $b\le\pi/4$, so
$b/(1+b)\gg b$ and $2b/\{(1+3b)(1+5b)\}\gg b$, whence
$1-|\psi_n(s)|^2\gg b^2\gg n^2/s^2$.  Since $s=O(n)$ throughout this range,
\begin{equation}
|\phi_n(s)|
\le e^{-\kappa' n}.
\label{eq:midfreq-decay}
\end{equation}

\emph{High frequencies.}  Integration by parts in
\[
\psi_n(s)
=
\int_0^{1/c}
e^{-isv}\frac{c+n}{(1+nv)^2}\,dv
\]
gives
\begin{equation}
|\psi_n(s)|
\le
\frac{2(c+n)}{s}.
\label{eq:highfreq-single}
\end{equation}
Thus, for $s\ge8(c+n)$,
\begin{equation}
|\phi_n(s)|
\le
\left(\frac14\right)^n
\left(\frac{8(c+n)}{s}\right)^n
\le
\left(\frac14\right)^{n},
\label{eq:highfreq-decay}
\end{equation}
and more precisely $|\phi_n(s)|\le(2(c+n)/s)^n$, whose integral over
$s\ge8(c+n)$ is $O\bigl((c+n)4^{-n}\bigr)$, hence uniformly bounded.

Combining \eqref{eq:lowfreq-decay},
\eqref{eq:midfreq-decay}, and
\eqref{eq:highfreq-decay}, together with the trivial bound
$|\phi_n(s)|\le1$ for $|s|\le1$, proves \eqref{eq:phi-L1}.

Fourier inversion then gives
\[
\|f_n\|_\infty
\le
\frac{1}{2\pi}\|\phi_n\|_{L^1},
\]
which proves \eqref{eq:density-uniform}; \eqref{eq:small-ball} follows
immediately.
\end{proof}

\begin{lemma}[Right-tail bound and logarithmic uniform integrability]
\label{lem:log-UI}
For fixed $a$, the random variables
\[
\log X_n\,\mathbf 1_{\{X_n>0\}}
\quad\text{and}\quad
(\log X_n)^2\,\mathbf 1_{\{X_n>0\}}
\]
are uniformly integrable.  Consequently, writing
\begin{align}
P_n&:=\mathbb P(X_n>0),\\
L_n&:=\mathbb E[\log X_n\,;\,X_n>0],\\
Q_n&:=\mathbb E[(\log X_n)^2\,;\,X_n>0],
\end{align}
we have
\begin{align}
P_n&\longrightarrow P_a:=\mathbb P(X_a>0)>0,
\label{eq:P-conv}\\
L_n&\longrightarrow L_a:=\mathbb E[\log X_a\,;\,X_a>0],
\label{eq:L-conv}\\
Q_n&\longrightarrow Q_a:=\mathbb E[(\log X_a)^2\,;\,X_a>0].
\label{eq:Q-conv}
\end{align}
\end{lemma}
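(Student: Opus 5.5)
Uniform integrability will be handled separately near $0$ and near $+\infty$. The convergence statements then follow from weak convergence (Lemma~\ref{lem:stable-limit}) together with the fact that the limit law has a continuous density.

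\emph{Near zero.} For $0<X_n\le1$, write $|\log X_n|^2\mathbf 1_{\{0<X_n\le\delta\}}$ and bound its expectation by the uniform density bound \eqref{eq:density-uniform}:
\[
\mathbb E\bigl[(\log X_n)^2;0<X_n\le\delta\bigr]\le \sup_n\|f_n\|_\infty\int_0^\delta(\log x)^2\,dx,
\]
and the right-hand side tends to $0$ as $\delta\to0$, uniformly in $n\ge n_0$. This handles the singularity at the simplex boundary for both $\log$ and $\log^2$.

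\emph{Near infinity.} Since $V_j\ge0$, we have $X_n\le 1/c=\log n+a$, but that bound grows, so we need a genuine right tail estimate. The event $X_n>y$ means $\sum_j V_j<1/c-y$, which is a left-tail event of the sum. We will use the Laplace transform: for $\theta>0$,
\[
\mathbb P(X_n>y)\le e^{-\theta y}\,\mathbb E e^{\theta X_n}=e^{-\theta y}e^{\theta/c}\bigl(\mathbb E e^{-\theta V}\bigr)^n.
\]
Then $n\log\mathbb E e^{-\theta V}\le -\int(1-e^{-\theta x})\mu_n(dx)$. The same drift computation as in \eqref{eq:drift-integral} shows that $\theta/c-\int(1-e^{-\theta x})\mu_n(dx)$ stays bounded in $n$ for fixed $\theta$ (say $\theta=1$): it converges to $\theta(a+1)+\int_0^\infty(1-e^{-\theta x}-\theta x\mathbf 1_{x\le1})x^{-2}dx$, i.e.\ the analytic continuation used in Lemma~\ref{lem:laplace}. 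Hence $\sup_n\mathbb E e^{X_n}<\infty$, so $\mathbb P(X_n>y)\ll_a e^{-y}$ uniformly. This gives uniform boundedness of $\mathbb E[(\log X_n)^4;X_n>1]$, hence uniform integrability of $(\log X_n)^2\mathbf 1_{\{X_n>1\}}$, and a fortiori of $\log X_n$.

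\emph{Passage to the limit.} Since $\mathbb P(X_a=0)=0$ (the density of $\Lambda$ is continuous), the maps $x\mapsto \mathbf 1_{\{x>0\}}$, $\log x\,\mathbf 1_{\{x>0\}}$ and $(\log x)^2\mathbf 1_{\{x>0\}}$ are continuous off a null set for the law of $X_a$. The continuous mapping theorem plus uniform integrability then yields \eqref{eq:P-conv}--\eqref{eq:Q-conv}. Positivity $P_a>0$ follows because $\Lambda$ has an everywhere positive density, so $\mathbb P(\Lambda<a+1)>0$.

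\emph{Main obstacle.} The main obstacle is the uniform right-tail bound. The exponential-moment argument requires controlling $\int_0^{1/c}(1-e^{-\theta x}-\theta x)\mu_n(dx)$ uniformly, including the error from replacing $n\log(1+z/n)$ by $z$. This works with real $\theta$ because $\log(1-u)\le -u$, so no branch issues arise.
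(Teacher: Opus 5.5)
Your proof is correct. It follows the same outline as the paper's: the uniform density bound of Lemma~\ref{lem:uniform-density} handles the logarithmic singularity at $0$, a Chernoff bound on the event $\sum_j V_j<1/c-x$ handles large $X_n$, and weak convergence plus uniform integrability gives the limits.

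The genuine difference is in the right tail. The paper proves \eqref{eq:Laplace-lower} uniformly for $1\le\lambda\le n$. This requires the exact evaluation through $e^{\beta}E_1(\beta)$, and the paper then optimizes over $\lambda$ to reach the double-exponential bound \eqref{eq:right-tail}, which mirrors the left tail of $\Lambda$ in Lemma~\ref{lem:laplace}.

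You fix $\theta=1$ instead. Then all you need is $\int(1-e^{-\theta x})\,\mu_n(dx)=\theta\log n+O_\theta(1)$. This follows directly from the drift identity \eqref{eq:drift-integral} already computed in Lemma~\ref{lem:stable-limit}. The compensated remainder is controlled using $\mu_n(dx)\le(1+c/n)x^{-2}\,dx$: it is at most $\theta^2(1+c/n)/2$ on $[0,1]$, and the mass of $\mu_n$ on $[1,1/c]$ is at most $1+c/n$.

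Your resulting bound $\mathbb P(X_n>y)\ll_a e^{-y}$ is exactly the $\lambda=1$ case of \eqref{eq:right-tail-preopt}. It is all the lemma needs, since it gives $\sup_n\mathbb E[X_n^4;X_n>1]<\infty$. So your route is shorter and avoids the $\lambda$-uniform computation; the paper's buys a sharper tail estimate that this statement does not use.

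There are two harmless slips. First, the limit of $\theta/c-\int(1-e^{-\theta x})\,\mu_n(dx)$ is $\theta(a+1)-\int_0^\infty(1-e^{-\theta x}-\theta x\mathbf 1_{\{x\le1\}})x^{-2}\,dx=\theta(a+1)+\log\mathbb E e^{-\theta\Lambda}$. You wrote a plus sign where there should be a minus. Second, in your ``main obstacle'' paragraph the compensator must carry $\mathbf 1_{\{x\le1\}}$. Without it, $\int_1^{1/c}\theta x\,\mu_n(dx)\approx\theta\log(1/c)$ is unbounded. Your main computation already does this correctly.
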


\begin{proof}
The negative logarithmic tail is controlled by
Lemma~\ref{lem:uniform-density}.  For $y\ge0$,
\[
\mathbb P(0<X_n<e^{-y})
\ll e^{-y}.
\]
Hence the negative parts of $\log X_n$ have uniformly bounded moments of
every fixed order.

For the positive tail, note from \eqref{eq:Xn-V} that
\[
X_n>x
\quad\Longleftrightarrow\quad
\sum_{j=1}^nV_j<\frac1c-x.
\]
For $\lambda>0$, Chernoff's inequality gives
\begin{equation}
\mathbb P(X_n>x)
\le
\exp\!\left(
\lambda\left(\frac1c-x\right)
\right)
\left(\mathbb E e^{-\lambda V}\right)^n.
\label{eq:Chernoff-start}
\end{equation}

We claim that there is a constant $C_a$, independent of $n$ and
$\lambda$, such that
\begin{equation}
n\bigl(1-\mathbb E e^{-\lambda V}\bigr)
\ \ge\
\lambda
\left(
\log\frac n\lambda-C_a
\right),
\qquad 1\le\lambda\le n .
\label{eq:Laplace-lower}
\end{equation}
Substituting $u=nv$ in \eqref{eq:V-density} and writing $\beta:=\lambda/n\in(0,1]$,
\begin{align}
n\bigl(1-\mathbb E e^{-\lambda V}\bigr)
&=
(c+n)\int_0^{n/c}
\bigl(1-e^{-\beta u}\bigr)
\frac{du}{(1+u)^2}.
\label{eq:Laplace-exact}
\end{align}
Rather than splitting the range, evaluate the integral exactly.  Integration
by parts gives, for the untruncated integral,
\[
\int_0^\infty\bigl(1-e^{-\beta u}\bigr)\frac{du}{(1+u)^2}
=
\beta\int_0^\infty\frac{e^{-\beta u}}{1+u}\,du
=
\beta\,e^{\beta}E_1(\beta).
\]
To make the estimate uniform over the entire range $0<\beta\le1$, note that
$e^{\beta}E_1(\beta)-\log(1/\beta)$ is continuous on $(0,1]$ and tends to
$-\gamma_{\mathrm E}$ as $\beta\downarrow0$.  Consequently,
\[
\sup_{0<\beta\le1}
\left|e^{\beta}E_1(\beta)-\log(1/\beta)\right|<\infty.
\]
The discarded tail satisfies
$\int_{n/c}^\infty(1+u)^{-2}du\le c/n$, contributing at most
$(c+n)c/n=O(1)\le O(\lambda)$ since $\lambda\ge1$.  Multiplying by $(c+n)=n(1+c/n)$ and using $\beta=\lambda/n$,
with $(c/n)\log(n/\lambda)\le(c/n)\log n=O(n^{-1})$, yields
\[
n\bigl(1-\mathbb E e^{-\lambda V}\bigr)
=
\lambda\left(\log\frac n\lambda-\gamma_{\mathrm E}+O(1)\right),
\]
which is \eqref{eq:Laplace-lower}.  (The coefficient $1$ in front of
$\log(n/\lambda)$ is essential: it is what cancels $1/c=\log n+a$
in \eqref{eq:Chernoff-start}.  A cruder split using
$1-e^{-y}\ge y/2$ produces the coefficient $1/2$ and the argument fails.)

Since $\log z\le z-1$ for $z>0$,
\[
n\log\mathbb E e^{-\lambda V}
\le
-n\bigl(1-\mathbb E e^{-\lambda V}\bigr).
\]
Using $c^{-1}=\log n+a$ in \eqref{eq:Chernoff-start} therefore gives
\begin{equation}
\mathbb P(X_n>x)
\le
\exp\left\{
\lambda(\log\lambda+A_a-x)
\right\},
\qquad 1\le\lambda\le n,
\label{eq:right-tail-preopt}
\end{equation}
with $A_a:=a+C_a$, enlarging $C_a$ if necessary so that $C_a\ge0$.

Choosing
\[
\lambda=e^{x-A_a-1}
\]
whenever this lies in $[1,n]$ makes the exponent equal to $-\lambda$, giving
the double-exponential bound
\begin{equation}
\mathbb P(X_n>x)
\le
\exp\left\{-e^{\,x-A_a-1}\right\},
\qquad
A_a+1\le x\le \log n+A_a+1 .
\label{eq:right-tail}
\end{equation}
For $x<A_a+1$ the trivial bound $\mathbb P\le1$ suffices, and for
$x>\log n+A_a+1$ we have $x>1/c$ (as $C_a>-1$), so
$\mathbb P(X_n>x)=0$ because $X_n\le1/c$ almost surely.

Thus both the positive and negative logarithmic tails are uniformly
integrable.  Combining this with
Lemma~\ref{lem:stable-limit}, and noting that the limiting stable law has
a continuous positive density, so that $\mathbb P(X_a=0)=0$ and $P_a>0$,
gives \eqref{eq:P-conv}--\eqref{eq:Q-conv}.
\end{proof}

\subsection{Asymptotics of the Rayleigh quotient}

We now return to the exact Rayleigh quotient.

Since $1-S_n=cX_n$, equations
\eqref{eq:asym-G} and \eqref{eq:asym-H} give, on $\{X_n>0\}$,
\begin{align}
G(1-S_n)
&=
\frac1n\log(1+nX_n),
\label{eq:G-X}\\
\frac{H(1-S_n)}{m_0}
&=
\frac{(c+n)X_n}{1+nX_n}.
\label{eq:H-X}
\end{align}
Therefore
\begin{equation}
R(F_{k,c})
=
c\left(1+\frac1n\right)
\left(1+\frac cn\right)
\frac{
\mathbb E[
\log^2(1+nX_n);\,X_n>0]
}{
\mathbb E[
\frac{(c+n)X_n}{1+nX_n};\,X_n>0]
}.
\label{eq:R-X-exact}
\end{equation}

Let
\[
\ell:=\log n,
\qquad
c=\frac1{\ell+a}.
\]

\begin{lemma}[Denominator asymptotics]
\label{lem:den-asym}
For fixed $a$,
\begin{equation}
\mathbb E\!\left[
\frac{(c+n)X_n}{1+nX_n};\,X_n>0
\right]
=
P_n+O\bigl(n^{-1/2}\bigr).
\label{eq:den-asym}
\end{equation}
\end{lemma}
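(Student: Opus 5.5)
The plan is to peel off an exact algebraic identity that isolates $P_n$, and then control the remainder using the uniform density bound of Lemma~\ref{lem:uniform-density}. On $\{X_n>0\}$ one has $(c+n)=n(1+c/n)$, so
\[
\frac{(c+n)X_n}{1+nX_n}
=
\Bigl(1+\frac cn\Bigr)\frac{nX_n}{1+nX_n}
=
\Bigl(1+\frac cn\Bigr)\Bigl(1-\frac{1}{1+nX_n}\Bigr).
\]
Taking expectations on $\{X_n>0\}$ gives the exact identity
\[
\mathbb E\!\left[\frac{(c+n)X_n}{1+nX_n};\,X_n>0\right]
=
\Bigl(1+\frac cn\Bigr)\Bigl(P_n-\mathbb E\Bigl[\frac{1}{1+nX_n};\,X_n>0\Bigr]\Bigr).
\]
Since $c=1/(\log n+a)\to0$ and $P_n\le1$, the factor $c/n$ contributes only $O(n^{-1})$. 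It therefore suffices to show
\[
E_n:=\mathbb E\bigl[(1+nX_n)^{-1};\,X_n>0\bigr]=O(n^{-1/2}).
\]

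For this I split the event at the threshold $\delta=n^{-1/2}\le1$. On $\{0<X_n\le\delta\}$ the integrand is at most $1$. The small-ball estimate \eqref{eq:small-ball} of Lemma~\ref{lem:uniform-density} then bounds this piece by $\mathbb P(0<X_n\le\delta)\ll_a\delta=n^{-1/2}$. On $\{X_n>\delta\}$ the integrand is at most $1/(n\delta)=n^{-1/2}$, so that piece is also $O(n^{-1/2})$. Adding the two pieces gives $E_n\ll_a n^{-1/2}$ for all $n\ge n_0$, which yields \eqref{eq:den-asym}.

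Alternatively, one can integrate directly against the density. Using \eqref{eq:density-uniform} and the support bound $X_n\le1/c$,
\[
E_n\le\sup_n\|f_n\|_\infty\int_0^{1/c}\frac{dx}{1+nx}
=O\!\Bigl(\frac{\log\log n}{n}\Bigr).
\]
This is stronger than required, and I would mention it as a remark.

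The only step with real content is the control of $E_n$ near $X_n=0$. This is exactly where weak convergence (Lemma~\ref{lem:stable-limit}) would be insufficient. The uniform density and small-ball bounds of Lemma~\ref{lem:uniform-density} are what make it work, and they are taken as given, so no serious obstacle remains beyond checking that the constants depend only on $a$.
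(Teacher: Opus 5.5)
Your argument is correct and is essentially the paper's proof. The paper bounds the defect $1-\frac{(c+n)X_n}{1+nX_n}=\frac{1-cX_n}{1+nX_n}\in\bigl[0,(1+nX_n)^{-1}\bigr]$ directly instead of factoring out $(1+c/n)$, then uses the same split at $\delta_n=n^{-1/2}$ with the small-ball bound of Lemma~\ref{lem:uniform-density}. One slip in your optional remark: $\int_0^{1/c}\frac{dx}{1+nx}=\frac1n\log(1+n/c)\sim\frac{\log n}{n}$, not $O(\log\log n/n)$, although this is still much better than the required $n^{-1/2}$.
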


\begin{remark}
The explicit rate, rather than a bare $o(1)$, is what the proof of
Theorem~\ref{thm:rational-asymptotic} requires: the denominator is divided
into a numerator of size $\ell^2$, so an additive error $\eta$ here
propagates to an error of size $\ell\eta$ in $R$.  An $o(1)$ statement
would only give $o(\ell)$, which is insufficient.
\end{remark}

\begin{proof}
For $0<X_n\le1/c$,
\[
0\le
1-
\frac{(c+n)X_n}{1+nX_n}
=
\frac{1-cX_n}{1+nX_n}
\le
\frac{1}{1+nX_n}.
\]
Choose
\[
\delta_n=n^{-1/2}.
\]
On $0<X_n\le\delta_n$, Lemma~\ref{lem:uniform-density} gives probability
$O(\delta_n)$.  On $X_n>\delta_n$,
\[
\frac{1}{1+nX_n}
\le
\frac1{n\delta_n}
=
n^{-1/2}.
\]
Hence the difference between the left-hand side of
\eqref{eq:den-asym} and $P_n$ is $O(n^{-1/2})$.
\end{proof}

\begin{lemma}[Numerator asymptotics]
\label{lem:num-asym}
For fixed $a$,
\begin{equation}
\mathbb E[
\log^2(1+nX_n);\,X_n>0]
=
\ell^2P_n+2\ell L_n+Q_n+O\bigl(\ell^2n^{-1/2}\bigr).
\label{eq:num-asym}
\end{equation}
\end{lemma}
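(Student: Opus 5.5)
The plan is to split the logarithm as $\log(1+nX_n)=\ell+\log X_n+\log\bigl(1+\tfrac{1}{nX_n}\bigr)$ on $\{X_n>0\}$. Write $r_n:=\log(1+1/(nX_n))\ge0$. Squaring gives
\[
\log^2(1+nX_n)=(\ell+\log X_n)^2+2(\ell+\log X_n)r_n+r_n^2 .
\]
Taking expectations on $\{X_n>0\}$, the first term is exactly $\ell^2P_n+2\ell L_n+Q_n$. All the work is therefore in showing that $\mathbb E[2(\ell+\log X_n)r_n+r_n^2;X_n>0]=O(\ell^2n^{-1/2})$.

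We split at $\delta_n=n^{-1/2}$, exactly as in Lemma~\ref{lem:den-asym}.

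\emph{Good region $X_n>\delta_n$.} Here $r_n\le 1/(nX_n)\le n^{-1/2}$. Since $X_n\le 1/c=\ell+a$, we have $|\ell+\log X_n|\le \ell+|\log X_n|$, and $\log X_n$ is at most $\log(\ell+a)$ from above and at least $\log\delta_n=-\tfrac12\ell$ from below. So $|\ell+\log X_n|=O(\ell)$, and the contribution is $O(\ell n^{-1/2})+O(n^{-1})$, which is well within the claimed bound.

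\emph{Bad region $0<X_n\le\delta_n$.} Do not expand here. Instead bound the whole original integrand and the main terms separately. On this region $0\le\log^2(1+nX_n)\le\log^2(1+\sqrt n)=O(\ell^2)$, while the probability is $O(\delta_n)=O(n^{-1/2})$ by the small-ball estimate \eqref{eq:small-ball}. The left side therefore contributes $O(\ell^2n^{-1/2})$.

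The main terms $(\ell+\log X_n)^2$ restricted to $\{0<X_n\le\delta_n\}$ need the bounded density from Lemma~\ref{lem:uniform-density}. For $j=0,1,2$,
\[
\mathbb E\bigl[|\log X_n|^j;0<X_n\le\delta_n\bigr]\le \|f_n\|_\infty\int_0^{\delta_n}|\log x|^j\,dx=O\bigl(\delta_n\,\ell^{j}\bigr),
\]
since $\int_0^\delta|\log x|^j\,dx\ll\delta(1+|\log\delta|)^j$. This gives $\mathbb E[(\ell+\log X_n)^2;0<X_n\le\delta_n]=O(\ell^2n^{-1/2})$. Adding the two regions produces \eqref{eq:num-asym}.

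The only delicate point is the bad region: near $X_n=0$ the expansion $\log(1+nX_n)\approx\ell+\log X_n$ breaks down, because $\log X_n\to-\infty$ while the true integrand stays bounded. The argument therefore relies on the uniform sup-norm density bound (not merely weak convergence) to control $\int|\log x|^2$ on the small interval with the explicit $\delta_n$ factor. It also relies on the a priori ceiling $X_n\le 1/c$ to prevent large positive values of $\log X_n$. Both ingredients are already in Lemmas~\ref{lem:uniform-density} and~\ref{lem:log-UI}, so no new estimate should be needed.
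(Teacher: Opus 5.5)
Your proof is correct and follows the paper's argument essentially verbatim: the same split at $\delta_n=n^{-1/2}$, the same exact identity $\log(1+nX_n)=\ell+\log X_n+r_n$ with $0\le r_n\le n^{-1/2}$ on $\{X_n>\delta_n\}$, and the uniform density bound of Lemma~\ref{lem:uniform-density} to control both $\log^2(1+nX_n)$ and $(\ell+\log X_n)^2$ separately on $\{0<X_n\le\delta_n\}$. The only cosmetic difference is on the good region: you bound $|\ell+\log X_n|=O(\ell)$ deterministically from $\delta_n<X_n\le 1/c$, whereas the paper uses the uniform bound on $\mathbb E[|\log X_n|;X_n>0]$ from Lemma~\ref{lem:log-UI}; either gives the cross term as $O(\ell n^{-1/2})$.
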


\begin{proof}
Let $\delta_n=n^{-1/2}$.

\emph{Region $X_n>\delta_n$.}  Here
\[
\log(1+nX_n)
=
\ell+\log X_n+r_n,
\qquad
0\le r_n=\log\left(1+\frac1{nX_n}\right)\le\frac1{nX_n}\le n^{-1/2},
\]
so that
\[
\log^2(1+nX_n)
=
(\ell+\log X_n)^2+2(\ell+\log X_n)r_n+r_n^2 .
\]
By Lemma~\ref{lem:log-UI} the quantities $\mathbb E[|\log X_n|;X_n>0]$ are
bounded uniformly in $n$, so
\[
\mathbb E\bigl[\,\bigl|2(\ell+\log X_n)r_n\bigr|+r_n^2;\,X_n>\delta_n\bigr]
=
O\bigl(\ell n^{-1/2}\bigr).
\]

\emph{Region $0<X_n\le\delta_n$.}  The uniform density bound of
Lemma~\ref{lem:uniform-density} gives
\begin{align}
\mathbb E\!\left[
(\ell+\log X_n)^2;\,0<X_n\le\delta_n
\right]
&\le
C\int_0^{\delta_n}(\ell+\log x)^2\,dx
\nonumber\\
&=
C\delta_n\Bigl[(\ell+\log\delta_n)^2-2(\ell+\log\delta_n)+2\Bigr]
=
O\bigl(\ell^2n^{-1/2}\bigr),
\label{eq:small-region-log}
\end{align}
using $\ell+\log\delta_n=\ell/2$.  Likewise, since $nX_n\le n\delta_n=n^{1/2}$
on this region,
\[
\mathbb E[\log^2(1+nX_n);\,0<X_n\le\delta_n]
\le
C\int_0^{\delta_n}\log^2(1+nx)\,dx
=
O\bigl(\ell^2 n^{-1/2}\bigr).
\]

Adding the two regions and recognising
$\mathbb E[(\ell+\log X_n)^2;X_n>0]=\ell^2P_n+2\ell L_n+Q_n$
proves \eqref{eq:num-asym}.
\end{proof}

\subsection{Main asymptotic theorem}

\begin{theorem}[Asymptotic defect of the rational trial family]
\label{thm:rational-asymptotic}
Fix $a\in\mathbb R$.  For all sufficiently large $k$, put
\[
n=k-1,
\qquad
c_k=\frac{1}{\log n+a}.
\]
For all sufficiently large $k$, $c_k>0$.  Let $F_{k,c_k}$ be the
single-channel rational trial function \eqref{eq:asym-trial}.  Then
\begin{equation}
R(F_{k,c_k})
=
\log k-\mathcal C(a)+o(1),
\qquad
k\to\infty,
\label{eq:main-asymptotic}
\end{equation}
where
\begin{equation}
\boxed{
\mathcal C(a)
=
a
-
2\,
\mathbb E\!\left[
\log X_a\mid X_a>0
\right]
}
\label{eq:C-a}
\end{equation}
and
\[
X_a=(a+1)-\Lambda
\]
with $\Lambda$ defined by \eqref{eq:Lambda-cf}.

Equivalently,
\begin{equation}
\lim_{k\to\infty}
\left(
\log k-R(F_{k,c_k})
\right)
=
\mathcal C(a).
\label{eq:defect-limit}
\end{equation}
\end{theorem}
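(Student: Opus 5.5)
Start from the exact identity \eqref{eq:R-X-exact} and insert the two asymptotic lemmas. With $\ell=\log n$ and $c=1/(\ell+a)$, Lemma~\ref{lem:num-asym} gives the numerator as $\ell^2P_n+2\ell L_n+Q_n+O(\ell^2n^{-1/2})$. Lemma~\ref{lem:den-asym} gives the denominator as $P_n+O(n^{-1/2})$. The prefactor is $c(1+1/n)(1+c/n)=c\,(1+O(1/n))$. Since $P_n\to P_a>0$ by Lemma~\ref{lem:log-UI}, the denominator is bounded away from zero for large $n$, and we may expand the quotient as
\[
\frac{\ell^2P_n+2\ell L_n+Q_n+O(\ell^2n^{-1/2})}{P_n+O(n^{-1/2})}
=\ell^2+2\ell\frac{L_n}{P_n}+\frac{Q_n}{P_n}+O\bigl(\ell^2n^{-1/2}\bigr).
\]

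Next we multiply by $c=1/(\ell+a)$, writing $\ell^2/(\ell+a)=\ell-a+a^2/(\ell+a)$ and $2\ell/(\ell+a)=2-2a/(\ell+a)$. This yields
\[
R(F_{k,c_k})=\ell-a+2\frac{L_n}{P_n}+O\!\left(\frac{1+|L_n|/P_n+Q_n/P_n}{\ell}\right)+O\bigl(\ell n^{-1/2}\bigr)+O(\ell/n).
\]
The quantities $L_n/P_n$ and $Q_n/P_n$ are bounded, because they converge by Lemma~\ref{lem:log-UI}. The error terms are therefore $O(1/\ell)+o(1)=o(1)$. We then replace $L_n/P_n$ by its limit $L_a/P_a=\mathbb E[\log X_a\mid X_a>0]$, using \eqref{eq:P-conv}--\eqref{eq:L-conv} together with $P_a>0$; this costs another $o(1)$. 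Finally, $\log k-\log n=\log(1+1/n)=o(1)$. Combining these gives $R=\log k-a+2\mathbb E[\log X_a\mid X_a>0]+o(1)$, which is \eqref{eq:main-asymptotic} with $\mathcal C(a)$ as in \eqref{eq:C-a}. The equivalent limit statement \eqref{eq:defect-limit} is just a rewriting, and $c_k>0$ holds as soon as $\log n>-a$.

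The main obstacle is bookkeeping the error orders so that nothing of size $o(\ell)$ survives multiplication by $c\sim1/\ell$. For this reason the explicit $O(n^{-1/2})$ rate in Lemma~\ref{lem:den-asym} is essential: an error $\eta$ in the denominator becomes $\ell^2\eta/P_n$ in the quotient and $\ell\eta$ after multiplying by $c$, and this is negligible only because $\ell n^{-1/2}\to0$. I would therefore check carefully that the expansion $1/(P_n+\eta)=P_n^{-1}(1+O(\eta))$ is applied with the $\ell^2$-size numerator and not only to the leading term. The second-order $Q_n$ term needs no rate, only boundedness.
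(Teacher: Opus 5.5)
Your proposal is correct and follows the paper's proof essentially step for step. You insert Lemmas~\ref{lem:num-asym} and~\ref{lem:den-asym} into \eqref{eq:R-X-exact}, expand the quotient with an $O(\ell^2n^{-1/2})$ error using $P_n\to P_a>0$, and multiply by $c=(\ell+a)^{-1}$. Your identities for $\ell^2/(\ell+a)$ and $2\ell/(\ell+a)$ reproduce exactly the finite-$n$ expansion \eqref{eq:finite-n-expansion}, after which you pass to the limit $L_n/P_n\to L_a/P_a$. Your point that the explicit $O(n^{-1/2})$ rate in the denominator lemma is what makes the propagated error $\ell\eta$ negligible is the same observation the paper makes in its remark after Lemma~\ref{lem:den-asym}.
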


\begin{proof}
Write $\mathrm{Num}$ and $\mathrm{Den}$ for the numerator and denominator
in \eqref{eq:R-X-exact}.  By Lemmas~\ref{lem:den-asym} and
\ref{lem:num-asym},
\[
\mathrm{Num}=\ell^2P_n+2\ell L_n+Q_n+O\bigl(\ell^2n^{-1/2}\bigr)
=O(\ell^2),
\qquad
\mathrm{Den}=P_n+O\bigl(n^{-1/2}\bigr),
\]
and $P_n\to P_a>0$ by Lemma~\ref{lem:log-UI}, so $\mathrm{Den}$ is bounded
away from $0$ for large $n$.  Hence
\[
\frac{\mathrm{Num}}{\mathrm{Den}}
=
\frac{\ell^2P_n+2\ell L_n+Q_n}{P_n}
+O\bigl(\ell^2n^{-1/2}\bigr).
\]
Multiplying by $c(1+\tfrac1n)(1+\tfrac cn)=c\bigl(1+O(n^{-1})\bigr)$ and
using $c=(\ell+a)^{-1}$,
\begin{equation}
R(F_{k,c})
=
\frac{
\ell^2
+
2\ell\,L_n/P_n
+
Q_n/P_n
}{
\ell+a
}
+O\bigl(\ell\,n^{-1/2}\bigr).
\label{eq:R-expansion-pre}
\end{equation}
A direct rearrangement yields the finite-$n$ expansion
\begin{align}
\ell-R(F_{k,c})
&=
a
-
2\frac{L_n}{P_n}
\nonumber\\
&\quad-
\frac{
Q_n/P_n
-
2aL_n/P_n
+
a^2
}{
\ell+a
}
+O\bigl(\ell\,n^{-1/2}\bigr).
\label{eq:finite-n-expansion}
\end{align}
By Lemma~\ref{lem:log-UI},
\[
\frac{L_n}{P_n}
\longrightarrow
\frac{L_a}{P_a}
=
\mathbb E[\log X_a\mid X_a>0],
\]
while the fraction on the second line of
\eqref{eq:finite-n-expansion} tends to zero.  Hence
\[
\ell-R(F_{k,c})
\longrightarrow
a
-
2\mathbb E[\log X_a\mid X_a>0].
\]
Finally,
\[
\log k-\log(k-1)=O(n^{-1}),
\]
so $\ell$ may be replaced by $\log k$.  This proves
\eqref{eq:main-asymptotic} and \eqref{eq:defect-limit}.
\end{proof}

\subsection{Consequence for $M_k$}

Define
\begin{equation}
\mathcal C_*
:=
\inf_{a\in\mathbb R}\mathcal C(a).
\label{eq:C-star}
\end{equation}

\begin{corollary}[Asymptotic lower bound for $M_k$]
\label{cor:Mk-asym}
The Maynard variational constant satisfies
\begin{equation}
\liminf_{k\to\infty}
\bigl(M_k-\log k\bigr)
\ge
-\mathcal C_*.
\label{eq:Mk-liminf}
\end{equation}
Equivalently,
\begin{equation}
M_k
\ge
\log k-\mathcal C_*-o(1).
\label{eq:Mk-Cstar}
\end{equation}
\end{corollary}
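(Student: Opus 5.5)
The plan is to combine Theorem~\ref{thm:rational-asymptotic} with the elementary fact that every admissible trial function gives a lower bound for $M_k$. The one subtlety is that $\mathcal C_*$ is an infimum that need not be attained, so we pass through an arbitrary $\epsilon>0$.

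First we check admissibility. For each fixed $a\in\mathbb R$ and all sufficiently large $k$, we have $c_k=1/(\log(k-1)+a)>0$. Then $F_{k,c_k}$ in \eqref{eq:asym-trial} is a nonnegative, bounded, symmetric function supported on $\mathcal R_k$. Hence it is square-integrable and nonzero. Since $F_{k,c_k}$ is symmetric, the definition of $M_k$ (Section~\ref{sup:maynard_variational_details}) gives
\[
M_k\ge k\,J_k^{(1)}(F_{k,c_k})/I_k(F_{k,c_k})=R(F_{k,c_k}).
\]
This is exactly the identification used in the probabilistic reduction of Section~\ref{sup:probabilistic_reduction}.

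Second, we apply the theorem. Fix $\epsilon>0$. Assume for the moment that $\mathcal C_*>-\infty$, and choose $a_\epsilon$ with $\mathcal C(a_\epsilon)\le\mathcal C_*+\epsilon$. Theorem~\ref{thm:rational-asymptotic} with $a=a_\epsilon$ gives $R(F_{k,c_k})=\log k-\mathcal C(a_\epsilon)+o(1)$. Therefore
\[
\liminf_{k\to\infty}(M_k-\log k)\ge-\mathcal C(a_\epsilon)\ge-\mathcal C_*-\epsilon.
\]
Letting $\epsilon\downarrow0$ yields \eqref{eq:Mk-liminf}.

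The equivalent form \eqref{eq:Mk-Cstar} is a restatement of the liminf. Define $\delta_k:=\max\{0,-(M_k-\log k+\mathcal C_*)\}$. The liminf bound says exactly that $\delta_k\to0$, and $M_k\ge\log k-\mathcal C_*-\delta_k$ holds by construction.

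The only point needing care is finiteness of $\mathcal C_*$. If $\mathcal C_*=-\infty$, the same argument applied with any $a$ satisfying $\mathcal C(a)<-B$, for arbitrary $B$, would give $\liminf_{k\to\infty}(M_k-\log k)=+\infty$. That contradicts the Cauchy--Schwarz ceiling $M_k\le\frac{k}{k-1}\log k=\log k+o(1)$. So $\mathcal C_*>-\infty$ is automatic, and no coercivity property of $\mathcal C$ is required. In particular we never need the infimum to be attained, nor Lemma~\ref{lem:C-coercive}. The finiteness of each individual $\mathcal C(a)$ is supplied by Lemma~\ref{lem:log-UI}, via $L_a$ finite and $P_a>0$.
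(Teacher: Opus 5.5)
Your proof is correct and is essentially the paper's argument: fix $\varepsilon>0$, pick a fixed shift $a_\varepsilon$ with $\mathcal C(a_\varepsilon)\le\mathcal C_*+\varepsilon$, use $M_k\ge R(F_{k,c_k})$ together with Theorem~\ref{thm:rational-asymptotic}, and let $\varepsilon\downarrow0$. Your additional remarks are valid and make explicit a finiteness point that the paper's proof tacitly assumes; they cover the admissibility of $F_{k,c_k}$, the $\delta_k$ reformulation of the two forms, and the exclusion of $\mathcal C_*=-\infty$ via the ceiling $M_k\le\frac{k}{k-1}\log k$.
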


\begin{proof}
For every $\varepsilon>0$, choose a fixed
$a_\varepsilon\in\mathbb R$ such that
\[
\mathcal C(a_\varepsilon)
\le \mathcal C_*+\varepsilon.
\]
Since $M_k$ is the supremum over admissible trial functions,
Theorem~\ref{thm:rational-asymptotic} gives
\[
M_k
\ge
R(F_{k,c_k})
=
\log k-\mathcal C(a_\varepsilon)+o(1)
\]
for the choice
\[
c_k^{-1}=\log(k-1)+a_\varepsilon.
\]
Thus
\[
\liminf_{k\to\infty}
(M_k-\log k)
\ge
-\mathcal C_*-\varepsilon.
\]
Letting $\varepsilon\downarrow0$ proves \eqref{eq:Mk-liminf}.
\end{proof}

The infimum in \eqref{eq:C-star} may be restricted to a half-line bounded
above, since $\mathcal C$ grows without bound in the positive direction.

\begin{lemma}[Growth of $\mathcal C$ for large positive shift]
\label{lem:C-coercive}
With $\kappa=\mathbb E[(-\Lambda)^+]<\infty$ as in \eqref{eq:kappa-finite},
there is $a_0$ such that for all $a\ge a_0$,
\begin{equation}
\mathcal C(a)
\ \ge\
a-2\log\bigl(2(a+1+\kappa)\bigr)
\ \xrightarrow[a\to\infty]{}\ \infty .
\label{eq:C-coercive}
\end{equation}
In particular $\inf_{a\in\mathbb R}\mathcal C(a)
=\inf_{a\le a_1}\mathcal C(a)$ for some finite $a_1$.
\end{lemma}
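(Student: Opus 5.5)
We need an upper bound on $\mathbb E[\log X_a\mid X_a>0]$. The plan is to control this conditional logarithm by Jensen's inequality and a first-moment estimate for $X_a$ on $\{X_a>0\}$.

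\textbf{Step 1: Jensen.} Since $\log$ is concave, Jensen's inequality applied to the conditional law gives
\[
\mathbb E[\log X_a\mid X_a>0]\le \log \mathbb E[X_a\mid X_a>0]=\log\frac{\mathbb E[X_a^+]}{P_a}.
\]
It therefore suffices to bound $\mathbb E[X_a^+]$ from above and $P_a$ from below.

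\textbf{Step 2: the numerator.} Because $X_a=(a+1)-\Lambda$, we have
\[
X_a^+\le (a+1)^++(-\Lambda)^+ .
\]
For $a\ge0$ this yields $\mathbb E[X_a^+]\le a+1+\kappa$, with $\kappa$ finite by Lemma~\ref{lem:laplace}.

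\textbf{Step 3: the denominator.} We have
\[
P_a=\mathbb P(\Lambda<a+1)\to1 \quad\text{as } a\to\infty,
\]
since $\Lambda$ is an almost surely finite random variable. Hence there is $a_0$ such that $P_a\ge\tfrac12$ for all $a\ge a_0$. This is the only place where $a_0$ enters, and it is purely qualitative. If an explicit $a_0$ were wanted, one could apply Markov's inequality to the right tail of $\Lambda$; the finite Laplace transform only controls the left tail, so one would instead use a truncated-moment estimate. That is not needed here.

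\textbf{Step 4: assemble.} Combining Steps 1--3, for $a\ge a_0$,
\[
\mathbb E[\log X_a\mid X_a>0]\le\log\bigl(2(a+1+\kappa)\bigr),
\]
and therefore
\[
\mathcal C(a)\ge a-2\log\bigl(2(a+1+\kappa)\bigr).
\]
The right-hand side tends to $\infty$ because $a$ dominates $\log a$. This is the first claim.

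\textbf{Step 5: restricting the infimum.} The final claim follows once we know $\mathcal C_*<\infty$, which holds because any single $a$ gives a finite value $\mathcal C(a)$. Fix $a'$ and choose $a_1\ge\max(a_0,a')$ such that the lower bound exceeds $\mathcal C(a')$ for every $a>a_1$. Then no $a>a_1$ can improve on $\mathcal C(a')$, so the infimum is attained over $a\le a_1$.

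\textbf{Main obstacle.} The only care needed is in Step 1: $\mathbb E[\log X_a\mid X_a>0]$ must be finite, or at least not $+\infty$, for Jensen's inequality to be meaningful. This follows from Lemma~\ref{lem:log-UI}: $L_a$ is a limit of uniformly integrable quantities and is therefore finite.
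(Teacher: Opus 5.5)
Your proof is correct and follows the paper's argument essentially step for step. Both use Jensen's inequality under the conditional law, the bound $\mathbb E[X_a;X_a>0]\le a+1+\kappa$ obtained from $X_a^+\le(a+1)+(-\Lambda)^+$, and $P_a\ge\tfrac12$ for large $a$; the paper states the first-moment bound for $a\ge-1$ where you use $a\ge0$, so you should choose $a_0\ge0$. Your Step~5 supplies the ``in particular'' clause that the paper leaves implicit, and it is sound provided ``attained over $a\le a_1$'' is read as ``the infimum over $\mathbb R$ equals the infimum over $a\le a_1$''; attainment of $\mathcal C_*$ itself is not proved (Remark~\ref{rem:left-coercive}).
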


\begin{proof}
Since $X_a^+=((a+1)-\Lambda)^+\le(a+1)+(-\Lambda)^+$ for $a\ge-1$,
\[
\mathbb E[X_a;X_a>0]\le(a+1)+\kappa .
\]
Also $P_a=\mathbb P(\Lambda<a+1)\to1$, so $P_a\ge\tfrac12$ for $a\ge a_0$.
By Jensen's inequality applied to the concave function $\log$ under the
conditional law,
\[
\mathbb E[\log X_a\mid X_a>0]
\le
\log\mathbb E[X_a\mid X_a>0]
=
\log\frac{\mathbb E[X_a;X_a>0]}{P_a}
\le
\log\bigl(2(a+1+\kappa)\bigr),
\]
which gives \eqref{eq:C-coercive}.
\end{proof}

\begin{remark}[Behaviour as $a\to-\infty$, and attainment of the infimum]
\label{rem:left-coercive}
Numerical results suggest that $\mathcal C(a)\to\infty$ as
$a\to-\infty$ as well, and that the infimum in \eqref{eq:C-star} is
attained.  These assertions are not proved here.  The proposed mechanism
is that conditioning on $\{X_a>0\}=\{\Lambda<a+1\}$
for very negative $a$ forces $X_a$ to concentrate at $0$: writing
$y=-(a+1)$, the ratio
$\mathbb P(\Lambda<-y-t)/\mathbb P(\Lambda<-y)$ is expected to behave like
$\exp\{-c\,e^{y}t\}$ for small $t>0$, so that conditionally $X_a$ is
approximately exponential with rate of order $e^{-(a+1)}$, giving
$\mathbb E[\log X_a\mid X_a>0]=(a+1)+O(1)$ and hence
$\mathcal C(a)=-a+O(1)$.  A rigorous proof requires a two-sided
left-tail estimate for $\Lambda$; Lemma~\ref{lem:laplace} supplies only the
upper bound \eqref{eq:left-tail}, and the Jensen argument of
Lemma~\ref{lem:C-coercive} is too lossy in this regime because $P_a$ is
doubly-exponentially small.  We therefore state
\eqref{eq:C-star} as an infimum; none of the results below depend on
attainment.
\end{remark}

\begin{remark}[Scope of the optimization]
\label{rem:fixed-shift-optimality}
The quantity $\mathcal C_*$ is the optimal asymptotic constant
\emph{among the fixed-shift scalings}
\[
c_k^{-1}=\log(k-1)+a,
\qquad a\in\mathbb R.
\]
The theorem does not by itself prove that the same constant is optimal
over every possible sequence $c_k>0$ in the full single-channel rational
family.  Such a statement would additionally require showing that every
asymptotically competitive sequence satisfies
\[
c_k^{-1}-\log k=O(1),
\]
and controlling non-convergent or divergent shift sequences.  We do not
need this stronger statement for the lower bound
\eqref{eq:Mk-Cstar}.
\end{remark}

\subsection{Numerical interpretation}

Theorem~\ref{thm:rational-asymptotic} is an analytic statement and does
not depend on the numerical discovery or certification pipeline.  Numerical
quadrature of the limiting stable law can, however, be used to evaluate
$\mathcal C(a)$ and to locate a favorable fixed shift.  Two independent
routes are available: direct quadrature against the density of $\Lambda$,
and inversion of the explicit Laplace transform \eqref{eq:laplace-Lambda}.
Our numerical evaluation indicates a minimum near a small negative value of
$a$ and an asymptotic defect of approximately
\begin{equation}
\mathcal C_*
\approx 0.3343 .
\label{eq:Cstar-numerical}
\end{equation}
The value in \eqref{eq:Cstar-numerical} is a numerical estimate of the
infimum, not a certified enclosure established by this asymptotic analysis.
Certifying a finite-$k$ Rayleigh quotient and certifying the limiting
constant are separate tasks.

For an explicit asymptotic lower bound it suffices to choose a single
fixed shift $a_0$ and establish, by rigorous interval quadrature with
controlled truncation errors, an upper bound $\mathcal C(a_0)\le U$.
Theorem~\ref{thm:rational-asymptotic} then gives
\[
\liminf_{k\to\infty}(M_k-\log k)\ge-U,
\]
without any proof of global minimality.  By contrast, a rigorous enclosure
of $\mathcal C_*$ itself also requires a lower bound valid for every
$a\in\mathbb R$.  Any numerical search restricted to a finite interval
must therefore be supplemented by rigorous control of both excluded
tails, including $a\to-\infty$.  The upper bound on the left tail of
$\Lambda$ proved above does not by itself supply that control.

The finite-$k$ optimization exhibits the same qualitative scaling:
\[
c_k\sim\frac1{\log k},
\qquad
R(F_{k,c_k})=\log k-O(1).
\]
The fact that the discrepancy
$\log k-R(F_{k,c_k})$ varies only slowly over more than five orders of
magnitude in $k$ is therefore explained by the stable-law limit rather
than being an empirical coincidence.

\subsection{Finite-size corrections}

Expansion \eqref{eq:finite-n-expansion} also isolates the finite-size
correction.  All error terms accumulated in its derivation are
$O(\ell\,n^{-1/2})$, hence $o(\ell^{-m})$ for every fixed $m$; the
replacement of $\log n$ by $\log k$ contributes $O(n^{-1})$.  Consequently
\begin{align}
\log k-R(F_{k,c})
&=
a
-
2\frac{L_n}{P_n}
-
\frac{
Q_n/P_n
-
2aL_n/P_n
+
a^2
}{
\ell+a
}
+O\bigl(\ell\,n^{-1/2}\bigr),
\label{eq:second-order-identity}
\end{align}
Define
\[
d_n(a):=\frac{L_n}{P_n}-\frac{L_a}{P_a},
\qquad
B(a):=\frac{Q_a}{P_a}-2a\frac{L_a}{P_a}+a^2.
\]
Since $P_n\to P_a>0$, $L_n\to L_a$, and $Q_n\to Q_a$, the expansion above
implies
\begin{equation}
\log k-R(F_{k,c})
=\mathcal C(a)-2d_n(a)-\frac{B(a)}{\ell}+o(\ell^{-1}).
\label{eq:second-order-reduction}
\end{equation}
Here $d_n(a)\to0$, but no quantitative rate for this convergence has been
proved.  Thus $d_n(a)$ may dominate the displayed $1/\ell$ term, and
\eqref{eq:second-order-reduction} is not yet a two-term asymptotic expansion
with a determined coefficient.

A sufficient additional estimate is
\[
d_n(a)=o(\ell^{-1}),
\]
in which case
\[
\log k-R(F_{k,c})
=\mathcal C(a)-\frac{B(a)}{\log k}+o\bigl((\log k)^{-1}\bigr).
\]
More generally, if $\ell d_n(a)\to b(a)$, the coefficient of
$1/\log k$ is $-[B(a)+2b(a)]$.  A bound of only $d_n(a)=O(\ell^{-1})$
does not determine that coefficient.  Accordingly, a quantitative
stable-limit argument must establish an appropriate rate for these
conditional logarithmic moments, or their first-order asymptotics;
a distributional convergence rate alone requires an additional argument
to handle the logarithmic singularity.  Any fitted coefficient of
$1/\log k$ remains a numerical finite-size observation until such an
estimate is supplied.

\subsection{Interpretation}

The asymptotic mechanism can be summarized as follows.  Squaring the
single rational channel induces the one-dimensional density
\[
q_n(v)=\frac{c+n}{(1+nv)^2},
\]
whose centered $n$-fold sum converges to a spectrally positive
$1$-stable law.  Spectral positivity refers to the jumps; the limiting
law has support on all of $\mathbb R$.  Choosing
\[
c^{-1}=\log n+a
\]
centers the boundary variable
\[
X_n=\frac1c-\sum_{j=1}^nV_j
\]
at a non-degenerate limit.  Conditional on the event $X_n>0$, which is
exactly the simplex event $S_n<1$, the numerator of the Rayleigh quotient
contains
\[
\log^2(1+nX_n)
=
\bigl(\log n+\log X_n+o(1)\bigr)^2,
\]
whereas the denominator converges to the same survival probability.
The leading $\log^2 n$ term is therefore divided by
$c^{-1}\sim\log n$, producing a Rayleigh quotient of size $\log n$.
The conditional logarithmic moment of the limiting stable variable
supplies the non-trivial $O(1)$ defect.

Thus the rational family is not merely a finite-dimensional numerical
ansatz that happens to perform well at large $k$.  Its observed scaling is
the finite-$k$ manifestation of a stable-law limit, and the family gives
the explicit asymptotic lower bound
\[
M_k\ge\log k-\mathcal C_*-o(1).
\]

%% file: Section2/04Delsarte_SUPPLEMENTARY_v5.tex
\section{Supplementary Methods: higher-order Delsarte hierarchy}

Notation follows the Methods. Throughout, $\mathsf H(\cdot)$ denotes Shannon
entropy in bits and $H_2(p)=-p\log_2p-(1-p)\log_2(1-p)$ the binary entropy
function; $G$ is a normalized configuration on $\mathbb F_2^\ell$, i.e.\ a
probability distribution, and $\Phi(G,v)=\sum_u\sqrt{G(u)G(u+v)}$ is the
translation affinity at a shift $v\neq0$.

\subsection{Scale-up of the hierarchy to \texorpdfstring{$r\ge3$}{r>=3}}
\label{sup:scaleup}

The finite-length hierarchy was first implemented at $r=2$, where
$\mathrm{GL}(2,2)\cong S_3$ acts as the full symmetric group on the three
non-zero vectors of $\mathbb F_2^2$ and orbit representatives can be generated
by sorting. This shortcut cannot be generalized to $r=3$:
$|\mathrm{GL}(3,2)|=168$ whereas $|S_7|=5040$. Sorting therefore over-merges
genuinely distinct Fano-plane orbits; at $n=6$, $45$ true orbits collapse to
$30$ sorted classes. We instead enumerate $\mathrm{GL}(r,2)$ explicitly and
construct genuine type orbits under the group action.

For $S\subset[r]$, partial Fourier transforms factor after splitting
$\mathbb F_2^r=\mathbb F_2^S\oplus\mathbb F_2^{S^c}$:
\[
K^S_\alpha(\beta)
=
\prod_{w\in\mathbb F_2^{S^c}}
K^{(|S|)}_{\alpha(\cdot,w)}
\!\left(\beta(\cdot,w)\right),
\]
with zero value unless the $S^c$-marginals of $\alpha$ and $\beta$ agree. Each
factor is a lower-dimensional full Krawtchouk transform, so the implementation
recurses through the same exact row generator. Counting pairs of matrices of
types $(\alpha,\beta)$ in the two possible orders gives the reciprocity
\[
\binom{n}{\beta}K_\alpha(\beta)
=
\binom{n}{\alpha}K_\beta(\alpha).
\]
Combined with $\mathrm{GL}(r,2)$ equivariance, this allows a row to be generated
from one representative expansion per valid orbit. At $r=3$, $n=14$, $d=6$,
this reduces the number of expansions from $1235$ to $112$.

The resulting $r=3$ implementation reproduced the $r=2$ program when run at
$r=2$ and produced certified strict level-$3$ separations, including
$V_2(9,5)=44/9>V_3(9,5)=4$ and
$V_2(12,5)=24.260255\ldots>V_3(12,5)=16$.

\subsection{Exact dual certification and implementation checks}
\label{sup:exactcert}

The floating-point solver is used as a proposal mechanism for an active dual
set. The corresponding basic system is reconstructed at arbitrary precision,
rounded to dyadic rationals, and verified against every dual inequality. The
repair identity quoted in the Methods supplies an exactly feasible direction
and an explicit rational repair cost, so the certified value is an exact
rational upper bound on the level-$r$ optimum obtained by weak duality.

Three independent anchors were used. First, full and partial Krawtchouk rows
were compared with brute-force character sums over
$\mathbb F_2^{r\times n}$ for small $n$. Second, $r=1$ was checked against an
independent classical Delsarte implementation. Third, the published $r=2$
values of Loyfer and Linial were reproduced, including
$24.260255\ldots$ at $(13,6)$, $131.720587\ldots$ at $(16,6)$,
$32$ at $(16,8)$ and $(17,8)$, and $256$ at $(17,6)$ and $(20,8)$.

Structural invariant tests detected defects that value reproduction did not.
An early active-set implementation factorized a matrix shifted by one column
relative to the solved system, yielding $32.0000887\ldots$ instead of $32$ at
$(17,8)$. Two precision-state errors in dyadic rounding and polishing partially
masked this defect. These issues were corrected before the experiments
reported here; all final bounds use the repaired exact-verification path.

\subsection{Lift experiments and finite-length structural map}
\label{sup:lift}

Coregliano et al.\ construct an explicit lift of a level-one dual to level
$\ell$ with objective $V_1^\ell$ \cite{coregliano2025higher}. Because
the level-$\ell$ hierarchy bounds $|C|^\ell$, the lift must be compared in this
normalization. An initial level-one-normalized comparison reproduced $V_1$ but
violated higher-level feasibility by $+9949$, which served as a normalization
control before the systematic sweep.

The lift has no mass on partial families below the full transform. We therefore
compared the full-Fourier-only hierarchy with the complete partial-Fourier
program. Across twenty tested instances the full-only optimum never exceeded
$V_1^r$. At $r=2$ it was frequently exactly equal to the lift, for example
\[
36=6^2\quad(9,5),\qquad
576=24^2\quad(11,5),\qquad
144=12^2\quad(10,5),
\]
while the partial-Fourier program improved strictly. At $r=3$, by contrast, the
full-only hierarchy itself generally improved on the lift. The correction to a
lifted level-one solution is therefore not a common object across levels.

A separate scan at approximately fixed relative distance showed why these
finite instances were not used for asymptotic learning. At $\delta=0.40$, the
accessible finite-$n$ rates were $0.33$--$0.50$, compared with the first-MRRW
value $0.0815$, and the level-one-to-level-two gain decreased with $n$ at fixed
$d$. Extrapolation from the accessible $n$ range would therefore have mixed
strong finite-size effects with the asymptotic target.

\subsection{Normalization of the configuration objective}
\label{sup:normalization}

For a level-$\ell$ dual,
\[
|C|^\ell\le f(0),
\]
so an asymptotic bound obtained from a normalized configuration $G$ is
\[
R(C)\le \frac{\mathsf H(G)}{\ell}+O\!\left(\frac{\log n}{n}\right),
\]
and $\mathsf H(G)$ must not be compared directly with an ordinary code-rate
bound. Re-deriving this normalization exposed two discrepancies in the cited
preprint. First, the leading small-$\varepsilon$ coefficient of the first MRRW
expression is $\tfrac12\varepsilon^2\log_2(1/\varepsilon)$ rather than
$\tfrac14\varepsilon^2\log_2(1/\varepsilon)$: with
$p_\varepsilon=\tfrac{\varepsilon^2}{4}+O(\varepsilon^4)$ and
$\log_2(1/p_\varepsilon)=2\log_2(1/\varepsilon)+2+O(\varepsilon^2)$, one gets
$h(\varepsilon)=\tfrac12\varepsilon^2\log_2(1/\varepsilon)+O(\varepsilon^2)$.
Second, the closed form stated in their Lemma~6.10(i) is a lower bound on the
walk count rather than an equality, since its proof passes through
$\binom{m/2}{F}^2\ge\binom{m/2}{F}$. Neither affects the qualitative
conclusion of that work: the stated rate bound is off by a factor $2\ell$
relative to $\mathsf H(G)/\ell$ --- the $\ell$ being the rate normalization
above and the $2$ the same halving --- but the factor is applied on both sides
of the comparison made there.

\subsection{Reduced configuration search}
\label{sup:search}

With $G=p^2$, the configuration problem becomes
\[
\min_{\|p\|_2=1,\;p\ge0}
\frac1\ell \mathsf H(p^2)
\]
subject to quadratic translation-affinity constraints
$p^{\mathsf T}S_vp\ge\varepsilon$. The largest search space considered
($\ell=4$) has $15$ free probability coordinates. We therefore used direct
multistart sequential quadratic programming with analytic gradients rather than
a neural network. The campaign comprised $28$ instance solves over
$\ell\le4$ and $\varepsilon\in\{0.30,0.20,0.10,0.05\}$, using $400$--$600$
randomized starts per instance ($13{,}600$ local optimizations in total). One
start per instance used the CJJ vertex-uniform configuration and the remaining
starts were randomized positive-sphere points with varying concentration.

No run produced a configuration below the first MRRW benchmark. Recovered
objectives differed from MRRW by between $-3.2\times10^{-14}$ and
$-2.1\times10^{-16}$, consistent with floating-point noise, and the recovered
configurations agreed coordinatewise with the tensor-product quasirandom family
to within $10^{-11}$.

\subsection{The tensorization obstruction}
\label{sup:obstruction}

\subsubsection{Setting and hypotheses}

Let $G$ be a normalized configuration on $\mathbb F_2^{\ell}$, that is,
$G(u)\ge0$ and $\sum_uG(u)=1$. Write $\mathsf H(G)$ for its Shannon entropy
in bits, $H_2$ for binary entropy, and
\[
\Phi(G,v)=\sum_u\sqrt{G(u)G(u+v)}
\]
for its translation affinity. By Cauchy--Schwarz,
$0\le\Phi(G,v)\le1$.

For an even integer $m\ge2$ and $v\neq0$, the \emph{paired walk weight} is
\begin{equation}
W_m(G,v)
=\sum_F\frac{m!}{\prod_uF(u)!}\prod_uG(u)^{F(u)},
\label{eq:Wm}
\end{equation}
where the sum runs over $F:\mathbb F_2^{\ell}\to\mathbb Z_{\ge0}$ with
$\sum_uF(u)=m$ and $F(u)=F(u+v)$ for all $u$, and $0^0=1$.
For fixed $G,v,m$, this is the leading coefficient in the configuration
asymptotics of \cite{coregliano2025higher}:
\begin{equation}
A_v^m\Lambda_{n,G}(X)=n^mW_m(G,v)+o(n^m),
\qquad X\in\mathrm{config}_{n,\ell}^{-1}(nG).
\label{eq:cjj-asymptotics}
\end{equation}
Here $A_v$ is the adjacency operator of the column-translation graph and
$\Lambda_{n,G}$ is the indicator of the configuration fibre.
The combinatorial definition \eqref{eq:Wm} applies to every probability
distribution $G$; no limit in $n$ is needed for
Lemmas~\ref{lem:walk}--\ref{lem:jensen} or
Theorem~\ref{thm:obstruction}.

To interpret \eqref{eq:cjj-asymptotics} for a fixed $G$, let
\[
N_G=\{n\in\mathbb N:nG\in\mathrm{Config}_{n,\ell}\}
\]
and assume that $N_G$ is infinite. This holds whenever $G$ has rational
entries. Along $n\to\infty$ in $N_G$, every positive entry satisfies
$nG(u)=\Omega(n)$, as required by the configuration asymptotics.
This lattice condition is needed only for the connection to finite
configuration fibres, not for the entropy inequality below.

Fix $\varepsilon\in(0,1)$. We isolate the following two hypotheses:
\begin{itemize}
\item[\textbf{(A1)}]
\emph{Spanning.} The set $V\subseteq\mathbb F_2^{\ell}\setminus\{0\}$
of tested shifts spans $\mathbb F_2^{\ell}$.
\item[\textbf{(A2)}]
\emph{Finite-walk lower bound.} For every $v\in V$ there exist an even
integer $m_v\ge2$ and a constant $c_v\ge1$ such that
\[
W_{m_v}(G,v)\ge c_v\varepsilon^{m_v}.
\]
The orders $m_v$ and constants $c_v$ are fixed independently of $n$.
\end{itemize}

The spanning condition is equivalent to requiring that, for every
$i\neq0$, some $v\in V$ satisfies $\langle i,v\rangle=1$: failure to span
is equivalent to containment in $i^{\perp}$ for some non-zero $i$.
Condition~(25) of \cite{coregliano2025higher} is a sufficient finite-$n$
condition for the cited spectral construction. When it holds for a fixed
$G$ and a fixed even walk order along infinitely many admissible
blocklengths, \eqref{eq:cjj-asymptotics} implies \textup{(A1)}--\textup{(A2)}
with $c_v=2^{2\ell-1}$. Indeed, there are only finitely many possible choices
of witnessing shifts, so they can be held fixed on an infinite subsequence;
division by $n^m$ and passage to the limit then gives \textup{(A2)}.
We use only $c_v\ge1$. Conversely, \textup{(A2)} alone does not assert
finite-$n$ feasibility of the spectral construction.

\subsubsection{The obstruction}

\begin{lemma}[Walk--affinity comparison]
\label{lem:walk}
For every normalized configuration $G$, every $v\ne0$ and every even
integer $m\ge2$,
\[
W_m(G,v)\le\Phi(G,v)^m.
\]
The inequality is strict whenever $\Phi(G,v)>0$.
\end{lemma}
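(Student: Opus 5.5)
We expand both sides over the orbits of translation by $v$ and compare them term by term. Since $v\neq0$, the map $u\mapsto u+v$ is a fixed-point-free involution on $\mathbb F_2^{\ell}$. It therefore partitions $\mathbb F_2^{\ell}$ into $2^{\ell-1}$ two-element orbits $O=\{u_O,u_O+v\}$. Put $a_O=G(u_O)$, $b_O=G(u_O+v)$ and $x_O=2\sqrt{a_Ob_O}\ge0$. Each orbit contributes its pair twice to $\Phi$, so
\[
\Phi(G,v)=\sum_O x_O,\qquad \Phi(G,v)^m=\sum_{e}\frac{m!}{\prod_O e_O!}\prod_O x_O^{e_O},
\]
where the multinomial sum runs over $e=(e_O)_O\in\mathbb Z_{\ge0}^{\text{orbits}}$ with $\sum_O e_O=m$. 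Every term is nonnegative.

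Next we rewrite $W_m(G,v)$ from \eqref{eq:Wm} in the same coordinates. The constraint $F(u)=F(u+v)$ says exactly that $F$ is constant on each orbit, with value $f_O$ say. Then $\sum_uF(u)=m$ becomes $\sum_O f_O=m/2$, which is an integer because $m$ is even. The corresponding term of \eqref{eq:Wm} is
\[
\frac{m!}{\prod_O (f_O!)^2}\prod_O (a_Ob_O)^{f_O}.
\]
This $F\mapsto f$ correspondence is a bijection onto $\{f\in\mathbb Z_{\ge0}^{\text{orbits}}:\sum f_O=m/2\}$, and the convention $0^0=1$ is respected on both sides.

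To compare, we map each such $f$ to the exponent vector $e=2f$ in the expansion of $\Phi^m$. That term equals
\[
\frac{m!}{\prod_O(2f_O)!}\prod_O 4^{f_O}(a_Ob_O)^{f_O}.
\]
Its ratio to the $W_m$ term (whenever the latter is nonzero) is $\prod_O 4^{f_O}\big/\binom{2f_O}{f_O}\ge1$, by the elementary bound $\binom{2f}{f}\le 4^f$. Summing over $f$, and discarding the remaining nonnegative terms of $\Phi^m$ (those with some $e_O$ odd), gives $W_m(G,v)\le\Phi(G,v)^m$.

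For strictness, suppose $\Phi(G,v)>0$. Then some orbit $O_0$ has $x_{O_0}>0$. Take $f_{O_0}=m/2$ and $f_O=0$ for all other $O$. The $W_m$ term is then $\binom{m}{m/2}^{-1}$ times the $\Phi^m$ term $x_{O_0}^m>0$. Since $\binom{m}{m/2}<2^m$ for $m\ge2$, this single comparison is strict while all others are weak, so the inequality is strict. There is no genuine obstacle here. The only care needed is in the bookkeeping: orbit pairs are counted twice in $\Phi$, which is where the factor $2$ in $x_O$ and the matching $4^{f_O}$ come from.
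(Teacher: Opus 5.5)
Your proof is correct and is essentially the paper's argument: pair up the orbits of translation by $v$, expand $\Phi^m$ multinomially, drop the terms with odd exponents, compare coefficients using $\binom{2a}{a}\le 4^a$, and get strictness from the single-orbit term. One small slip: in the strictness step the $W_m$ term equals $2^{-m}\binom{m}{m/2}\,x_{O_0}^m$, not $\binom{m}{m/2}^{-1}x_{O_0}^m$, so the fact you cite, $\binom{m}{m/2}<2^m$, is indeed the right justification.
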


\begin{proof}
Translation by $v\ne0$ is a fixed-point-free involution of
$\mathbb F_2^{\ell}$, so its orbits are the $2^{\ell-1}$ unordered pairs
$P=\{u,u+v\}$. A multiplicity function $F$ admissible in \eqref{eq:Wm} is
determined by $a_P:=F(u)=F(u+v)$, with $\sum_Pa_P=m/2$.
Writing $x_P=G(u)G(u+v)\ge0$ gives
\begin{equation}
W_m(G,v)
=\sum_{\sum_Pa_P=m/2}
\frac{m!}{\prod_P(a_P!)^2}\prod_Px_P^{a_P}.
\label{eq:Wm-paired}
\end{equation}
Both elements of $P$ contribute $\sqrt{x_P}$ to $\Phi(G,v)$, so
$\Phi(G,v)=2\sum_P\sqrt{x_P}$ and
\begin{equation}
\Phi(G,v)^m
=2^m\!\!\sum_{\sum_Pc_P=m}
\frac{m!}{\prod_Pc_P!}\prod_Px_P^{c_P/2}.
\label{eq:Phi-expansion}
\end{equation}
Every term of \eqref{eq:Phi-expansion} is non-negative. Discard those with
some $c_P$ odd and match the remaining terms to \eqref{eq:Wm-paired} by
$c_P=2a_P$. The ratio of the corresponding coefficients is
\[
\frac{m!\big/\prod_P(a_P!)^2}
     {2^m m!\big/\prod_P(2a_P)!}
=\prod_P\frac{\binom{2a_P}{a_P}}{4^{a_P}}\le1.
\]
In fact this ratio is strictly less than one, since $m\ge2$ implies that
some $a_P\ge1$, and $\binom{2a}{a}<4^a$ for $a\ge1$.
If $\Phi(G,v)>0$, some $x_P>0$; assigning $a_P=m/2$ to this pair and zero
to the others gives a positive matched term. Summing therefore proves
strict inequality in this case. If $\Phi(G,v)=0$, both sides vanish.
\end{proof}

\begin{lemma}[Affinity floor]
\label{lem:floor}
Under \textup{(A2)}, $\Phi(G,v)>\varepsilon$ for every $v\in V$.
In particular, $\Phi(G,v)\ge\varepsilon$.
\end{lemma}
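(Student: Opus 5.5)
The plan is to combine hypothesis \textup{(A2)} with Lemma~\ref{lem:walk}, splitting on whether the affinity vanishes. Fix $v\in V$ and let $m=m_v$ and $c_v\ge1$ be as in \textup{(A2)}, so that
\[
W_m(G,v)\ge c_v\varepsilon^{m}\ge\varepsilon^{m}>0,
\]
because $\varepsilon\in(0,1)$ makes $\varepsilon^m>0$.

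First I rule out $\Phi(G,v)=0$. In that case Lemma~\ref{lem:walk} gives $W_m(G,v)\le\Phi(G,v)^m=0$. This contradicts the strict positivity just noted. Hence $\Phi(G,v)>0$.

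With $\Phi(G,v)>0$, the strict form of Lemma~\ref{lem:walk} applies and gives
\[
\Phi(G,v)^m>W_m(G,v)\ge c_v\varepsilon^m\ge\varepsilon^m.
\]
Both $\Phi(G,v)$ and $\varepsilon$ are nonnegative, and $t\mapsto t^{1/m}$ is strictly increasing on $[0,\infty)$. Taking $m$-th roots therefore yields $\Phi(G,v)>\varepsilon$. The weak inequality $\Phi(G,v)\ge\varepsilon$ follows immediately. Since $v\in V$ was arbitrary, this proves the claim for every $v\in V$.

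There is no real obstacle here. The only point needing care is to invoke the strictness clause of Lemma~\ref{lem:walk}, which requires $\Phi(G,v)>0$. That is why the degenerate case $\Phi(G,v)=0$ is excluded first, using $c_v\ge1$ and $\varepsilon>0$. The hypothesis $c_v\ge1$ is used only to pass from $c_v\varepsilon^m$ to $\varepsilon^m$; any $c_v>0$ would give the weaker floor $\Phi(G,v)>c_v^{1/m}\varepsilon$ by the same argument.
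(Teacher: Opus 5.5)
Your proposal is correct and follows the paper's proof essentially step for step: (A2) gives $W_{m_v}(G,v)\ge c_v\varepsilon^{m_v}>0$, hence $\Phi(G,v)>0$; the strict case of Lemma~\ref{lem:walk} together with $c_v\ge1$ then gives $\Phi(G,v)^{m_v}>\varepsilon^{m_v}$, and taking $m_v$-th roots finishes. The only difference is that you spell out why $\Phi(G,v)=0$ is excluded, using the weak inequality of Lemma~\ref{lem:walk}, whereas the paper leaves that step implicit.
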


\begin{proof}
Hypothesis \textup{(A2)} gives
$W_{m_v}(G,v)\ge c_v\varepsilon^{m_v}>0$, so $\Phi(G,v)>0$.
The strict part of Lemma~\ref{lem:walk} yields
\[
\Phi(G,v)^{m_v}>W_{m_v}(G,v)
\ge c_v\varepsilon^{m_v}\ge\varepsilon^{m_v}.
\]
Taking $m_v$-th roots proves the claim. No limit in $m$ or $n$ is used.
\end{proof}

\begin{lemma}[Entropy profile]
\label{lem:h}
Let $p_\varepsilon=\tfrac12(1-\sqrt{1-\varepsilon^2})$ and
$h(\varepsilon)=H_2(p_\varepsilon)$ for $\varepsilon\in[0,1]$.
Then $h$ is strictly increasing and strictly convex on $(0,1)$,
$h(0)=0$, $h(1)=1$, and
\begin{equation}
h(\varepsilon)=H_2\!\left(\tfrac12-\sqrt{\delta(1-\delta)}\right),
\qquad \delta=\tfrac{1-\varepsilon}{2}.
\label{eq:h-is-mrrw}
\end{equation}
\end{lemma}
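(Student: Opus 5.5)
The plan is to establish the four claims in turn: the boundary values, the identity \eqref{eq:h-is-mrrw}, monotonicity, and strict convexity. The last is where the work lies.

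The boundary values are immediate. At $\varepsilon=0$ we have $p_0=0$ and $H_2(0)=0$. At $\varepsilon=1$ we have $p_1=\tfrac12$ and $H_2(\tfrac12)=1$.

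For \eqref{eq:h-is-mrrw}, substitute $\delta=(1-\varepsilon)/2$. This gives $4\delta(1-\delta)=(1-\varepsilon)(1+\varepsilon)=1-\varepsilon^2$, hence $\sqrt{\delta(1-\delta)}=\tfrac12\sqrt{1-\varepsilon^2}$. Therefore $\tfrac12-\sqrt{\delta(1-\delta)}=p_\varepsilon$, and the identity follows.

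Monotonicity follows from the chain rule. The map $\varepsilon\mapsto p_\varepsilon$ is strictly increasing from $[0,1]$ onto $[0,\tfrac12]$, with
\[
p_\varepsilon'=\frac{\varepsilon}{2\sqrt{1-\varepsilon^2}}>0 \quad\text{on }(0,1).
\]
Since $H_2$ is strictly increasing on $[0,\tfrac12]$, the composition $h$ is strictly increasing.

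For convexity, the cleanest route is to reparametrize by $\varepsilon=\sin\phi$ with $\phi\in(0,\pi/2)$. Then $\sqrt{1-\varepsilon^2}=\cos\phi$, so $p=\sin^2(\phi/2)$ and $1-p=\cos^2(\phi/2)$. Using $H_2'(p)=\log_2\frac{1-p}{p}$ and $dp/d\varepsilon=\tfrac12\tan\phi$,
\[
h'(\varepsilon)=\tan\phi\,\log_2\cot(\phi/2).
\]
Strict convexity of $h$ on $(0,1)$ is then equivalent to $h'$ being strictly increasing in $\varepsilon$, and hence in $\phi$. On $(0,\pi/2)$ the factor $\tan\phi$ is positive and increasing, while $\log_2\cot(\phi/2)$ is positive but decreasing. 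Since neither factor settles it alone, I will differentiate the product in $\phi$ directly and clear positive factors. With $t=\phi/2$, $\cot t\,\csc^2 t\ldots$ simplifies to $d/d\phi\,\log\cot(\phi/2)=-1/\sin\phi$. After multiplying by $\ln 2\cdot\cos^2\phi$, the condition reduces to
\[
\ln\cot(\phi/2)>\sin\phi\cos\phi \qquad (0<\phi<\pi/2).
\]
Equivalently, with $x=\cos\phi\in(0,1)$, using $\cot(\phi/2)=(1+\cos\phi)/\sin\phi$, it becomes
\[
\tfrac12\ln\frac{1+x}{1-x}>x\sqrt{1-x^2}.
\]
The left side is $\operatorname{artanh}x=x+x^3/3+\cdots\ge x$, and the right side is strictly less than $x$ for $x\in(0,1)$. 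So the inequality holds strictly, which gives strict convexity.

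The main obstacle is this convexity step, specifically keeping the derivative simplification error-free. If the trigonometric reduction proves messy, the fallback is the series in $\varepsilon$ for $h'$. A second fallback is to write $h'=\tan\phi\cdot\operatorname{artanh}(\cos\phi)/\ln 2$ and differentiate that form directly, which yields the same inequality.
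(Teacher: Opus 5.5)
Your boundary values, the MRRW identity and the monotonicity argument are fine. The convexity reduction, however, contains an algebra slip that breaks the logic. Differentiating $\tan\phi\,\ln\cot(\phi/2)$ in $\phi$ gives $\sec^2\phi\,\ln\cot(\phi/2)+\tan\phi\cdot(-1/\sin\phi)=\sec^2\phi\,\ln\cot(\phi/2)-\sec\phi$. You dropped the factor $1/\sin\phi$ from the second term. After multiplying by $\cos^2\phi$, the correct condition is $\ln\cot(\phi/2)>\cos\phi$, i.e.\ $\operatorname{artanh}x>x$ with $x=\cos\phi$, not $\operatorname{artanh}x>x\sqrt{1-x^2}$. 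Since $x\sqrt{1-x^2}<x$, the inequality you actually prove is strictly weaker than the one needed. As written, it does not give $dh'/d\phi>0$, so strict convexity is not established. Your remark that the fallback form ``yields the same inequality'' is also off: differentiating $\tan\phi\,\operatorname{artanh}(\cos\phi)$ produces the correct condition $\operatorname{artanh}(\cos\phi)>\cos\phi$, and doing so would have exposed the slip.

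The repair uses the tool you already cite, but you need it strictly: $\operatorname{artanh}x-x=x^3/3+x^5/5+\cdots>0$ for $x\in(0,1)$, not merely $\operatorname{artanh}x\ge x$. With that correction, your argument is the paper's proof in different coordinates. The paper sets $s=\sqrt{1-\varepsilon^2}$ (your $x=\cos\phi$) and computes $h'(\varepsilon)=\varepsilon\operatorname{artanh}(s)/(s\ln2)$ and $h''(\varepsilon)=(\operatorname{artanh}(s)-s)/(s^3\ln2)$. It then concludes from the same series. Your corrected $dh'/d\phi$ equals $(\operatorname{artanh}(s)-s)/(s^2\ln2)$, which differs from $h''(\varepsilon)$ only by the positive factor $d\varepsilon/d\phi=s$.
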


\begin{proof}
Put $s=\sqrt{1-\varepsilon^2}$, so $p_\varepsilon=(1-s)/2$ and
$\delta(1-\delta)=s^2/4$. Thus
$\tfrac12-\sqrt{\delta(1-\delta)}=p_\varepsilon$.
For $0<\varepsilon<1$, differentiation gives
\[
H_2'(p_\varepsilon)=\frac{2\operatorname{artanh}(s)}{\ln2},
\qquad \frac{dp_\varepsilon}{d\varepsilon}=\frac{\varepsilon}{2s},
\]
and hence
\[
h'(\varepsilon)=\frac{\varepsilon\operatorname{artanh}(s)}{s\ln2}>0,
\qquad
h''(\varepsilon)=\frac{\operatorname{artanh}(s)-s}{s^3\ln2}>0.
\]
The second inequality follows from
$\operatorname{artanh}(s)=s+s^3/3+\cdots>s$ for $0<s<1$;
$h''$ extends continuously to $h''(1)=1/(3\ln2)$.
The endpoint values follow from $p_0=0$ and $p_1=1/2$.
\end{proof}

\begin{lemma}[Conditional affinity and Jensen]
\label{lem:jensen}
Let $X=(X_1,\dots,X_\ell)\sim G$ and, for fixed $i$, write $Y=X_{-i}$ and
$p_Y=\Pr(X_i=1\mid Y)$. Then
\begin{equation}
\begin{aligned}
\Phi(G,e_i)
&=\mathbb E_Y\!\left[2\sqrt{p_Y(1-p_Y)}\right],\\
\mathsf H(X_i\mid X_{-i})
&=\mathbb E_Y\!\left[h\!\left(2\sqrt{p_Y(1-p_Y)}\right)\right].
\end{aligned}
\label{eq:cond-identities}
\end{equation}
Consequently,
$\mathsf H(X_i\mid X_{-i})\ge h\big(\Phi(G,e_i)\big)$.
\end{lemma}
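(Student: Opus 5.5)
The proof is a computation on the fibres of $Y=X_{-i}$. Fix $i$ and, for each $y\in\mathbb F_2^{\ell-1}$ with $\Pr(Y=y)>0$, let $u_0=(y,0)$ and $u_1=(y,1)$ be the two points of $\mathbb F_2^\ell$ that agree with $y$ off coordinate $i$. Translation by $e_i$ swaps $u_0$ and $u_1$, so the sum defining $\Phi(G,e_i)$ groups into pairs. Each fibre contributes $2\sqrt{G(u_0)G(u_1)}$. Writing $G(u_0)=\Pr(Y=y)(1-p_y)$ and $G(u_1)=\Pr(Y=y)p_y$, this contribution equals $\Pr(Y=y)\cdot2\sqrt{p_y(1-p_y)}$. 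Fibres with $\Pr(Y=y)=0$ contribute zero to both sides. Summing over $y$ gives the first identity in \eqref{eq:cond-identities}.

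For the second identity, recall that $\mathsf H(X_i\mid X_{-i})=\mathbb E_Y[H_2(p_Y)]$. It remains to show that $H_2(p)=h(2\sqrt{p(1-p)})$ for all $p\in[0,1]$. Put $\varepsilon=2\sqrt{p(1-p)}\in[0,1]$. Then
\[
1-\varepsilon^2=1-4p(1-p)=(1-2p)^2,
\]
so $\sqrt{1-\varepsilon^2}=|1-2p|$. Hence $p_\varepsilon=\tfrac12(1-|1-2p|)=\min(p,1-p)$. Since $H_2$ is symmetric about $\tfrac12$, we get $h(\varepsilon)=H_2(\min(p,1-p))=H_2(p)$. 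Taking expectations gives the second line of \eqref{eq:cond-identities}.

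The inequality then follows from Jensen's inequality. Set $Z=2\sqrt{p_Y(1-p_Y)}$, a random variable with values in $[0,1]$. By Lemma~\ref{lem:h}, $h$ is convex on $[0,1]$: it is strictly convex on $(0,1)$ and continuous at the endpoints. Therefore
\[
\mathsf H(X_i\mid X_{-i})=\mathbb E[h(Z)]\ge h(\mathbb E Z)=h\bigl(\Phi(G,e_i)\bigr).
\]

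The only delicate step is the identity $H_2(p)=h(2\sqrt{p(1-p)})$. It requires the absolute value $|1-2p|$ to handle $p>\tfrac12$, together with the symmetry of $H_2$. Convexity on the closed interval is routine, since a function that is continuous on $[0,1]$ and convex on $(0,1)$ is convex on $[0,1]$.
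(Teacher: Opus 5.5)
Your proof is correct and follows the paper's route. Both arguments pair $(y,0)$ with $(y,1)$ on each fibre to get the affinity identity, use the symmetry identity $H_2(p)=h\bigl(2\sqrt{p(1-p)}\bigr)$ to get the entropy identity, and finish with Jensen's inequality and the convexity from Lemma~\ref{lem:h}; you additionally spell out $\sqrt{1-\varepsilon^2}=|1-2p|$ and the extension of convexity to the closed interval $[0,1]$, which the paper leaves implicit.
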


\begin{proof}
Identify $u$ with $(y,x_i)$. For each $y$ of positive probability, the two
points $(y,0)$ and $(y,1)$ each contribute
\[
\sqrt{G(y,0)G(y,1)}=\Pr(Y=y)\sqrt{p_y(1-p_y)}
\]
to $\Phi(G,e_i)$, giving the first identity. Values of $p_y$ on
zero-probability events may be chosen arbitrarily.
For $b(p):=2\sqrt{p(1-p)}$, symmetry of binary entropy gives
$H_2(p)=h(b(p))$ for every $p\in[0,1]$. Averaging over $Y$ gives the second
identity. Jensen's inequality and Lemma~\ref{lem:h} give the conclusion.
\end{proof}

\begin{theorem}[Tensorization obstruction]
\label{thm:obstruction}
Fix $\ell\ge1$ and $\varepsilon\in(0,1)$. Let $G$ be a normalized
configuration on $\mathbb F_2^{\ell}$, and suppose that a spanning set
$V\subseteq\mathbb F_2^{\ell}\setminus\{0\}$ satisfies
$\Phi(G,v)\ge\varepsilon$ for every $v\in V$. Then
\[
J_\ell(G)=\frac{\mathsf H(G)}{\ell}\ge h(\varepsilon)
=H_2\!\left(\tfrac12-\sqrt{\delta(1-\delta)}\right),
\qquad \delta=\tfrac{1-\varepsilon}{2}.
\]
The right-hand side is the first MRRW expression.
This bound is sharp for the relaxed problem with constraints
$\Phi(G,e_i)\ge\varepsilon$, $1\le i\le\ell$:
$G_\varepsilon=\operatorname{Bernoulli}(p_\varepsilon)^{\otimes\ell}$
attains equality in that problem.
Under the stronger hypotheses \textup{(A1)}--\textup{(A2)}, one has
$J_\ell(G)>h(\varepsilon)$.
\end{theorem}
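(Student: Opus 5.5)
The plan is to reduce to the case where the spanning set contains the standard basis, then chain the entropy along coordinates, using Lemma~\ref{lem:jensen} at each step and the monotonicity of $h$ from Lemma~\ref{lem:h}.

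\emph{Reduction to the standard basis.} Since $V$ spans $\mathbb F_2^{\ell}$, I extract a basis $v_1,\dots,v_\ell\in V$ and take the linear automorphism $A\in\mathrm{GL}(\ell,2)$ with $Av_j=e_j$. I then set $G'(u)=G(A^{-1}u)$. Because $A$ is a bijection, $\mathsf H(G')=\mathsf H(G)$. Because $A$ is linear, $A^{-1}(u+e_j)=A^{-1}u+v_j$, and reindexing the sum gives
\[
\Phi(G',e_j)=\sum_u\sqrt{G(A^{-1}u)\,G(A^{-1}u+v_j)}=\Phi(G,v_j)\ge\varepsilon .
\]
So without loss of generality $\Phi(G,e_i)\ge\varepsilon$ for all $1\le i\le\ell$, which is exactly the relaxed problem.

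\emph{Chain rule and Jensen.} Let $X\sim G$. The entropy chain rule gives $\mathsf H(G)=\sum_i\mathsf H(X_i\mid X_1,\dots,X_{i-1})$. Since conditioning on more variables cannot increase entropy, each term is at least $\mathsf H(X_i\mid X_{-i})$. Lemma~\ref{lem:jensen} bounds this by $h(\Phi(G,e_i))$, and since $h$ is increasing (Lemma~\ref{lem:h}), this is at least $h(\varepsilon)$. Summing over $i$ and dividing by $\ell$ gives $J_\ell(G)\ge h(\varepsilon)$. The identification with the first MRRW expression is \eqref{eq:h-is-mrrw}.

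\emph{Sharpness.} For $G_\varepsilon=\mathrm{Bernoulli}(p_\varepsilon)^{\otimes\ell}$, write $s=\sqrt{1-\varepsilon^2}$ as in the proof of Lemma~\ref{lem:h}. Then $4p_\varepsilon(1-p_\varepsilon)=1-s^2=\varepsilon^2$. Because the coordinates are independent, the first identity of Lemma~\ref{lem:jensen} has constant $p_Y=p_\varepsilon$, so $\Phi(G_\varepsilon,e_i)=2\sqrt{p_\varepsilon(1-p_\varepsilon)}=\varepsilon$ and $G_\varepsilon$ is feasible. Its entropy is $\ell H_2(p_\varepsilon)=\ell h(\varepsilon)$, so equality holds in the relaxed problem.

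\emph{Strictness under (A1)--(A2).} Lemma~\ref{lem:floor} upgrades the floor to $\Phi(G,v)>\varepsilon$ for every $v\in V$. After the same change of basis, $\Phi(G,e_i)>\varepsilon$. Strict monotonicity of $h$ then makes every step strict: $h(\Phi(G,e_i))>h(\varepsilon)$, hence $J_\ell(G)>h(\varepsilon)$.

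The only delicate point is the basis change. One must check that affinities transform covariantly under $\mathrm{GL}(\ell,2)$, which uses linearity of $A$ and not just bijectivity, and that entropy is invariant. Both are one-line verifications. Everything else is assembled from the preceding lemmas.
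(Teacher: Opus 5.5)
Your proposal is correct and follows the paper's proof essentially step for step. It uses the same change of basis (your $A^{-1}$ is the paper's $T$, with $\widetilde G(u)=G(Tu)$), the same chain-rule, conditioning and Jensen argument via Lemma~\ref{lem:jensen} and monotonicity of $h$, and the same appeal to Lemma~\ref{lem:floor} plus strict monotonicity for the strict inequality. The only difference is cosmetic: you check $\Phi(G_\varepsilon,e_i)=\varepsilon$ through the conditional identity with constant $p_Y=p_\varepsilon$, whereas the paper invokes multiplicativity of affinity over tensor products, and both are immediate.
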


\begin{proof}
Choose a basis $v_1,\dots,v_\ell$ contained in $V$ and an invertible linear
map $T$ with $Te_i=v_i$. Define $\widetilde G(u)=G(Tu)$.
Then
\[
\mathsf H(\widetilde G)=\mathsf H(G),
\qquad \Phi(\widetilde G,e_i)=\Phi(G,v_i)\ge\varepsilon.
\]
Let $X\sim\widetilde G$. The entropy chain rule and conditioning give
\[
\mathsf H(G)
=\sum_{i=1}^{\ell}\mathsf H(X_i\mid X_1,\dots,X_{i-1})
\ge\sum_{i=1}^{\ell}\mathsf H(X_i\mid X_{-i})
\ge\sum_{i=1}^{\ell}h\big(\Phi(\widetilde G,e_i)\big)
\ge\ell h(\varepsilon),
\]
where the second inequality uses Lemma~\ref{lem:jensen} and the last uses
monotonicity of $h$. Division by $\ell$ proves the bound.

For sharpness of the relaxed problem, affinity is multiplicative over
tensor products, so
\[
\Phi(G_\varepsilon,e_i)
=2\sqrt{p_\varepsilon(1-p_\varepsilon)}=\varepsilon,
\qquad
\mathsf H(G_\varepsilon)=\ell H_2(p_\varepsilon)=\ell h(\varepsilon).
\]
This proves attainment in the stated relaxation, not under
\textup{(A2)}.

Finally, under \textup{(A1)}--\textup{(A2)}, Lemma~\ref{lem:floor} gives
$\Phi(G,v_i)>\varepsilon$ for every basis shift. Strict monotonicity of
$h$ makes the last inequality in the entropy chain strict, yielding
$J_\ell(G)>h(\varepsilon)$.
\end{proof}

\begin{corollary}
\label{cor:no-improvement}
Consider the entropy-based rate estimate
\[
R(C)\le J_\ell(G)+O(\log n/n)
\]
for the fixed single-orbit spectral construction
(Section~\ref{sup:normalization}), with $\ell$ and $G$ fixed and with its
feasibility hypotheses satisfied. If $G$ satisfies
\textup{(A1)}--\textup{(A2)}, then its leading term obeys
$J_\ell(G)>h(\varepsilon)$.
Consequently, minimizing this entropy-based estimate over configurations
satisfying \textup{(A1)}--\textup{(A2)} cannot give an asymptotic rate bound
below the first MRRW expression.
The relaxed affinity-constrained minimum equals $h(\varepsilon)$, but this
does not assert attainment by a configuration satisfying the finite-walk
conditions.
\end{corollary}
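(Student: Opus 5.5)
The corollary follows almost directly from Theorem~\ref{thm:obstruction}, so the plan is to unpack the three assertions in turn, taking the feasibility hypotheses of the single-orbit construction as given throughout.

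\emph{Strict inequality.} Fix $\ell$ and a configuration $G$ satisfying \textup{(A1)}--\textup{(A2)}.
\begin{itemize}
\item By \textup{(A1)}, the tested shift set $V$ spans $\mathbb F_2^\ell$.
\item By Lemma~\ref{lem:floor}, every $v\in V$ has $\Phi(G,v)>\varepsilon$.
\item Hence both the hypotheses and the strengthened hypotheses of Theorem~\ref{thm:obstruction} hold.
\end{itemize}
Its last clause gives $J_\ell(G)>h(\varepsilon)$ directly. Lemma~\ref{lem:h} identifies $h(\varepsilon)$ with the first MRRW expression $H_2(\tfrac12-\sqrt{\delta(1-\delta)})$.

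\emph{Consequence for the rate estimate.} The entropy-based estimate has the form $J_\ell(G)+O(\log n/n)$. With $\ell$ and $G$ fixed, the error term vanishes as $n\to\infty$. So the asymptotic rate bound delivered by the construction is exactly its leading term $J_\ell(G)$, normalized as in Section~\ref{sup:normalization}. Taking the infimum over all admissible $G$ and using the pointwise bound gives
\[
\inf_G J_\ell(G)\ge h(\varepsilon).
\]
I expect the main subtlety here. Strictness holds pointwise but need not survive the infimum, so I will claim only ``cannot go below'' the MRRW expression, not ``stays strictly above'' it. I will also stress that the infimum runs over each fixed $\ell$ separately, and that taking the union over $\ell$ preserves the lower bound $h(\varepsilon)$.

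\emph{Relaxed minimum.} For the relaxed problem, in which only $\Phi(G,e_i)\ge\varepsilon$ is imposed, I will cite the sharpness part of Theorem~\ref{thm:obstruction}. There $G_\varepsilon=\mathrm{Bernoulli}(p_\varepsilon)^{\otimes\ell}$ attains $J_\ell=h(\varepsilon)$, and the first part of the theorem shows no feasible point does better. So the relaxed minimum equals $h(\varepsilon)$. Finally, I will remark that Lemma~\ref{lem:walk} makes the walk inequality strict. Consequently $G_\varepsilon$, which has $\Phi=\varepsilon$ exactly, has $W_m<\varepsilon^m\le c\,\varepsilon^m$ and violates \textup{(A2)}. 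This explains the explicit disclaimer about attainment in the statement.
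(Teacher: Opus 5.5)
Your proposal is correct and follows essentially the same route as the paper, whose proof simply applies Theorem~\ref{thm:obstruction} to the leading term $J_\ell(G)$ of the estimate (using that a level-$\ell$ dual bounds $|C|^\ell$). Your additional remarks on the infimum and on $G_\varepsilon$ mirror Remark~\ref{rem:finitem}, with one small correction: $\Phi(G_\varepsilon,e_i)=\varepsilon$ holds only at basis shifts, so to exclude \textup{(A2)} for an arbitrary tested set $V$ you need $\Phi(G_\varepsilon,v)=\varepsilon^{|v|}\le\varepsilon$ for all $v\neq0$. Alternatively, the strict clause of the theorem already rules out $J_\ell(G_\varepsilon)=h(\varepsilon)$ for any configuration satisfying \textup{(A1)}--\textup{(A2)}.
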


\begin{proof}
The rate normalization follows because a level-$\ell$ dual bounds
$|C|^\ell$. Apply Theorem~\ref{thm:obstruction} to the leading term
$J_\ell(G)$ of the stated estimate.
\end{proof}

\subsubsection{Scope and relation to the cited construction}

\begin{remark}[Scope]
\label{rem:scope}
The obstruction concerns the entropy objective $J_\ell(G)$ under a
spanning set of affinity floors, and therefore under
\textup{(A1)}--\textup{(A2)}. It does not rule out every single-orbit
spectral construction. In particular, it does not exclude feasibility
mechanisms that bypass \textup{(A2)}, sharper estimates of a dual objective
that are not controlled by this entropy bound, alternative sign functions
$\phi$, spectral functions supported on several configuration orbits,
other uses of the partial-Fourier dual families, or general dual feasible
solutions. The higher-order hierarchy itself remains complete.
\end{remark}

\begin{remark}[Sufficiency of the walk condition]
\label{rem:fourier}
Condition~(25) of \cite{coregliano2025higher} is a sufficient finite-$n$
condition obtained by making the terms of an odd-cardinality expansion
of $M_m$ separately non-negative. Hypothesis \textup{(A2)} is its
asymptotic consequence for fixed walk orders, as explained above; it is
not an equivalent formulation of finite-$n$ feasibility. Configurations
that fail \textup{(A2)} could in principle still give non-negative
Fourier transforms through cancellation in that expansion.
The present obstruction does not address such configurations.
\end{remark}

\begin{remark}[Finite-$m$ admissibility]
\label{rem:finitem}
The equality configuration $G_\varepsilon$ is feasible for the
affinity relaxation, but not for \textup{(A2)}. Indeed, multiplicativity
gives
\[
\Phi(G_\varepsilon,v)=\varepsilon^{|v|}\le\varepsilon
\qquad(v\ne0),
\]
where $|v|$ is Hamming weight. Lemma~\ref{lem:walk} therefore gives,
for every even $m\ge2$,
\[
W_m(G_\varepsilon,v)<\Phi(G_\varepsilon,v)^m\le\varepsilon^m.
\]
This fails \textup{(A2)} even with $c_v=1$.
The theorem asserts sharpness only of the affinity relaxation.
Any claim that a sequence of feasible finite-walk constructions approaches
this value requires a separate approximation argument; no such claim is
needed for Corollary~\ref{cor:no-improvement}.
\end{remark}

\begin{remark}[Equality in the relaxation]
\label{rem:equality}
Work in coordinates determined by a basis of the tested shifts.
If equality holds in the relaxed entropy bound, then every step of the
entropy chain in the proof must be an equality. Strict convexity of $h$
and strict monotonicity at the affinity floor imply
$2\sqrt{p_Y(1-p_Y)}=\varepsilon$ almost surely for every coordinate.
Thus the conditional marginals lie in
$\{p_\varepsilon,1-p_\varepsilon\}$, and equality must also hold in the
conditioning step. The tensor-product configuration explicitly satisfies
these equalities and explains the value recovered in the relaxed numerical
search. We make no uniqueness claim, and these equality statements do not
imply finite-walk admissibility.
\end{remark}

\begin{remark}[Relation to the hypotheses of \cite{coregliano2025higher}]
\label{rem:cjj-hypotheses}
The general sufficient criterion in Theorem~6.7 of
\cite{coregliano2025higher} requires a witnessing shift for each non-zero
$i$ with $\langle i,v\rangle=1$; it does not restrict all witnessing
shifts to Hamming weight one. Their explicit configuration examples
verify the walk estimates at the standard basis shifts, which already
span $\mathbb F_2^\ell$.
The change of basis above accommodates the general spanning formulation.
Their stated finite-$n$ criterion also assumes strictly positive
configuration entries, whereas the entropy inequality and walk--affinity
comparison hold for all $G\ge0$. Hence they also apply to configurations
with proper support, provided the stated affinity or walk hypotheses
hold. This extension is an entropy obstruction, not by itself a proof of
feasibility for configurations with zero entries in the cited spectral
construction.
\end{remark}

\subsection{Computational verification of the algebraic reductions}
\label{sup:verification}

None of the computations in this section is used as part of the analytic
proof. They test that the implementation computes the objects
Theorem~\ref{thm:obstruction} concerns, which is where a hidden factor of
$m!$, a $2^m$, or an ordered-versus-unordered pair convention would appear.

The literal sum \eqref{eq:Wm} over multiplicity functions was enumerated in
exact rational arithmetic and matched the paired form
\eqref{eq:Wm-paired} as an exact rational identity, not to a tolerance, for
all tested $\ell\in\{1,2,3\}$, all $v\ne0$ and all even $m\le8$.

All downstream computation takes place on the configuration lattice, whereas
$A_v$ acts on $\mathbb F_2^{\ell\times n}$. To confirm that this reduction
loses nothing, $A_v$ was constructed literally on the full vertex set and
applied as a matrix, rather than assuming that $A_v^m\Lambda(X)$ depends on $X$
only through $\operatorname{config}(X)$. In every tested instance the result was
constant on the entire fibre $\operatorname{config}^{-1}_{n,\ell}(g_0)$ and
agreed exactly with an independent configuration-lattice walk implementation
(Supplementary Table~\ref{tab:matrixcheck}). Blocklengths are necessarily
small, the largest cases being $4^5=1024$ and $8^4=4096$ matrices; all
$v\ne0$ were tested and all agree, with $v=e_1$ shown for legibility.

\renewcommand{\tablename}{Supplementary Table}

\begin{table}[h]
\centering
\begin{tabular}{ccl cc rr c}
\hline
$\ell$ & $n$ & $g_0$ & $v$ & $m$ & matrix values & config.\ walk & agree \\
\hline
2 & 4 & $(1,1,1,1)$ & $e_1$ & 2 & $\{8\}$   & 8   & yes \\
2 & 4 & $(1,1,1,1)$ & $e_1$ & 4 & $\{128\}$ & 128 & yes \\
2 & 4 & $(2,1,1,0)$ & $e_1$ & 2 & $\{8\}$   & 8   & yes \\
2 & 4 & $(2,1,1,0)$ & $e_1$ & 4 & $\{104\}$ & 104 & yes \\
2 & 5 & $(2,1,1,1)$ & $e_1$ & 2 & $\{11\}$  & 11  & yes \\
2 & 5 & $(2,1,1,1)$ & $e_1$ & 4 & $\{245\}$ & 245 & yes \\
3 & 4 & $(1,1,1,1,0,0,0,0)$ & $e_1$ & 2 & $\{8\}$   & 8   & yes \\
3 & 4 & $(1,1,1,1,0,0,0,0)$ & $e_1$ & 4 & $\{128\}$ & 128 & yes \\
3 & 4 & $(2,1,1,0,0,0,0,0)$ & $e_1$ & 2 & $\{8\}$   & 8   & yes \\
3 & 4 & $(2,1,1,0,0,0,0,0)$ & $e_1$ & 4 & $\{104\}$ & 104 & yes \\
\hline
\end{tabular}
\caption{The operator $A_v^m\Lambda$ computed literally on
$\mathbb F_2^{\ell\times n}$, against the configuration-lattice walk count.
``Matrix values'' is the set of values taken by $A_v^m\Lambda(X)$ over the
entire fibre $\operatorname{config}^{-1}_{n,\ell}(g_0)$; that this set is a
singleton in every row is precisely the assertion that the quantity depends on
$X$ only through $\operatorname{config}(X)$. All entries are exact integers.}
\label{tab:matrixcheck}
\end{table}

The finite-$n$ walk count was compared with the leading term $n^mW_m(G,v)$ of
\eqref{eq:cjj-asymptotics}. The ratios converge to $1$, and do so
\emph{from above}, so the finite-$n$ correction is positive:
\[
\begin{array}{lccccc}
\ell=2,\ m=2: & n=8    & 16     & 32     & 64     & 128    \\
              & 1.6667 & 1.3333 & 1.1667 & 1.0833 & 1.0417 \\[3pt]
\ell=3,\ m=2: & n=12   & 24     & 48     & 96     & 192    \\
              & 1.7500 & 1.3750 & 1.1875 & 1.0938 & 1.0469
\end{array}
\]
with the same behaviour at $m=4$ (for $\ell=3$, $3.1067\rightarrow1.0998$ over
the same sequence). The sign of this correction is not used anywhere in the
argument, which requires only the $o(n^m)$ bound, but it is recorded because the
direction is not obvious a priori.

The inequality of Lemma~\ref{lem:walk} was checked on $480$ exact rational
cases, with largest observed ratio $0.4957$. The extremal value $\tfrac12$ is
attained already at $\ell=1$, $m=2$, where there is a single pair and a single
admissible multiplicity, so the coefficient comparison reduces exactly to
$\binom21/4$.

The entropy inequalities carry complete proofs, so the remaining checks are
consistency tests rather than evidence. In $60$-digit arithmetic, the closed
forms of Lemma~\ref{lem:h} agreed with numerical differentiation to
$10^{-57}$; the entropy chain of Theorem~\ref{thm:obstruction} held on $1600$
random distributions across $\ell=1,\dots,4$ with minimum gap
$-5.6\times10^{-61}$, i.e.\ zero to working precision; and the
conditional-affinity identity \eqref{eq:cond-identities} held on $800$ further
distributions with maximum deviation $4.6\times10^{-61}$.

%% file: Section3/04supplementary-methods.tex
\section*{Supplementary Methods: sign-uncertainty discovery and diagnostics}

\subsection*{Rigorous tripwires}

Two lower bounds are evaluated at run time and used to invalidate the
numerics rather than to prove anything. For $d=1$, $s=+1$,
\[
A_{+}(1)\ge \frac{1}{2(1+\lambda_{\mathrm{BCK}})}
=0.4107675\ldots,
\]
with $\lambda_{\mathrm{BCK}}=-\min_x\sin x/x$ \cite{bourgain2010principe}. More generally,
Theorem~2.1 of \cite{cohn2022sign} gives
\[
u_0\ge\lambda(d,n),
\]
where $\lambda(d,n)$ is the smallest root of
$L^{(d/2-1)}_{\lfloor n/2\rfloor+1}$. The latter follows from Gauss
quadrature against $e^{-u}u^{d/2-1}\,du$: positivity of the quadrature
weights forces a sign change at or beyond the first node. Any run reporting
feasibility below either applicable bound is therefore rejected. During
development, an early grid-only relaxation returned
$\rho=0.398942=1/\sqrt{2\pi}$ in $d=1$, immediately falsified by the BCK
tripwire.

\subsection*{Discovery campaign and numerical diagnosis}

The initial search deliberately explored exact Fourier-eigenfunction mixtures
with trainable Gaussian scales and contact locations, using local optimization
and Adam-based learned proposals. Low-rank constructions reproduced the
expected Bourgain--Clozel--Kahane controls and rapidly improved the objective,
but the $d=1$ search plateaued near $\rho=0.5785$, above the published
Cohn--Gon\c{c}alves value.

The failure was diagnostic. When scales crowded or the imposed contact count
was incompatible with the local root structure, the coefficient solve became
extremely ill-conditioned. Candidate Adam trajectories that appeared to
improve the objective entered systems with condition numbers of order
$10^{17}$--$10^{18}$. At this scale, forward accuracy could not be guaranteed
in the floating-point formats used here. Multiprecision reevaluation showed
that the apparent descent was dominated by numerical error rather than a
reproducible improvement. Learned widths and contacts were therefore retained
only as proposals, with a hard conditioning guard before verification.

The experiments also showed that the number of double contacts should not be
rigidly tied to the number of basis functions. Decoupling the two made
comparisons across basis sizes meaningful and showed that additional widths
at fixed contact count contributed little. Multistart runs converged to
distinct plateaus near $0.5785$, $0.5929$ and higher values, and verified
evaluations could jump when the outermost root pattern changed. This behaviour
is consistent with the discontinuity mechanism reported by Cohn and
Gon\c{c}alves for Newton continuation at $d\le2$. We use this only as a
diagnostic, not as an optimality theorem.

\subsection*{Implementation details of the convex exchange solver}

Three implementation choices proved load-bearing. First, bare LP feasibility
with zero objective returns arbitrary vertices that hug zero at many grid
points and can dip between them. Maximizing a relative margin instead produces
strict positivity between most nodes and sharply reduces the number of
exchange rounds. Second, column scaling by the supremum over the full grid is
dominated by the far tail and can exclude genuine feasible solutions; rows are
therefore equilibrated while columns are left unscaled. Third, multiple far
tail anchors are nearly parallel after normalization and degrade the simplex.
The tail is imposed instead through a single structural sign constraint on the
leading coefficient.

At each bisection value $u_0$, the LP is solved on the current grid, the
candidate derivative $P'$ is evaluated at high precision, and all real
critical points on the active interval are located. Any negative critical
value is inserted as a new cutting row. The process stops only after the
candidate passes the independent continuous scan to the prescribed numerical
tolerance.

\subsection*{Convergence status of the exchange loop}

The oracle is Remez-like in mechanics but not in guarantee. The truncated
Laguerre space $V_N$ is not a Haar system on $(0,\infty)$:
$L^{(\alpha)}_{2N}(2u)$ belongs to $V_N$ and has $2N$ positive roots, exceeding
the Haar budget of an $(N+1)$-dimensional space. Consequently, uniqueness of
the contact configuration and classical equioscillation arguments are not
available.

The relevant framework is convex semi-infinite programming. Every finite grid
gives an outer relaxation. Under compactness and a Slater point, the exchange
sequence drives the maximum constraint violation to zero and the corresponding
margins converge to the semi-infinite value. Together with monotonicity and
continuity in $u_0$, this gives a numerical bracket for the fixed-degree
boundary. This concerns optimality within the chosen finite space only.
Validity of every quoted upper bound is established by the exact rational
certificate and does not depend on convergence of the exchange loop.

\subsection*{Contact degeneracy and leave-one-out reconstruction}

The active contact count is discovered at the feasibility boundary. The
certified $d=1$ degree-$22$, $30$ and $38$ constructions use
$m=5$, $7$ and $9$ double contacts, respectively. The non-Haar structure can
make the active set non-unique. At degree $38$, the exchange oracle reports
ten near-active contacts although the square reconstruction uses nine.
Collocating all ten produces an admissible but irrelevant function whose
outer sign change moves to $u=274$ ($\rho=9.34$).

We therefore perform leave-one-out reconstruction: each candidate contact is
removed in turn, the remaining contacts are collocated in the square Laguerre
system, and only reconstructions whose exact certificate agrees with the
convex estimate are retained. At degree $38$, four of the ten subsets certify
the same $0.5725887$ value to nine digits. The same repair is required at
$d=2$. The multiplicity of successful subsets is consistent with a degenerate
optimal facet and is not interpreted as uniqueness of the extremizer.

\subsection*{Exact Sturm implementation}

After rational reconstruction,
\[
P(u)=\prod_i(u-\tau_i)^2R(u),
\]
and only the sign of $R$ on the ray remains to be certified. We build a
primitive pseudo-remainder sequence over $\mathbb Z$, using only positive
rescalings so that Sturm sign-variation counts are preserved. This controls
coefficient growth substantially better than naive rational Euclidean
remainders, whose bit lengths become impractical by degree $38$.

The verifier checks the exact Laguerre-to-monomial expansion, zero remainder
after division by every squared contact factor, the sign of the leading
coefficient, the exact value $R(\bar u)>0$, and equality of Sturm
sign-variation counts at $\bar u$ and $+\infty$. No grid, floating-point root
finder or interval sampling appears in the final admissibility decision.

\subsection*{Relation to the non-Gaussian search}

A numerical A/B analysis explains why the earlier Gaussian-mixture search did
not improve the final polynomial construction at small $d$. Both spaces are
evaluated through the same convex machinery, so the comparison concerns
function spaces rather than optimizer quality. Projecting individual mixture
directions onto the degree-$22$ polynomial space over the active region gives
relative sup-norm residuals of approximately
\[
1.3\times10^{-15}\ (a=1.5),\quad
3.0\times10^{-10}\ (a=2.5),\quad
9.2\times10^{-7}\ (a=4),
\]
\[
6.3\times10^{-5}\ (a=6),\qquad
1.6\times10^{-3}\ (a=10).
\]
Wider directions are genuinely distinct, but their optimized coefficients
are of order $10^{-3}$ and did not improve the objective beyond the numerical
noise floor. This containment analysis is diagnostic only and is not used in
the certificate.

\subsection*{Higher-dimensional asymptotic diagnostic}

Using the same convex machinery, the scale-free quantity
$\rho\sqrt{2\pi/d}$ along one degree--dimension path was
\[
1.435267\ (d{=}1),\quad
1.340341\ (d{=}2),\quad
1.212474\ (d{=}4),\quad
1.079183\ (d{=}8),\quad
1.026193\ (d{=}12),\quad
1.005480\ (d{=}16).
\]
The sequence approaches the sublinear-degree asymptote $1$ rather than
crossing it. This is only a diagnostic path, not an asymptotic theorem. At
$d=8$, increasing degree from $14$ to $38$ changes the reported value
non-monotonically over a range of approximately $5.6\times10^{-4}$ even
though the exact fixed-degree optima are nested. We therefore interpret the
spread as a numerical resolution limit of the discovery oracle.

\subsection*{Degree ceiling of the numerical stage}

The current floating-point exchange solver is reliable only to polynomial
degree approximately $38$. Beyond this, the raw polynomial values span an
extreme dynamic range: at degree $54$, $P$ reaches about $9.7\times10^{73}$ at
the far end of its own grid. Row equilibration cannot repair the resulting
column-direction scaling, and the LP begins returning points that violate its
own constraints.

Near the ceiling, the finite grid can also acquire a recession direction
that is non-negative on all nodes while dipping between them, producing an
``unbounded'' rather than infeasible LP, and the cut pool can accumulate
near-duplicate rows across the bisection. We cap the margin variable, refresh
the cut pool when necessary, and discard runs whose bisection collides with a
rigorous lower tripwire. Multiplication by the positive factor $e^{-u}$ reduces
the dynamic range but relocates the failure to underflow in the far tail.
Higher-dimensional asymptotic work will therefore require higher precision or
iterative/exact refinement inside the optimizer. The certificates reported
here are unaffected because all lie at degree at most $38$ and are validated
by the exact rational stage.

%% file: 06supplementary-compute-package.tex
\renewcommand{\tablename}{Supplementary Table}
\renewcommand{\figurename}{Supplementary Figure}

\section{NeuralCert structure}

NeuralCert is organized around explicit interfaces and data boundaries rather than a single shared numerical backend. A problem-specific discovery module produces an explicit candidate representation together with the metadata required to interpret it. A separate certification module reconstructs and evaluates that candidate using rigorous arithmetic and emits a self-contained certificate. An independent verifier consumes only this certificate and recomputes the claimed result without relying on the discovery or certification implementation. Discovery methods can therefore be modified or replaced without changing the trust assumptions underlying certificates that have already been issued.
The three applications considered here share this discovery–certification–verification architecture, but not their numerical representations or proof mechanisms. For the Maynard problem, finite-\(k\) polynomial candidates are certified through exact rational evaluation of the associated Gram quadratic forms, whereas large-\(k\) rational candidates are certified through Fourier-domain evaluation in Arb ball arithmetic. Delsarte bounds are certified by exact rational verification of dual feasibility, while sign-uncertainty candidates are reduced to rational polynomial factorizations and verified using exact Sturm-sequence arguments.

%
%


\begin{figure}[tbp]
  \centering
  \begin{minipage}{0.96\linewidth}
  \small
  \begin{verbatim}
NeuralCert repository
|-- neuralcert/                  reusable framework
|   |-- core/                   problem and result contracts
|   |-- discovery/              neural models and optimization
|   |-- distill/                structural compression
|   |-- refine/                 deterministic local refinement
|   |-- exact/                  reusable exact arithmetic
|   |-- verify/                 independent verification primitives
|   |-- data/                   reproducible datasets
|   |-- pipeline.py             typed stage orchestration
|   `-- problems/
|       |-- maynard/            Maynard plugin and adapters
|       |-- delsarte/           coding-theory LP plugin
|       `-- sign_uncertainty/   Gaussian/Laguerre/Sturm plugin
`-- maynard_tools/              specialized compatibility layer
    |-- discovery/              factored, gated, ratio
    |-- certification/          Karatsuba, CRT, FLINT, Arb
    `-- verifier/               independent Maynard checks
  \end{verbatim}
  \end{minipage}
  \caption{Compact directory view of the NeuralCert package architecture.}
  \label{fig:neuralcert-package-tree}
\end{figure}

\section{Computational performance}

The variational problem shards along the pair axis: for a pair $(j, \ell)$ the entire convolution chain touches only channels $j$ and $\ell$ and produces two scalars, so the compute-to-communication ratio permits distribution over commodity interconnect. Because the Hellmann–Feynman route detaches the Ritz vector, both quadratic forms are plain sums over pairs, and the exact gradient follows from two scalar all-reductions and one gradient all-reduction per iteration; no autograd-aware collectives are required. Memory is dominated by activations indexed by pair, growing as $m(m + 1)/2$, so parameter-sharding schemes (FSDP, ZeRO) address the wrong resource; DDP is also inapplicable, since it averages gradients while the reduction here is a sum inside a nonlinear ratio.
Memory is not dominated by the assembled Gram matrices ($m \times m$), but by the activations of the convolution chain, which are indexed by pair and scale as $N \cdot n_q \cdot m(m+1)/2$ per chain step. Because Hellmann–Feynman  requires these intermediates for the backward pass, they are retained across all $\sim 2 \log_2 k $ steps of the doubling chain. Supplementary Table \ref{tab:computation} shows GPU memory use with different settings for the Maynard-Tao polynomial fitting discovery procedure.

\textbf{Certification cost:} The CRT backend at k=3000, degree 60 needs ~155,000 primes; ball arithmetic replaces it with one adaptive-precision evaluation.  The resolution requirement as a scaling law. $n_{rep}$ grows roughly as $\approx k^{0.78}$ to hold the $g \equiv x$ control at $10^{-3}$.

\begin{table}[H]
\centering
\caption{Overview of memory use and runtime for different settings and options for the Maynard-Tao \textbf{Polynomial discovery} regime since other implementations in the package have seconds runtime while using MegaBytes.}
\label{tab:computation}
\begin{tabular}{lllr}
\toprule
k       & Channels  & $N_{grid}$ & VRAM \\
\midrule

100     &  32       & 3000  & 36    \\
100     &  48       & 3000  & 77    \\
100     &  64       & 3000  & 133   \\
100     &  32       & 4000  & 48    \\
100     &  48       & 4000  & 102   \\
100     &  64       & 4000  & 110   \\

500     &  32       & 3000     & 53    \\
500     &  48       & 3000     & 110   \\
500     &  64       & 3000     & 196 \\
500     &  32       & 4000     & 70    \\
500     &  48       & 4000     & 84    \\
500     &  64       & 4000     & 260   \\

1000    & 32        & 5000     & 66 \\
1000    & 48        & 5000     & 218 \\
1000    & 64        & 5000     & 226 \\
1000    & 64        & 6000     & 274 \\
1000    & 48        & 4000     & 171 \\

3000    & 32        & 5000     & 121 \\
3000    & 48        & 2500     & 253 \\

3600    & 10        & 5000     & 22  \\

5000    & 10        & 5000     & 22  \\
5000    & 20        & 5000     & 52  \\
5000    & 30        & 5000     & 102  \\
5000    & 40        & 5000     & 172  \\
5000    & 50        & 5000     & 260  \\

\bottomrule
\end{tabular}

\par\smallskip
\parbox{0.95\linewidth}{%
\footnotesize
\textit{Note.} Heavy compute analyses were performed using multiple NVIDIA H100 NVL or NVIDIA H200 SXM GPU cards with respectively 96 or 141 GB VRAM. GPU-consumption = GB VRAM at it's peak. $N_{grid}$ = Representation nodes $N$; 
The Chebyshev–Lobatto nodes per channel which is the resolution of a one-dimensional function, and the $k$-dimensional integral is handled by the separable structure; memory scales as $N \cdot 3 m(m+1)/2$. This is the vanilla setup ($\epsilon = 0$). 
Runtime in $k=100, N_{grid} = 3000, \text{Channels} = 32$ is 1 hour and 14 minutes for 4000 Adam iterations and 8 minutes for 20 steps LBFGS.
}
\end{table}

%% file: Section1/supplementary_certificate_example.tex
\renewcommand{\tablename}{Supplementary Table}
\renewcommand{\figurename}{Supplementary Figure}

\section{Anatomy of a certificate}
\label{sup:certificate-anatomy}

A certificate is a compact, exactly specified record from which a claimed
inequality can be independently recomputed. It is expensive to produce
and cheap to check: verification requires neither the discovery pipeline
nor the hardware used to generate the candidate, and a reader who
distrusts every stage of the numerical search can nonetheless confirm the
stated bound by running a short standalone program. The asymmetry is
deliberate: generating a competitive trial function requires the full
optimisation and certification stack, whereas checking the resulting
inequality requires only the exact trial parameters and a few seconds of
interval arithmetic on commodity hardware.

This note makes that concrete by walking through one complete example at
$k=5$ --- the smallest case, chosen so that every artefact fits on the
page with nothing elided. All files shown are shipped verbatim in
\texttt{demo/} and are the unmodified output of the two commands in
Panels~A and~C.

\subsection*{Panel A --- The claim}

The object being certified is one explicit function, fixed in advance:
\begin{equation}
F(t_1,\dots,t_5)=\prod_{i=1}^{5}g(t_i)\,\mathbf 1_{\mathcal R_5}(t),
\qquad
g(t)=\frac{1}{c+4t},
\qquad
c=\frac{552637315007}{978860977902}.
\label{eq:demo-trial}
\end{equation}
Because $M_k$ is a supremum over admissible trial functions, \emph{any}
such $F$ yields a lower bound, and the certified claim is
\[
M_5\ \ge\ 1.9717711764 .
\]
Three properties of this statement are worth making explicit before the
mechanics, because they are what the certificate does and does not
assert.

\begin{enumerate}
\item It is a statement about $F$, not about the optimiser. The value of
$c$ was found by numerical search; the search is not part of the claim,
and a different (or worse) $c$ would give a different (or worse) valid
bound.
\item It is a lower bound and is deliberately not sharp. The discovery
stage reported $R=1.9966686079$ for this same $F$; directed rounding of
the aliasing and truncation budgets costs about $2.5\times10^{-2}$, and
the certified number is the conservative end of that interval.
\item It is weaker than the best published constructions at $k=5$, and
this is the expected behaviour of an honest lower-bound machine. The
rank-one rational family is not the optimal family here: Maynard's
explicit polynomial trial gives $M_5\ge 1417255/708216=2.00116\ldots$,
and Bogaert's Krylov computation gives $2.0071451444$. The gap matters
mathematically, since under the Elliott--Halberstam conjecture the
threshold $M_5>2$ implies $\mathrm{DHL}[5,2]$; the certified rank-one
value does not reach it, whereas the polynomial trial does. At
$k=5$ the certificate is therefore a demonstration of the machinery
rather than a source of number-theoretic content.
\end{enumerate}

\subsection*{Panel B --- The two files}

The pipeline writes two records, and the separation between them is the
trust boundary.

\panel{B1. The discovery record (\texttt{k5-eps0-ratio.npz}).}
Produced by the search stage. It carries the trial function in exact
rational form and nothing else of mathematical weight; the floating-point
fields are provenance, not evidence.

\begin{lstlisting}[style=certfile]
k             : 5
epsilon       : 0/1                    (vanilla functional)
mu            : [1]
c_num, c_den  : ['552637315007'], ['978860977902']
power         : [1]                    g(t) = (c + n t)^{-1}
w_num, w_den  : ['1'], ['1']           mixing weight (rank one)
R_discovery   : 1.99666861             float, not certified
ceiling       : 2.01179739             k/(k-1) log k
canonical     : k=5|552637315007/978860977902^-1*1
sha256        : dfbd528eb3900e18771468e642b8cbcde4b2ed9074a593701419a856e2c596e6
\end{lstlisting}

\noindent
The \texttt{canonical} string is the complete mathematical content of the
file: dimension, exact pole, power, weight. Everything the certifier
needs is recoverable from that one line, and the \texttt{sha256} binds
the certificate below to this exact trial function.

\panel{B2. The certificate (\texttt{k5-eps0-ratio.json}).}
Produced by the certification stage and reproduced here in full.

\begin{lstlisting}[style=certfile]
{
  "format": "maynard-Mk-certificate/2",
  "trial_function": {
    "g": "g(t) = 1/(c + n t),  n = k-1",
    "form": "F = prod_{i=1..k} g(t_i) on the simplex",
    "c_exact": "552637315007/978860977902",
    "u_exact": "1",
    "note": "R is invariant under scaling of the weight; any fixed weight
             yields a valid lower bound."
  },
  "parameters": {
    "P": 8.0,              "prec": 200,
    "nnode": 153,          "M0": 153,
    "rmax": 12,            "npanels": 80,
    "A_upper": 16.170030887466964,
    "Theta_ball": 32.34006177493393,
    "theta_far": 120.0,    "block_ratio": 2.0,
    "tol_bits": 45,        "assembly": "directed-arb/2"
  },
  "certified_quantities": {
    "N_enclosure": { "lower": 0.0877189203290238,  "upper": 0.0877189203290238  },
    "D_enclosure": { "lower": 0.2196630290925637,  "upper": 0.21966302909256372 }
  },
  "error_terms": {
    "N_band": 0.0, "N_tail": 0.0005786544597255379, "N_alias": 3.51115912146848e-07,
    "D_band": 0.0, "D_tail": 0.0013051009573133488, "D_alias": 4.99060886139429e-07,
    "P_S_gt": 1.2860941636237798e-06, "y": 7.0
  },
  "far_field_blocks": [],
  "final_arithmetic": {
    "N_lower_used": 0.08713991475338612,
    "D_upper_used": 0.22096862911076323,
    "R_lower":      1.9717711763896169,
    "R_midpoint":   1.9966700971800757,
    "ceiling":      2.0117973905426254
  },
  "claim": {
    "k": 5,
    "statement": "M_5 >= 1.971771176",
    "M_k_lower_bound": 1.971771176,
    "known_upper_bound_ceiling": 2.0117973905426254,
    "ceiling_source": "M_k < k/(k-1) log k  (Polymath8b)",
    "primes_criterion": "r_k = ceil(theta M_k / 2), theta = 1/2 - eps
                         (Bombieri-Vinogradov; Maynard Prop. 4.2)",
    "primes_implied": 1
  },
  "assembly_note": "All error bounds assembled in Arb ball arithmetic; floats
                    exit only through provably outward-rounded conversions
                    (nudge-verified against the ball).",
  "source": {
    "npz": "./k5-eps0-ratio.npz",
    "sha256": "dfbd528eb3900e18771468e642b8cbcde4b2ed9074a593701419a856e2c596e6",
    "canonical": "k=5|552637315007/978860977902^-1*1",
    "R_discovery": 1.9966686078849576,
    "u_exact": "1"
  },
  "self_hash_sha256": "7f12647dfe9d7c9e02fc1f636eb8aec4ae05ac359675a0412cdbb551a328ee20"
}
\end{lstlisting}

\noindent
That the file fits on one page is itself the point: there is nowhere for
an unstated assumption to hide. The fields divide into three kinds, and
Panel~C explains why the distinction matters.

\begin{center}
\begin{tabular}{@{}lll@{}}
\toprule
Kind & Fields & Status \\
\midrule
Mathematical input & \texttt{c\_exact}, \texttt{u\_exact}, \texttt{k}
  & load-bearing; exact rationals \\
Computational settings & \texttt{P}, \texttt{nnode}, \texttt{rmax},
  \texttt{prec}, \dots & affect sharpness, not validity \\
Reported intermediates & \texttt{*\_enclosure}, \texttt{error\_terms},
  \texttt{final\_arithmetic} & recomputed, never trusted \\
\bottomrule
\end{tabular}
\end{center}

\subsection*{Panel C --- Verification}

\panel{C1. The transcript.}
The certifier reads the discovery record and writes the certificate:

\begin{lstlisting}[style=certfile]
$ neuralcert certify --method ratio --npz k5-eps0-ratio.npz \
                     --cert-json k5-eps0-ratio.json

trial function : k=5|552637315007/978860977902^-1*1
  sha256       = dfbd528eb3900e18...  (verified)
  k            = 5   R_discovery = 1.9966686079
  A = 16.17   Theta_ball = 32.3401   theta_far = 120   nodes = 307   far blocks = 0
  aliasing: P(S > 7) <= 1.286e-06   (Markov, r <= 12)
  N in [0.08771892032902380 +/- 6.19e-18]   err(band,tail,alias) = (0.00e+00, 5.79e-04, 3.51e-07)
  D in [0.2196630290925637 +/- 2.42e-17]    err = (0.00e+00, 1.31e-03, 4.99e-07)
  R midpoint   = 1.9966700972   |mid - discovery| = 1.49e-06

  CERTIFIED (directed):  M_5 >= 1.9717711764   [0s]
  => at least 1 primes infinitely often (theta = 1/2 - eps)
\end{lstlisting}

\panel{C2. What is recomputed, and what is ignored.}
Every quantity below is regenerated by the verifier from
\texttt{c\_exact} and the computational settings alone. None of the
stored values is read as input; they are recomputed and compared.

\begin{center}
\begin{tabular}{@{}lll@{}}
\toprule
Quantity & Reconstructed from & Value \\
\midrule
$A=2(c+n)/c$                    & $c$                  & $16.170030887466964$ \\
$\Theta_{\mathrm{ball}}=2A$     & $A$                  & $32.34006177493393$ \\
$\Delta\theta=2\pi/P$           & $P=8$                & $0.7853981634$ \\
$\lfloor\Theta_{\mathrm{ball}}/\Delta\theta\rfloor$ & $A,P$ & $41$ \\
$M_0=\max(41,\,n_{\mathrm{near}})$ & above, $n_{\mathrm{near}}=153$ & $153$ \\
Band error                      & $M_0=n_{\mathrm{near}}$ & \emph{empty} \\
$N_{\mathrm{tail}}$, $D_{\mathrm{tail}}$ & Lemma (E2) with $A,\Delta\theta,M_0$
  & $5.7865\times10^{-4}$, $1.3051\times10^{-3}$ \\
$\mathbb P(S>7)$                & certified moments, $r\le 12$ & $1.286\times10^{-6}$ \\
$R_{\mathrm{low}}=kN_{\mathrm{low}}/D_{\mathrm{up}}$ & above & $1.9717711763896$ \\
\bottomrule
\end{tabular}
\end{center}

\noindent
Two features of this table are worth reading off. First, the far-field
band is empty because $M_0=n_{\mathrm{near}}=153$ exceeds
$\lfloor\Theta_{\mathrm{ball}}/\Delta\theta\rfloor=41$: the analytic tail
begins exactly where the computed sum ends, which is the choice discussed
in the truncation bound and which at $k=5$ is worth more than an order of
magnitude in the final bound. Second, the entire chain descends from the
single rational $c$; the verifier never needs the neural stage, a GPU,
or any stored enclosure.

\noindent
Test~1 shows that an inflated claim cannot survive. Test~2 shows the
converse and less obvious half: a stored intermediate cannot be used to
strengthen a claim either, because the verifier does not read it. Only
\texttt{c\_exact} (guarded by the \texttt{sha256} against the discovery
record) and the computational settings enter, and altering the latter
changes only how sharp the recomputed bound is, never whether the
recomputation is valid.

\panel{C4. Independent reimplementation.}
The strongest form of the check does not use our code at all. Every
entry in the Panel~C2 table is an elementary formula in $c$, $P$,
$n_{\mathrm{near}}$ and $r_{\max}$, and can be reproduced in a few dozen
lines with any interval-arithmetic library. We have verified the present
example this way as well as through the shipped verifier.

%% file: Tables.tex
\section{Maynard--Tao Discovery and Certification Tables}

\begingroup
\renewcommand{\tablename}{Supplementary Table}

\begin{longtable}{r r r r r r r}
\caption{Vanilla discovery and certification results ($\epsilon = 0$).}
\label{tab:vanilla-results}\\

\toprule
$k$ & Discovery & $Ch_{\mathrm{disc}}$ & $Ch_{\mathrm{cert}}$ &
Degree & Certification & $Fr_{\mathrm{ceiling}}$ \\
\midrule
\endfirsthead

\multicolumn{7}{c}{\tablename\ \thetable\ -- continued from previous page} \\
\toprule
$k$ & Discovery & $Ch_{\mathrm{disc}}$ & $Ch_{\mathrm{cert}}$ &
Degree & Certification & $Fr_{\mathrm{ceiling}}$ \\
\midrule
\endhead

\midrule
\multicolumn{7}{r}{Continued on next page} \\
\endfoot

\bottomrule
\multicolumn{7}{@{}p{\dimexpr\textwidth-2\tabcolsep\relax}@{}}{
\footnotesize\textit{Note.}
Values $k = 25$ and $k > 30$ have certified values above Bogaert's Krylov subspace method and the published numbers on the Polymath8b Wiki. $Ch_{\mathrm{disc}}$; amount of channels in the discovery, $Ch_{\mathrm{cert}}$; amount of channels in the certification (after pruning). $Fr_{\mathrm{ceiling}}$; the gap to the Cauchy Schwartz ceiling.
}\\
\endlastfoot

20  & 3.128345991229  & 36 & 33 & 60 & 3.127557950496731089199291       & 0,992 \\
21  & 3.170106837298  & 36 & 30 & 60 & 3.169687744135085908710644       & 0,992 \\
22  & 3.210448579010  & 36 & 29 & 60 & 3.209791425249774301029605       & 0,991 \\
23  & 3.249812450248  & 36 & 30 & 60 & 3.249018102873413922834334       & 0,991 \\
24  & 3.287477573848  & 36 & 32 & 60 & 3.286239994066373110168097       & 0,991 \\
25  & 3.322508727072  & 36 & 28 & 60 & 3.322151135960770560379734       & 0,991 \\
26  & 3.358033109070  & 36 & 30 & 60 & 3.356593408977485398051369       & 0,991 \\
27  & 3.390797412499  & 36 & 30 & 60 & 3.389882327187024762678030       & 0.990 \\
28  & 3.423550501369  & 36 & 29 & 60 & 3.421891175367646121306316       & 0.990 \\
29  & 3.453912899535  & 36 & 33 & 60 & 3.452879774954988104889254       & 0.990 \\
30  & 3.484519905740  & 42 & 35 & 60 & 3.482920804908356499173898       & 0.990 \\
31  & 3.514154741570  & 42 & 35 & 60 & 3.512405412198264166419745       & 0.990 \\
32  & 3.542948097289  & 42 & 34 & 60 & 3.540478095710103535569937       & 0.990 \\
33  & 3.569408303647  & 42 & 35 & 60 & 3.568168082140816922057409       & 0.990 \\
34  & 3.595856014881  & 42 & 36 & 60 & 3.594862215205947831119258       & 0,989 \\
35  & 3.622637140460  & 42 & 36 & 60 & 3.620729784609115150023189       & 0,989 \\
36  & 3.649049376929  & 42 & 36 & 60 & 3.645939449133237314781111       & 0,989 \\
37  & 3.672649021837  & 42 & 36 & 60 & 3.670749358191800897953616       & 0,989 \\
38  & 3.697450514132  & 42 & 38 & 60 & 3.694212659617672335430314       & 0,989 \\
39  & 3.719968833432  & 42 & 37 & 60 & 3.718057515927273283488455       & 0,989 \\
40  & 3.743412144728  & 48 & 45 & 60 & 3.740850129189844929384497015046 & 0,989 \\
41  & 3.765478132249  & 48 & 39 & 60 & 3.763326248211363188195226978670 & 0,989 \\
42  & 3.786443640815  & 48 & 40 & 60 & 3.784836398710852090166363045263 & 0,989 \\
43  & 3.808393391526  & 48 & 44 & 60 & 3.806112899332449843513725348999 & 0,988 \\
44  & 3.831089249576  & 48 & 41 & 60 & 3.826734286114500842550587364787 & 0,988 \\
45  & 3.850548910793  & 48 & 44 & 60 & 3.847582396814691842088947101999 & 0,988 \\
46  & 3.870708253098  & 48 & 45 & 60 & 3.867062274460791570055053587204 & 0,988 \\
47  & 3.890981548956  & 48 & 42 & 60 & 3.887050039712155465458257739811 & 0,988 \\
48  & 3.911606976491  & 48 & 40 & 60 & 3.906150910904812934847425559032 & 0,988 \\
49  & 3.927617139801  & 48 & 42 & 60 & 3.925218177946628714063038556246 & 0,988 \\
50  & 3.943707217858  & 40 & 33 & 50 & 3.942445971786197344558734862254 & 0,988 \\
51  & 3.962201156342  & 40 & 34 & 50 & 3.961742909943176882384440577615 & 0,988 \\
52  & 3.980085920896  & 40 & 33 & 50 & 3.979361743352805048779887827229 & 0,988 \\
53  & 3.997392090465  & 40 & 35 & 50 & 3.996569546603520792189153865045 & 0,988 \\
54  & 4.014417349448  & 40 & 33 & 50 & 4.013421392728337468685880863364 & 0,987 \\
55  & 4.031158093047  & 40 & 33 & 50 & 4.029930315883373661648075171587 & 0,987 \\
56  & 4.048740196154  & 40 & 32 & 50 & 4.047211183498640314770985656114 & 0,987 \\
57  & 4.064675943291  & 40 & 33 & 50 & 4.063658911265930366015897090005 & 0,987 \\
58  & 4.080897241650  & 40 & 33 & 50 & 4.079662539803768949658734868762 & 0,987 \\
59  & 4.096107675723  & 40 & 33 & 50 & 4.094929027520673522821234872044 & 0,987 \\
60  & 4.111224221271  & 40 & 32 & 50 & 4.110244679900046307236517232052 & 0,987 \\
61  & 4.123180413889  & 40 & 37 & 50 & 4.122593396892570578487592704836 & 0,986 \\
62  & 4.141415806845  & 40 & 35 & 50 & 4.140421271440397388299091151343 & 0,987 \\
63  & 4.151452852615  & 40 & 37 & 50 & 4.150583732266163623322424042114 & 0,986 \\
64  & 4.170843665400  & 40 & 33 & 50 & 4.169733035246490856721916297730 & 0,987 \\
65  & 4.185117958392  & 40 & 36 & 50 & 4.184200456291009710938969100824 & 0,987 \\
66  & 4.199258111085  & 40 & 34 & 50 & 4.198212352631011423270651543778 & 0,987 \\
67  & 4.212801300650  & 40 & 32 & 50 & 4.211734375259938467462491656228 & 0,987 \\
68  & 4.226913707347  & 40 & 34 & 50 & 4.225671537232203572914298251005 & 0,987 \\
69  & 4.240641054970  & 40 & 27 & 50 & 4.239530843403299471806859841686 & 0,987 \\
70  & 4.252043044524  & 40 & 34 & 50 & 4.250709595458423509274271903750 & 0,986 \\
71  & 4.262077283275  & 40 & 34 & 50 & 4.260781471180062527612810820600 & 0,985 \\
72  & 4.280170671493  & 40 & 31 & 50 & 4.278486659483071740524507466677 & 0,987 \\
73  & 4.291652331079  & 40 & 36 & 50 & 4.290439906377909296284295483681 & 0,986 \\
74  & 4.302039567418  & 40 & 32 & 50 & 4.300766657962825879583571273685 & 0,986 \\
75  & 4.315037290653  & 40 & 36 & 50 & 4.313819804163989594408982684260 & 0,986 \\
76  & 4.330839005745  & 40 & 33 & 50 & 4.328884563460976817456708107930 & 0,986 \\
77  & 4.342588581101  & 40 & 32 & 50 & 4.340903139563253839977695048758 & 0,986 \\
78  & 4.353490658660   & 40 & 30 & 50 & 4.351450026525314703393729301722 & 0,986 \\
79  & 4.366332032220  & 40 & 25 & 50 & 4.363547069467314352935761632810 & 0,986 \\
80  & 4.375870655363  & 50 & 36 & 50 & 4.375305575875195280796809950641 & 0,986 \\
81  & 4.388391084261  & 50 & 34 & 50 & 4.387822836632632171272303086405 & 0,986 \\
82  & 4.394774870701  & 50 & 40 & 50 & 4.394326901844365176252467034760 & 0,985 \\
83  & 4.401879201937  & 50 & 43 & 50 & 4.401472671518373508912545234614 & 0,984 \\
84  & 4.417373705040  & 50 & 40 & 50 & 4.416953094949757411430914506920 & 0,985 \\
85  & 4.346441814015  & 50 & 41 & 50 & 4.346494488379658185500563460563 & 0,967 \\
86  & 4.431105148849  & 50 & 42 & 50 & 4.430626969481280584146214724073 & 0,983 \\
87  & 4.452908961195  & 50 & 40 & 50 & 4.451913396990625293127698559451 & 0,985 \\
88  & 4.464302979612  & 50 & 41 & 50 & 4.463426157257708417355375315939 & 0,986 \\
89  & 4.472355081593  & 50 & 42 & 50 & 4.471988530980810754826227770313 & 0,985 \\
90  & 4.487119371775  & 50 & 37 & 50 & 4.485910734210900551045280545626 & 0,986 \\
91  & 4.496698527592  & 50 & 37 & 50 & 4.495645093532629510509939471985 & 0,986 \\
92  & 4.505829693079  & 50 & 39 & 50 & 4.504547705976628150465431574044 & 0,985 \\
93  & 4.518347339954  & 50 & 36 & 50 & 4.516339208377947949080443322071 & 0,986 \\
94  & 4.528059040470  & 50 & 29 & 50 & 4.526635937140459614406972086093 & 0,986 \\
95  & 4.533806772177  & 50 & 41 & 50 & 4.532436225927372584839614117923 & 0,985 \\
96  & 4.545404066027  & 50 & 26 & 50 & 4.544084221152406068483876743217 & 0,985 \\
97  & 4.554320603674  & 50 & 31 & 50 & 4.553588807459129165253574203320 & 0,985 \\
98  & 4.565278947850  & 50 & 28 & 50 & 4.564004527939718801984156460497 & 0,985 \\
99  & 4.509354418861  & 50 & 35 & 50 & 4.508561572862963088224836065494 & 0,971 \\
100 & 4.585389371115  & 50 & 39 & 50 & 4.583559169120379861185947389471 & 0,985 \\
100 & 4.582636529114  & 50 & 39 & 50 & 4.581993985670381852509553180491 & 0,985 \\
130 & 4.822341520115  & 50 & 13 & 50 & 4.808713824125718836287734368717 & 0.980 \\
150 & 4.961245663975  & 50 & 6  & 50 & 4.955366718182981077061200798091 & 0.982 \\
200 & 5.090813348651  & 50 & 4  & 60 & 5.087280688405278804164639536817 & 0.955 \\
220 & 5.258389442417  & 50 & 12 & 60 & 5.247997786415732591367214256322 & 0.969 \\
250 & 5.384166994782  & 50 & 5  & 60 & 5.349929087705455637995951938908 & 0.965 \\
350 & 5.610666510192  & 50 & 3  & 60 & 4.921779442483077605884943545259 & 0.838 \\
500 & 6.080822857000  & 50 & 3  & 60 & 5.032655240858842848757497207358 & 0.808 \\

\end{longtable}
\endgroup

\begingroup
\renewcommand{\tablename}{Supplementary Table}

\begin{longtable}{r r r r r r r}
\caption{Discovery and certification results for $\varepsilon = 1/25$.}
\label{tab:epsilon-results}\\

\toprule
$k$ & Discovery & $Ch_{\mathrm{disc}}$ & $Ch_{\mathrm{cert}}$ &
Degree & Certification \\
\midrule
\endfirsthead

\multicolumn{7}{c}{\tablename\ \thetable\ -- continued from previous page} \\
\toprule
$k$ & Discovery & $Ch_{\mathrm{disc}}$ & $Ch_{\mathrm{cert}}$ &
Degree & Certification \\
\midrule
\endhead

\midrule
\multicolumn{7}{r}{Continued on next page} \\
\endfoot

\bottomrule
\multicolumn{7}{@{}p{\dimexpr\textwidth-2\tabcolsep\relax}@{}}{
\footnotesize\textit{Note.}
$Ch_{\mathrm{disc}}$; amount of channels in the discovery, $Ch_{\mathrm{cert}}$; amount of channels in the certification (after pruning).
}\\
\endlastfoot

20  & 3.210936069786 & 36 & 33 & 60 & 3.210366921521749737183691731395 \\
21  & 3.252680335644 & 36 & 30 & 60 & 3.252298834127242084124289323548 \\
22  & 3.292779266722 & 36 & 29 & 60 & 3.292126369692432458798975243945 \\
23  & 3.331173442193 & 36 & 30 & 60 & 3.330322094752801530355410455276 \\
24  & 3.368036770635 & 36 & 30 & 60 & 3.366908489372213299919519038109 \\
25  & 3.402371365754 & 36 & 29 & 60 & 3.402010619443212328887383163464 \\
26  & 3.437028098636 & 36 & 31 & 60 & 3.436157517278351176844746714320 \\
27  & 3.468827024244 & 36 & 29 & 60 & 3.468320542253183831988945583237 \\
28  & 3.500415876452 & 36 & 30 & 60 & 3.499444954997692030882286887925 \\
29  & 3.531186516502 & 36 & 32 & 60 & 3.530010877716944327578014764437 \\
30  & 3.561700179340 & 42 & 39 & 60 & 3.559458746545551314730876537135 \\
31  & 3.590689744972 & 42 & 40 & 60 & 3.588557568639265632860385653635 \\
32  & 3.618732963879 & 42 & 38 & 60 & 3.616127200711337138001580976885 \\
33  & 3.644253262445 & 42 & 36 & 60 & 3.642965802069061441160922851345 \\
34  & 3.671450599407 & 42 & 39 & 60 & 3.668624924763460294356636286090 \\
35  & 3.696250084782 & 42 & 40 & 60 & 3.694175287948157320412197179779 \\
36  & 3.720908518093 & 42 & 39 & 60 & 3.718279485810969785557788969075 \\
37  & 3.744273426646 & 42 & 38 & 60 & 3.742226163630752703816176588194 \\
38  & 3.768134969368 & 42 & 41 & 60 & 3.766021610226650587509688536386 \\
39  & 3.790213730678 & 42 & 39 & 60 & 3.788245621497978381906985116415 \\
40  & 3.813521764234 & 48 & 45 & 60 & 3.810310054297441303862285913738 \\
41  & 3.833616627961 & 48 & 41 & 60 & 3.831637106843201917337701123609 \\
42  & 3.855906628402 & 48 & 45 & 60 & 3.852600769154481058291704449249 \\
43  & 3.876753550031 & 48 & 42 & 60 & 3.873483797869255723761005318927 \\
44  & 3.895577913875 & 48 & 46 & 60 & 3.893301604331862091582323108956 \\
45  & 3.915649733022 & 48 & 45 & 60 & 3.913474652772154201185956494902 \\
46  & 3.936937139139 & 48 & 43 & 60 & 3.932552485373460445967272733645 \\
47  & 3.956703033242 & 48 & 46 & 60 & 3.952079215054289455935133148337 \\
48  & 3.975823596168 & 48 & 44 & 60 & 3.970095831528526602285865965250 \\
49  & 3.991450730929 & 48 & 43 & 60 & 3.988676716542276674807876834435 \\
50  & 4.006026386327 & 40 & 32 & 50 & 4.005637471340251334822223632678 \\
51  & 4.021337913283 & 40 & 21 & 50 & 4.020552375066701497376356751144 \\
52  & 4.040635119278 & 40 & 31 & 50 & 4.040258681163316069160504782304 \\
53  & 4.058215551546 & 40 & 24 & 50 & 4.057203993845967521227902017906 \\
54  & 4.072969225060 & 40 & 28 & 50 & 4.072322886063011063853846972776 \\
55  & 4.084802202882 & 40 & 18 & 50 & 4.084085711139175414397956372460 \\
56  & 4.100028783679 & 40 & 26 & 50 & 4.099206855766737568002165290524 \\
57  & 4.115585844764 & 40 & 23 & 50 & 4.115039486353743806900256901164 \\
58  & 4.135152191312 & 40 & 25 & 50 & 4.134242300759367593439951066209 \\
59  & 4.149187895198 & 40 & 30 & 50 & 4.148472609734504851305763152187 \\
60  & 4.158966932600 & 40 & 19 & 50 & 4.157998757293132279032736615835 \\
61  & 4.175975507879 & 40 & 31 & 50 & 4.175077906373857205061979626025 \\
62  & 4.189862734973 & 40 & 32 & 50 & 4.188904070211809938022792508286 \\
63  & 4.210516336726 & 40 & 33 & 50 & 4.208857616045980796408818491228 \\
64  & 4.219415161437 & 40 & 30 & 50 & 4.218124437987688545452024305223 \\
65  & 4.238501800615 & 40 & 36 & 50 & 4.237384124837738623814213247461 \\
66  & 4.252120545298 & 40 & 29 & 50 & 4.250793037094066225539732443814 \\
67  & 4.261402748608 & 40 & 22 & 50 & 4.259777456025320664617972564576 \\
68  & 4.270379618942 & 40 & 26 & 50 & 4.269124177371744769988813084968 \\
69  & 4.283984925481 & 40 & 5  & 50 & 4.277274261946282382619508640328 \\
70  & 4.301560117306 & 40 & 34 & 50 & 4.298733418769167808786843497725 \\
71  & 4.316885083242 & 40 & 31 & 50 & 4.314413443106201423239218270654 \\
72  & 4.329503334605 & 40 & 35 & 50 & 4.327034144580560001136968528718 \\
73  & 4.334908974772 & 40 & 30 & 50 & 4.332768096731747934119777486576 \\
74  & 4.342039995848 & 40 & 28 & 50 & 4.339799356192207942523849650386 \\
75  & 4.358917246793 & 40 & 32 & 50 & 4.356375192119249086787668845356 \\
76  & 4.375902255118 & 40 & 31 & 50 & 4.372974259197422394873126180262 \\
77  & 4.388092689456 & 40 & 35 & 50 & 4.385396831879956294549952588437 \\
78  & 4.399184535712 & 40 & 22 & 50 & 4.395990292345199428975680930752 \\
79  & 4.407214416241 & 40 & 30 & 50 & 4.404183984805812816254054716642 \\
80  & 4.195941988767 & 50 & 40 & 50 & 4.197608268366179620141468586531 \\
81  & 4.075117366264 & 50 & 35 & 50 & 4.093066546579362186151057435374 \\
82  & 4.192978689328 & 50 & 43 & 50 & 4.195259401348774146934234241013 \\
83  & 4.425644201176 & 50 & 35 & 50 & 4.424852131550489892382995112584 \\
84  & 4.096825473475 & 50 & 31 & 50 & 4.099802658570555784696980423382 \\
85  & 4.407475205627 & 50 & 39 & 50 & 4.406760295616284828171584604987 \\
86  & 4.457072509121 & 50 & 31 & 50 & 4.458412761836136236841801895336 \\
87  & 4.453265265989 & 50 & 45 & 50 & 4.453060114331590189551387134208 \\
88  & 4.344434571908 & 50 & 40 & 50 & 4.344422869750479566119971669552 \\
89  & 4.410538854701 & 50 & 39 & 50 & 4.417151906430070350388796841891 \\
90  & 4.470745463723 & 50 & 45 & 50 & 4.471149582958362588083170625461 \\
91  & 4.175121841242 & 50 & 41 & 50 & 4.176550564513205748419816639538 \\
92  & 4.381872185197 & 50 & 45 & 50 & 4.389037025811565058867953246127 \\
93  & 4.312917570269 & 50 & 44 & 50 & 4.314963190359444288509182754059 \\
94  & 4.272652151109 & 50 & 38 & 50 & 4.273951035321736008358360794466 \\
95  & 4.420217255929 & 50 & 38 & 50 & 4.419415306457317943090466947241 \\
96  & 4.311614804044 & 50 & 34 & 50 & 4.311428453792439589002649755954 \\
97  & 4.203613002767 & 50 & 35 & 50 & 4.206802724014883123574465837334 \\
98  & 4.217114551052 & 50 & 39 & 50 & 4.225342000024770175903490961201 \\
99  & 4.548298852384 & 50 & 39 & 50 & 4.546923190682762183944464394853 \\
100 & 4.496536005802 & 50 & 42 & 50 & 4.495512133810554663324778495290 \\
100 & 4.466983632765 & 50 & 40 & 50 & 4.466802522812985639158255185770 \\
130 & 4.847986562151 & 50 & 10 & 50 & 4.844156480287745811911814959650 \\
150 & 4.977475423203 & 50 & 8  & 50 & 4.961947334363613966866801798777 \\
200 & 5.094346768152 & 50 & 7  & 60 & 5.087393690044834674577815885368 \\
220 & 4.978911765299 & 50 & 8  & 60 & 4.976764489557045481952499319142 \\
220 & 4.978911765299 & 50 & 8  & 60 & 4.976764489557045481952499319142 \\
300 & 4.086884354936 & 50 & 1  & 60 & 3.689963955854296035616007427184 \\
400 & 5.381948554900 & 50 & 13 & 60 & 5.342988749384368640747405260075 \\
500 & 4.512115207075 & 50 & 14 & 60 & 4.192789933536702947199563856796 \\

\end{longtable}
\endgroup

\newpage
\begingroup
\renewcommand{\tablename}{Supplementary Table}

\begin{longtable}{r c l r}
\caption{\textbf{Certified enlarged-support sweep underlying the
\(\varepsilon\)-crossover analysis.}
For each \(k\), the table reports the certified lower bound obtained for the
corresponding enlarged-support parameter \(\varepsilon\), together with its
within-\(k\) ranking among the tested values. Rank \(1\) denotes the strongest
certified value at that \(k\).}
\label{tab:epsilon_sweep}\\

\toprule
\(k\) & \(\varepsilon\) & Certified value & Rank \\
\midrule
\endfirsthead

\multicolumn{4}{c}%
{{\tablename\ \thetable{} -- continued from previous page}}\\
\toprule
\(k\) & \(\varepsilon\) & Certified value & Rank \\
\midrule
\endhead

\midrule
\multicolumn{4}{r}{{Continued on next page}}\\
\endfoot

\bottomrule
\endlastfoot

50  & \(0\)    & 3.942127742490453604663761985558 & 5 \\
50  & \(1/5\)  & 3.712509239545921204317529071368 & 8 \\
50  & \(1/6\)  & 3.764504991324559702999466932710 & 7 \\
50  & \(1/8\)  & 3.874384180153464414710468398524 & 6 \\
50  & \(1/12\) & 3.959088983406359921645433483059 & 4 \\
50  & \(1/16\) & 3.988718438122563563392522885252 & 3 \\
50  & \(1/25\) & 3.997997740337627844075583884562 & 1 \\
50  & \(1/50\) & 3.994094475900581244335373138334 & 2 \\

\addlinespace
60  & \(0\)    & 4.107867690167866055301464199575 & 4 \\
60  & \(1/5\)  & 3.823138358991218145879633189922 & 8 \\
60  & \(1/6\)  & 3.901108973124642823154420824199 & 7 \\
60  & \(1/8\)  & 4.008638429188118037606214685517 & 6 \\
60  & \(1/12\) & 4.078919312220589574136195730201 & 5 \\
60  & \(1/16\) & 4.137429406260009516632280091156 & 3 \\
60  & \(1/25\) & 4.154030210268053992886263760513 & 2 \\
60  & \(1/50\) & 4.159671531351647672765832327114 & 1 \\

\addlinespace
70  & \(0\)    & 4.250768996013457035617670337865 & 4 \\
70  & \(1/5\)  & 3.932874621775866265776104392099 & 8 \\
70  & \(1/6\)  & 3.992467605762845919987563382701 & 7 \\
70  & \(1/8\)  & 4.097181032238870681623243665487 & 6 \\
70  & \(1/12\) & 4.203873349967048026646358848602 & 5 \\
70  & \(1/16\) & 4.264751558230381563655123754834 & 3 \\
70  & \(1/25\) & 4.284181954725759229768882824961 & 2 \\
70  & \(1/50\) & 4.298346167280743718826612134536 & 1 \\

\addlinespace
80  & \(0\)    & 4.365076286689542864035828077954 & 4 \\
80  & \(1/5\)  & 4.010739510791432389610499056194 & 8 \\
80  & \(1/6\)  & 4.046713573456469114235727637437 & 7 \\
80  & \(1/8\)  & 4.198433911029465756611969501053 & 6 \\
80  & \(1/12\) & 4.300715218425578908913133664543 & 5 \\
80  & \(1/16\) & 4.368846840474345630380842204395 & 3 \\
80  & \(1/25\) & 4.397641633229676353759931557760 & 2 \\
80  & \(1/50\) & 4.409932699888494327027154402066 & 1 \\

\addlinespace
90  & \(0\)    & 4.475394178930606300302930028315 & 3 \\
90  & \(1/5\)  & 4.051193459293446704008504024550 & 7 \\
90  & \(1/6\)  & 4.116620710367156118853873413986 & 6 \\
90  & \(1/8\)  & 4.026133861058892236358880667882 & 8 \\
90  & \(1/12\) & 4.277640742111484913450494485256 & 5 \\
90  & \(1/16\) & 4.426300947743744344607385400548 & 4 \\
90  & \(1/25\) & 4.502108322122445717967006768742 & 2 \\
90  & \(1/50\) & 4.514812668499281034377537700573 & 1 \\

\addlinespace
100 & \(0\)    & 4.571865161766055920977957342516 & 2 \\
100 & \(1/5\)  & 4.131174966202319094573252086929 & 8 \\
100 & \(1/6\)  & 4.228661930221300547990282980216 & 7 \\
100 & \(1/8\)  & 4.376164198382763272282279872439 & 6 \\
100 & \(1/12\) & 4.455747109584966143321615617427 & 5 \\
100 & \(1/16\) & 4.483072604316316381523061824105 & 4 \\
100 & \(1/25\) & 4.520592767325824976955743719607 & 3 \\
100 & \(1/50\) & 4.600398252159768512024806128765 & 1 \\

\end{longtable}
\endgroup

%% file: ncs-refs.bib
@article{maynard2015small,
  title={Small gaps between primes},
  author={Maynard, James},
  journal={Annals of mathematics},
  pages={383--413},
  year={2015},
  publisher={JSTOR}
}

@inproceedings{hales2017formal,
  title={A formal proof of the Kepler conjecture},
  author={Hales, Thomas and Adams, Mark and Bauer, Gertrud and Dang, Tat Dat and Harrison, John and Hoang, Le Truong and Kaliszyk, Cezary and Magron, Victor and McLaughlin, Sean and Nguyen, Tat Thang and others},
  booktitle={Forum of mathematics, Pi},
  volume={5},
  pages={e2},
  year={2017},
  organization={Cambridge University Press}
}

@inproceedings{tan2024formally,
  title={Formally certified approximate model counting},
  author={Tan, Yong Kiam and Yang, Jiong and Soos, Mate and Myreen, Magnus O and Meel, Kuldeep S},
  booktitle={International Conference on Computer Aided Verification},
  pages={153--177},
  year={2024},
  organization={Springer}
}

@article{trinh2024solving,
  title={Solving olympiad geometry without human demonstrations},
  author={Trinh, Trieu H and Wu, Yuhuai and Le, Quoc V and He, He and Luong, Thang},
  journal={Nature},
  volume={625},
  number={7995},
  pages={476--482},
  year={2024},
  publisher={Nature Publishing Group UK London}
}

@article{udrescu2020ai,
  title={AI Feynman: A physics-inspired method for symbolic regression},
  author={Udrescu, Silviu-Marian and Tegmark, Max},
  journal={Science advances},
  volume={6},
  number={16},
  pages={eaay2631},
  year={2020},
  publisher={American Association for the Advancement of Science}
}

@misc{Sutherland-database,
  author       = {Engelsma, Thomas J. and Sutherland, Andrew V.},
  title        = {Narrow Admissible Tuples},
  year         = {2013},
  howpublished = {\url{https://math.mit.edu/~primegaps/}},
  note         = {Entry for $k=3655$ submitted by A.~V. Sutherland,
                  27 June 2013.}
}

@misc{stadlmann2026bounded,
  author        = {Stadlmann, Julia},
  title         = {Bounded gaps between primes},
  year          = {2026},
  eprint        = {2608.31126},
  archivePrefix = {arXiv},
  primaryClass  = {math.NT},
  url           = {https://arxiv.org/abs/2608.31126},
  note          = {Submitted 31 August 2026}
}

@article{johansson2017arb,
  title={Arb: efficient arbitrary-precision midpoint-radius interval arithmetic},
  author={Johansson, Fredrik},
  journal={IEEE Transactions on Computers},
  volume={66},
  number={8},
  pages={1281--1292},
  year={2017},
  publisher={IEEE}
}

@misc{charton2026new,
  author       = {Charton, Fran\c{c}ois and Hong, Letong and Lau, Kenny and
                  Ono, Ken and Remy, Guillaume and Siu, Ho Chung and
                  Swaminathan, Ashvin A. and Thorner, Jesse and Xie, Yunzhou},
  title         = {A new bound for small gaps between primes},
  year          = {2026},
  howpublished  = {Preliminary draft, Axiom Math},
  note          = {Dated 3 September 2026. Lean formalization at
                   \url{https://github.com/AxiomMath/PrimeGapsLib}.
                   Accessed 14 September 2026}
}

@misc{openai2026improved,
  author        = {{OpenAI}},
  title         = {Improved short gaps between primes},
  year          = {2026},
  howpublished  = {\url{https://cdn.openai.com/pdf/51126fac-1b68-4128-9666-c908bcc16033/short_gaps.pdf}},
  note          = {Dated 30 August 2026. Proof attributed to GPT-6 Astra;
                   Lean~4 formalization and numerical certificate at
                   \url{https://github.com/openai/PrimeGaps186}.
                   Accessed 14 September 2026}
}

@article{raayoni2021generating,
  title={Generating conjectures on fundamental constants with the Ramanujan Machine},
  author={Raayoni, Gal and Gottlieb, Shahar and Manor, Yahel and Pisha, George and Harris, Yoav and Mendlovic, Uri and Haviv, Doron and Hadad, Yaron and Kaminer, Ido},
  journal={Nature},
  volume={590},
  number={7844},
  pages={67--73},
  year={2021},
  publisher={Nature Publishing Group UK London}
}

@article{cohn2022sign,
  title={Sign uncertainty principles and low-degree polynomials},
  author={Cohn, Henry and Dong, Dingding and Gon{\c{c}}alves, Felipe},
  journal={arXiv preprint arXiv:2210.01684},
  year={2022}
}

@article{coregliano2025higher,
  title={Higher-order delsarte dual lps: Lifting, constructions and completeness},
  author={Coregliano, Leonardo Nagami and Jeronimo, Fernando Granha and Jones, Chris and Linial, Nati and Loyfer, Elyassaf},
  journal={arXiv preprint arXiv:2501.04854},
  year={2025}
}

@article{loyfer2023new,
  title={New LP-based upper bounds in the rate-vs.-distance problem for binary linear codes},
  author={Loyfer, Elyassaf and Linial, Nati},
  journal={IEEE Transactions on Information Theory},
  volume={69},
  number={5},
  pages={2886--2899},
  year={2023},
  publisher={IEEE}
}

@article{Feynman1939,
  author  = {Feynman, Richard P.},
  title   = {Forces in Molecules},
  journal = {Physical Review},
  volume  = {56},
  pages   = {340--343},
  year    = {1939},
  doi     = {10.1103/PhysRev.56.340}
}

@article{LiuNocedal1989,
  author  = {Liu, Dong C. and Nocedal, Jorge},
  title   = {On the Limited Memory BFGS Method for Large Scale Optimization},
  journal = {Mathematical Programming},
  volume  = {45},
  pages   = {503--528},
  year    = {1989},
  doi     = {10.1007/BF01589116}
}

@article{flint,
  author  = {Hart, William and Johansson, Fredrik and Pancratz, Sebastian},
  title   = {FLINT: Fast Library for Number Theory},
  journal = {ACM Communications in Computer Algebra},
  volume  = {46},
  number  = {3/4},
  pages   = {88--93},
  year    = {2012}
}

@misc{Ghadimi2025,
  author        = {Ghadimi, Milad},
  title         = {Heuristic Bounded Prime Gaps via a Chaotic Multidimensional Sieve and Random Matrix Theory},
  year          = {2025},
  eprint        = {2507.17986},
  archivePrefix = {arXiv},
  primaryClass  = {math.NT},
  note          = {arXiv:2507.17986v1}
}

@book{GolubVanLoan2013,
  author    = {Golub, Gene H. and Van Loan, Charles F.},
  title     = {Matrix Computations},
  edition   = {4},
  publisher = {Johns Hopkins University Press},
  year      = {2013}
}

@article{ProbstAlagar1979,
  author  = {Probst, David K. and Alagar, V. S.},
  title   = {A Family of Algorithms for Powering Sparse Polynomials},
  journal = {SIAM Journal on Computing},
  volume  = {8},
  number  = {4},
  pages   = {626--644},
  year    = {1979},
  doi     = {10.1137/0208050}
}

@book{Parlett1998,
  author    = {Parlett, Beresford N.},
  title     = {The Symmetric Eigenvalue Problem},
  publisher = {SIAM},
  year      = {1998}
}

@article{GolubWelsch1969,
  author  = {Golub, Gene H. and Welsch, John H.},
  title   = {Calculation of Gauss Quadrature Rules},
  journal = {Mathematics of Computation},
  volume  = {23},
  number  = {106},
  pages   = {221--230},
  year    = {1969},
  doi     = {10.1090/S0025-5718-69-99647-1}
}

@inproceedings{KingmaBa2015,
  author    = {Kingma, Diederik P. and Ba, Jimmy},
  title     = {Adam: A Method for Stochastic Optimization},
  booktitle = {International Conference on Learning Representations},
  year      = {2015}
}

@article{semay2015hellmann,
  title={The Hellmann--Feynman theorem, the comparison theorem, and the envelope theory},
  author={Semay, Claude},
  journal={Results in Physics},
  volume={5},
  pages={322--323},
  year={2015},
  publisher={Elsevier}
}

@article{cohn2019optimal,
  title={An optimal uncertainty principle in twelve dimensions via modular forms: H. Cohn, F. Gon{\c{c}}alves},
  author={Cohn, Henry and Gon{\c{c}}alves, Felipe},
  journal={Inventiones mathematicae},
  volume={217},
  number={3},
  pages={799--831},
  year={2019},
  publisher={Springer}
}

@inproceedings{bourgain2010principe,
  title={Principe d’Heisenberg et fonctions positives},
  author={Bourgain, Jean and Clozel, Laurent and Kahane, Jean-Pierre},
  booktitle={Annales de l'institut Fourier},
  volume={60},
  number={4},
  pages={1215--1232},
  year={2010}
}

@misc{polymath2014variantsselbergsievebounded,
      title={Variants of the Selberg sieve, and bounded intervals containing many primes}, 
      author={D. H. J. Polymath},
      year={2014},
      eprint={1407.4897},
      archivePrefix={arXiv},
      primaryClass={math.NT},
      url={https://arxiv.org/abs/1407.4897}, 
}

@article{stadlmann2025primes,
  title={On primes in arithmetic progressions and bounded gaps between many primes},
  author={Stadlmann, Julia},
  journal={Advances in Mathematics},
  volume={468},
  pages={110190},
  year={2025},
  publisher={Elsevier}
}

@article{davies2021advancing,
  author  = {Davies, Alex and Veli{\v{c}}kovi{\'c}, Petar and Buesing, Lars
             and Blackwell, Sam and Zheng, Daniel and Toma{\v{s}}ev, Nenad
             and Tanburn, Richard and Battaglia, Peter and Blundell, Charles
             and Juh{\'a}sz, Andr{\'a}s and Lackenby, Marc and Williamson, Geordie},
  title   = {Advancing mathematics by guiding human intuition with {AI}},
  journal = {Nature},
  volume  = {600},
  pages   = {70--74},
  year    = {2021},
  doi     = {10.1038/s41586-021-04086-x}
}

@article{fawzi2022alphatensor,
  author  = {Fawzi, Alhussein and Balog, Matej and Huang, Aja
             and Hubert, Thomas and Romera-Paredes, Bernardino
             and Barekatain, Mohammadamin and Novikov, Alexander
             and Ruiz, Francisco J. R. and Schrittwieser, Julian
             and Swirszcz, Grzegorz and Silver, David
             and Hassabis, Demis and Kohli, Pushmeet},
  title   = {Discovering faster matrix multiplication algorithms
             with reinforcement learning},
  journal = {Nature},
  volume  = {610},
  pages   = {47--53},
  year    = {2022},
  doi     = {10.1038/s41586-022-05172-4}
}

@article{romeraparedes2024funsearch,
  author  = {Romera-Paredes, Bernardino and Barekatain, Mohammadamin
             and Novikov, Alexander and Balog, Matej
             and Kumar, M. Pawan and Dupont, Emilien
             and Ruiz, Francisco J. R. and Ellenberg, Jordan S.
             and Wang, Pengming and Fawzi, Omar
             and Kohli, Pushmeet and Fawzi, Alhussein},
  title   = {Mathematical discoveries from program search
             with large language models},
  journal = {Nature},
  volume  = {625},
  pages   = {468--475},
  year    = {2024},
  doi     = {10.1038/s41586-023-06924-6}
}

@inproceedings{mundinger2025neural,
  author    = {Mundinger, Konrad and Zimmer, Max and Kiem, Aldo
               and Spiegel, Christoph and Pokutta, Sebastian},
  title     = {Neural Discovery in Mathematics:
               Do Machines Dream of Colored Planes?},
  booktitle = {Proceedings of the 42nd International Conference
               on Machine Learning},
  series    = {Proceedings of Machine Learning Research},
  volume    = {267},
  year      = {2025},
  publisher = {PMLR}
}
